\newtheorem{theorem}{Theorem}
\newtheorem{lemma}[theorem]{Lemma}
\newtheorem{definition}[theorem]{Definition}
\newtheorem{repeatedtheorem}{Theorem}
\theoremstyle{definition}
\newtheorem{rmk}{Remark}
\newtheorem{example}{Example} 
\newtheorem{repeatedexample}{Example}
\setlist{
  topsep=3pt,
  itemsep=1pt,
  parsep=0pt,
  partopsep=0pt
}
\title{\vspace{-1.5em}%
  Training Deep Morphological Neural Networks as Universal Approximators
  \vspace{-0.3em}
}
\author[1,4,*]{Konstantinos Fotopoulos}
\author[1,2,3]{Petros Maragos}
\affil[1]{Institute of Robotics, Athena Research Center, Maroussi, Greece}
\affil[2]{School of ECE, National Technical University of Athens, Athens, Greece}
\affil[3]{HERON - Hellenic Robotics Center of Excellence, Athens, Greece}
\affil[4]{Department of CS, ETH Z\"urich, Z\"urich, Switzerland}
\affil[ ]{\texttt{kostfoto2001@gmail.com}, \quad \texttt{petros.maragos@athenarc.gr}}
\date{}
\begin{document}

\maketitle

{
\renewcommand{\thefootnote}{\fnsymbol{footnote}}
\footnotetext[1]{Corresponding author.}
}

\begin{abstract}
We investigate deep morphological neural networks (DMNNs), studying how changes in algebraic structure affect the expressivity and trainability of deep architectures. We show that despite the inherent non-linearity of morphological operations, existing deep morphological architectures fail to be universal approximators and exhibit optimization limitations related to sparse and uninformative gradients. To address these issues, we introduce architectures incorporating constrained "linear" activations between morphological layers and averaging max-plus and min-plus neurons. Only \(O(N)\) parameters (or learnable parameters) per layer of size \(N\) belong to the activations, with the remaining parameters constrained to morphological operations. We prove universal approximation results for the proposed architectures without requiring substantially larger parameter counts than comparable linear networks. Residual connections and weight dropout further improve generalization. Our experiments show that our networks are trainable and compact, despite the imposed architectural restrictions. \end{abstract}

\noindent
\textbf{Keywords:}
Morphological networks, Universal approximation, Gradient approximation, Exact PWL representation

\section{Introduction}

Modern deep learning architectures are predominantly built on linear operations, such as matrix multiplications and convolutions, combined with nonlinear activations \citep{goodfellow2016deep}. This raises the following broader question: how do the expressiveness, optimization, and representation properties of deep networks change when the underlying algebraic structure is modified? Recent interest in binary \citep{yuan2023comprehensive} and other non-standard neural architectures reflects growing interest in understanding neural computation beyond the standard linear setting. 

Before deep learning, mathematical morphology \citep{haralick1987image, heihmans1994morphological, maragos1987morphological, maragos1999morphological, maragos2005morphological, najman2010mathematical, serra1982image, serra1988image, soille2004morphological}
played a central role in image and signal processing. Its operations -- dilations and erosions -- enabled effective, task-specific feature extraction using max-plus and min-plus algebra. Its historical success and theoretical foundations make mathematical morphology a natural framework for studying these questions. Motivated by this, researchers developed models based on morphological operations. One of the earliest was the \textit{Morphological Perceptron (MP)}, which replaces addition and multiplication with max and plus, enabling nonlinear decision boundaries and forming the basis of morphological neural networks 
\citep{davidson1993morphology, pessoa2000neural, ritter1996introduction, ritter2003lattice, sussner2011morphological, yang1995minmax}. 
These models offer appealing properties, including compressibility 
\citep{dimitriadis2021advances, groenendijk2023geometric, zhang2019maxplus}
and fast training. The Dilation-Erosion Perceptron (DEP) 
\citep{araújo2011class}, trained using CCP 
\citep{charisopoulos2017morphological}, blends dilation and erosion via a convex combination.

Despite their advantages, both the MP and DEP have a limitation, which restricts their applicability: Their decision boundaries are axis-aligned 
\citep{ritter1996introduction, yang1995minmax}. To address this, researchers have proposed approaches to transform the input space into one that is MP-separable or DEP-separable using kernel transformations. For example, \citet{valle2020reduced} proposed a simple linear transformation as a kernel to map features into a more favorable space.

This idea of mapping inputs to a separable space is not new -- it is the foundation of deep learning, motivating the study of whether deep morphological networks (DMNNs), built on the MP and its variants, can similarly learn effective representations using morphological operations. More motivating factors include the fact that morphological networks are i) multiplication-free, which is the most expensive floating point operations in typical linear networks, ii) a natural continuous extension of boolean networks, and iii) a constrained category of maxout networks \citep{goodfellow2013maxout}. 

Recent works have explored this question with varying levels of success 
\citep{hu2022learning, groenendijk2023geometric, shen2022deep}. Notable recent efforts, related to this paper, include the works of 
\citet{franchi2020deep}, who demonstrated the potential of deep hybrid morphological-linear networks, 
\citet{dimitriadis2021advances}, who demonstrated the amenability to pruning of morphological networks, and 
\citet{angulo2024nonlinear, velasco-forero2022discrete}, who proposed representing the combined convolutions, nonlinear activation and max-pooling in ConvNets  as a max of erosions or min of dilations by leveraging the morphological representation theory 
\citep{maragos1989representation}. 

However, despite these advancements, most successful approaches rely heavily on standard linear components, making it difficult to isolate the role of the underlying algebraic structure. In addition, as noticed by \citet{dimitriadis2021advances} and \citet{dimitrova2026learning}, the sparsity of the gradients of the network make training difficult. In this paper, we study, both theoretically and empirically, what minimal algebraic "augmentations" are required for deep morphological networks to achieve powerful expressivity and trainable representations. Our contributions can be summarized as follows:
\paragraph{Contributions.} In this paper, we address the challenges of training deep morphological networks.

\begin{itemize}[left=0pt, nosep]
\item We show that several existing deep morphological architectures cannot be universal approximators (i.e., cannot approximate continuous functions arbitrarily well), despite their inherent non-linearity, and identify optimization limitations arising from their operations. Notably, the difficulty in training these networks is not isolated just to the sparsity of their gradients, but also to the fact that they are not 1\textsuperscript{st}-order universal approximators.
\item Motivated by these obstructions, we introduce deep morphological architectures incorporating constrained linear activations between morphological layers and averaging max-plus and min-plus neurons. Only \(O(N)\) parameters (or learnable parameters) per layer of size \(N\) belong to the activations, with the remaining parameters constrained to morphological operations.
\item We prove universal approximation/representation results for the deterministic architectures and show empirically that they can be successfully trained on standard image classification tasks despite the imposed architectural constraints. To the best of our knowledge, we are the first to propose morphological networks that are \(1\)-st order universal approximators. 
\item We improve the generalization ability of the proposed networks using residual connections and weight dropout.
\item We empirically show that the proposed architectures do not require substantially larger parameter counts than comparable linear networks to achieve expressivity.
\item Finally, we study a hybrid linear/morphological architecture and observe that it benefits from large batches with respect to its training convergence under gradient descent. 
\end{itemize}

\section{Prerequisites}
\label{section:2}

\paragraph{Tropical Algebra.}  
Tropical algebra is a branch of mathematics focused on the study of the tropical semiring, which encompasses both the \textit{min-plus semiring} and the \textit{max-plus semiring} 
\citep{butkovivc2010max, cuninghame1979minimax, maclagan2021introduction, maragos2021tropical}. The \textit{max-plus semiring}, denoted as \((\mathbb{R}_{\mathrm{max}}, \vee, +)\), consists of the set \(\mathbb{R}_{\mathrm{max}} = \mathbb{R} \cup \{-\infty\}\), equipped with two binary operations: \(\vee\) (the maximum operator) and \(+\) (ordinary addition). Similarly, the \textit{min-plus semiring}, denoted as \((\mathbb{R}_{\mathrm{min}}, \wedge, +)\), is defined over the set \(\mathbb{R}_{\mathrm{min}} = \mathbb{R} \cup \{+\infty\}\), with \(\wedge\) (the minimum operator) and \(+\) as its binary operations. 
These semirings naturally extend to operations on vectors and matrices. For instance, the \textit{max-plus matrix multiplication} \(\boxplus\) and the \textit{min-plus matrix multiplication} \(\boxplus'\) for matrices \(\mathbf{A} = [a_{ij}] \) and \(\mathbf{B} = [b_{ij}]\) is defined as:  
\[
(\mathbf{A} \boxplus \mathbf{B})_{ij} = \bigvee_k (a_{ik} + b_{kj}), \quad 
(\mathbf{A} \boxplus' \mathbf{B})_{ij} = \bigwedge_k (a_{ik} + b_{kj}).
\]    

\paragraph{Mathematical morphology.} 
Modern mathematical morphology 
\citep{heihmans1994morphological, serra1988image} is  defined on complete lattices. A partially ordered set \((\mathbb{L}, \preceq)\) is a complete lattice  \citep{birkhoff1967lattice} if and only if (iff) every subset  \(X\subseteq \mathbb{L}\) has a supremum and an infimum, denoted by \(\bigvee X\) and \(\bigwedge X\) respectively. Consider two complete lattices \(\mathbb{L}\) and \(\mathbb{M}\). A lattice operator \( \delta:\mathbb{L}\rightarrow \mathbb{M} \) is called a \textit{dilation} iff it distributes over the supremum of any input collection (possibly infinite). Dually, an \textit{erosion} is defined as any lattice operator \(\varepsilon:\mathbb{L}\rightarrow \mathbb{M}\) that distributes over the infimum. Namely, the following properties hold:
\[
\delta \left(\bigvee X\right) = \bigvee \delta (X), 
\quad 
\varepsilon \left(\bigwedge X\right) = \bigwedge \varepsilon (X). 
\; \; \forall X\subseteq \mathbb{L}
\]
Let \(\overline{\mathbb{R}}=\mathbb{R}\cup \{+\infty, -\infty\}\) denote the extended set of real numbers, which is a complete lattice when equipped with its ordinary order. The set \(\overline{\mathbb{R}}^n\) of vectors equipped with the partial order \(\mathbf{x}\preceq \mathbf{y} \Leftrightarrow x_i \leq y_i, \forall i\in [n]\) forms a complete lattice. There exists a natural way to define a dilation and an erosion from \(\overline{\mathbb{R}}^n\) to \(\overline{\mathbb{R}}\): Given an input vector \(\mathbf{x}\in \overline{\mathbb{R}}^n\) and weights \(\mathbf{w}, \mathbf{m} \in \mathbb{R}^n\), a vector dilation \(\delta_{\mathbf{w}}\) and erosion \(\varepsilon_{\mathbf{m}}\) are defined as follows: 
\begin{equation*}
\delta_{\mathbf{w}}(\mathbf{x})=\bigvee_{i\in [n]}(x_i + w_i) = \mathbf{w}^\top \boxplus \mathbf{x}, \quad 
\varepsilon_{\mathbf{m}}(\mathbf{x})=\bigwedge_{i\in [n]}(x_i + m_i) = \mathbf{m}^\top \boxplus' \mathbf{x}.
\end{equation*}
The Morphological Perceptron (MP) is simply a dilation (or erosion) as defined above, optionally biased with an additional bias term (i.e. \(\mathrm{MP}(\mathbf{x})=w_0 \vee \delta_{\mathbf{w}}(\mathbf{x})\) for a dilation-based MP). Note that if \(\mathbf{x} < +\infty\), then \(\mathbf{w}\) is allowed to take the value \(-\infty\), and dually if \(\mathbf{x} > -\infty\), then \(\mathbf{m}\) can take the value \(+\infty\). The DEP is a convex combination of a dilation and an erosion (i.e. \(\mathrm{DEP}_{\mathbf{w}, \mathbf{m}}(\mathbf{x}) = \lambda \delta_{\mathbf{w}}(\mathbf{x})+(1-\lambda) \varepsilon_{\mathbf{m}}(\mathbf{x})\), for \(\lambda\in [0,1]\)).

Mathematical morphology also employs dilations and erosions defined on sets of functions, which naturally model images. The set \(\mathbb{F}=\{f:\mathbb{Z}^n \to \overline{\mathbb{R}}\}\) of extended real-valued functions over the \(n\)-dimensional grid of integers, equipped with the partial ordering \(f\preceq g \Leftrightarrow f(\mathbf{x}) \leq g(\mathbf{x}) \;  \forall \mathbf{x}\),  becomes a complete lattice. On this set, we can define the \textit{max-plus convolution} \(\oplus\) and \textit{min-plus convolution} \(\oplus'\) of a function $f\in \mathbb{F}$ with a structuring element (function)  \( g: \mathbb{Z}^n \to \mathbb{R} \cup \{-\infty\} \), whose domain is \( \mathrm{\mathbf{dom}}(g)=\{\mathbf{x} : g(\mathbf{x}) > -\infty\} \), as follows:  
\begin{equation*}
(f \oplus g)(\mathbf{x}) = \bigvee_{\mathbf{y} \in \mathrm{\mathbf{dom}}(g)} f(\mathbf{x} - \mathbf{y}) + g(\mathbf{y}), \quad
(f \oplus' g)(\mathbf{x}) = \bigwedge_{\mathbf{y} \in \mathrm{\mathbf{dom}}(g)} f(\mathbf{x} - \mathbf{y}) - g(\mathbf{y}).
\end{equation*}
The max-plus convolution and \(\delta_{g}(f)=f\oplus g\) and min-plus correlation \( \varepsilon_{g}(f)(\mathbf{x})=f(\mathbf{x})\oplus' g(-\mathbf{x}) \) are a dilation and erosion on \(\mathbb{F}\). 
By combining these dual morphological operators, we can construct more complex operators such as \textit{openings} $\alpha_g(f) = \delta_{g}(\varepsilon_{g}(f))$ and \textit{closings} $\beta_g(f) = \varepsilon_{g}(\delta_{g}(f))$. The opening can smooth the input by removing small bright structures, while the closing can fill gaps and remove small dark structures 
\citep{serra1982image, maragos2005morphological}. 

The \textit{Representation Theorem} of 
\citet{maragos1987morphological} takes this one step further: it proves that we can represent any increasing, translation-invariant linear shift-invariant filter as a supremum of weighted erosions. A byproduct of this is that we can write any linear perceptron with positive weights that sum up to \(1\) as a supremum of weighted erosions: If \( \alpha_i > 0 \) for all \( i \) and \( \sum_i \alpha_i = 1 \), then the following identity holds:
\begin{equation*}
    \sum_{i = 0}^n \alpha_i x_i = \sup_{r_0, \ldots, r_{n-1} \in \mathbb{R}} \Bigg[ \min \Bigg\{ x_0 - r_0, \ldots,
    x_{n-1} - r_{n-1},\ x_n + \frac{\sum_{i = 0}^{n-1} \alpha_i r_i}{\alpha_n} \Bigg\} \Bigg].
\end{equation*}
If we allow biased erosions we can relax the condition that the weights must sum up to 1, as long as they sum up to less than 1. For example, we have \(\frac{x}{2}=\sup_{r\in \mathbb{R}}\min(x-r,r)\). Finally, if one additionally allows \(\min\)-terms of the form \(-x_i+r_{n+i}\) (that is, taking an erosion and an anti-dilation), then every \(1\)-Lipschitz function with respect to the \(\|\cdot\|_{\infty}\)-norm can be represented \citep{banon1993decomposition, luo2021min}. For details regarding the Representation Theorem, see Appendix \ref{appendix:f}. 

A central theme of this work will be universal approximation; the property of a function class to be able to approximate any continuous (or some other type of) function. Formally, we have the following definition:
\begin{definition}[Universal approximator]
\label{definition:b0}
We say that a class of functions \(\mathcal{F}\) is a universal approximator on a set \(\mathcal{D} \subseteq \mathbb{R}^n\) if and only if it is a dense subclass of the class of continuous functions on \(\mathcal{D}\) under the \(\|\cdot\|_{\infty}\) function norm. Equivalently, for any continuous \(g\) defined on \(\mathcal{D}\), there exists a sequence \(\{f_i\}_{i\in \mathbb{N}} \subseteq \mathcal{F}\) such that \(f_i \to g\) uniformly. 
\end{definition}

We say that a network architecture is a universal approximator on \(\mathcal{D}\) if the class of functions it defines is a universal approximator on \(\mathcal{D}\). Most of our claims hold for compact  \(\mathcal{D}\), which for \(\mathbb{R}^n\) correspond to bounded and closed domains by the Heine–Borel theorem. With a slight abuse of terminology, whenever we say that a network is (or is not) a universal approximator, we mean that the defined architecture is (or is not) a universal approximator. 

We will also concern ourselves with higher order approximability, captured by the following definition:
\begin{definition}[\(k\)-th order Universal approximator]
\label{definition:higher}
We say that a class of functions \(\mathcal{F}\) is a \(k\)-th order universal approximator on a set \(\mathcal{D} \subseteq \mathbb{R}^n\) if and only if for any \(k\times\) continuously differentiable \(g\in C^k(\mathcal{D})\) defined on \(\mathcal{D}\), there exists a sequence \(\{f_i\}_{i\in \mathbb{N}} \subseteq \mathcal{F}\) such that \(D^jf_i \to D^jg\) uniformly for every \(j \in \{0,\ldots, k\}\). 
\end{definition}
Here, \(D^j\) is the derivative operator. For example, \(D^0=\operatorname{id}, D^1=(\nabla \cdot)^\top\) and \(D^2=\nabla^2\) the Hessian. Note that a stronger definition may also ask that \emph{every} uniformly convergent sequence also converges in derivatives up to \(k\)-th order.


\section{Deep morphological neural networks}

Defining deep morphological neural networks (DMNNs) involves combining the fundamental operations of mathematical morphology. In this section, we present the most common DMNNs, and showcase their fundamental problems. Then, we present our proposed architectures under differing set of constraints. Throughout, a superscript \(({\cdot})^{(n)}\) is used to indicate when something corresponds to the \(n\)-th layer, and \(N^{(n)}\) refers to the number of units in the \(n\)-th layer. We use \([k]:=\{1,\ldots,k\}\). Proofs of our theorems can be found in Appendices \ref{appendix:b} and \ref{appendix:c}.

Existing morphological architectures often compose max-plus or related morphological operators either directly or with standard nonlinear activations 
\citep{dimitriadis2021advances, araujo2017morphological, franchi2020deep}.
However, as we show below, such constructions suffer from fundamental expressivity limitations that can already be understood from basic principles of tropical algebra.

\paragraph{Max-plus MP-based DMNNs.}
The most obvious way to building a DMNN is by appending morphological layers consisting of several MPs. Suppose we have \(L\) morphological layers, with the \(n\)-th layer having \(N^{(n)}\) biased MPs. We can then write the network in recursive form:
\[
x^{(n)}_i=w^{(n)}_{i0} \vee \max_{j \in [N^{(n-1)}]}(x^{(n-1)}_j + w^{(n)}_{ij}),
\]
with \(\mathbf{x}^{(0)}\) being the input and \(\mathbf{x}^{(L)}\) the output. 

As mentioned, \( (\mathbb{R}_{\mathrm{max}}, \vee, +) \) is a semiring. This means that we have distributivity of \(+\) over \(\vee\), and associativity of \(+\). These properties are enough to prove the associativity of matrix multiplication, i.e. for every matrices \(\mathbf{A}, \mathbf{B}, \mathbf{C} \) over \(\mathbb{R}_{\mathrm{max}}\), we have \((\mathbf{A}\boxplus \mathbf{B})\boxplus \mathbf{C} = \mathbf{A} \boxplus (\mathbf{B} \boxplus \mathbf{C}) = \mathbf{A}\boxplus \mathbf{B} \boxplus \mathbf{C}\). 

Let \( L \geq 2 \). The recursive form of the network can be expressed as:  
\[
\mathbf{x}^{(n)} = \mathbf{w}^{(n)}_0 \vee \mathbf{W}^{(n)} \boxplus \mathbf{x}^{(n-1)}, \quad n \in [L],
\]
where \( \mathbf{w}^{(n)}_0 = [w^{(n)}_{10}, \ldots, w^{(n)}_{N^{(n)}0}]^\top \in \mathbb{R}^{N^{(n)}}\) is the bias vector, and \( \mathbf{W}^{(n)} = (w^{(n)}_{ij})_{ij} \in \mathbb{R}^{N^{(n)} \times N^{(n-1)}}\) is the weight matrix.  

We can solve the recursive form for \( \mathbf{x}^{(L)} \) by unfolding the recursion, obtaining:
\[
\mathbf{x}^{(L)} = \mathbf{w}_{eq, 0} \vee \mathbf{W}_{eq} \boxplus \mathbf{x}^{(0)},
\]
where 
\[
\mathbf{w}_{eq, 0} = \mathbf{w}^{(L)}_0 \vee \bigvee_{k=1}^{L-1} \Big( \mathbf{W}^{(L)} \boxplus \mathbf{W}^{(L-1)} \boxplus \cdots \boxplus \mathbf{W}^{(L-(k-1))} \boxplus \mathbf{w}^{(L-k)}_0 \Big)
\]
\[
\mathbf{W}_{eq} = \mathbf{W}^{(L)} \boxplus \mathbf{W}^{(L-1)} \boxplus \cdots \boxplus \mathbf{W}^{(1)}.
\]

This expanded form shows that the network with \(L \geq 2\) is equivalent to a network with a single morphological layer, which is not a universal approximator. Hence, these types of networks require some sort of activation. For example in \citep{dimitriadis2021advances}, their "\(\delta\) network" has two max-plus MP layers, which effectively reduce to one. 

\paragraph{Min-max-plus networks and layers combining max-plus and min-plus MPs.}
One way to add complexity to the network is by including min-plus MP units. This can be done in two main ways. The principled way of doing it is by a discretization of the representation theorem, yielding alternating min-plus and max-plus layers. resulting in what is sometimes called an Alternating Sequential Filter (ASF) 
\citep{serra1982image, serra1988image}. A second method is by incorporating both max-plus and min-plus MPs within the same layer, as proposed by 
\citet{mondal2019dense, dimitriadis2021advances}. While these architectures are more complex, and in the case of the min-max-plus network also come with theoretical guarantees, they both face fundamental limitations limiting their expressive power and trainability. We demonstrate that even with the inclusion of min-plus MPs, these networks remain limited in their representational power without the use of activations, and face severe optimization limitations related both to sparse, but also to uninformative gradients. 

By tracing the results of the max and min operations backwards, we can find a dependence of the output on the input and the parameters of the network. The situation is illustrated in Figure~\ref{fig:network_diff_nondiff} of Appendix~\ref{appendix:b}. We prove Theorems~\ref{theorem:1} and \ref{theorem:2}. 

\begin{figure*}
    \centering
    \begin{subfigure}{0.4\textwidth}
        \centering
        \includegraphics[width=\linewidth]{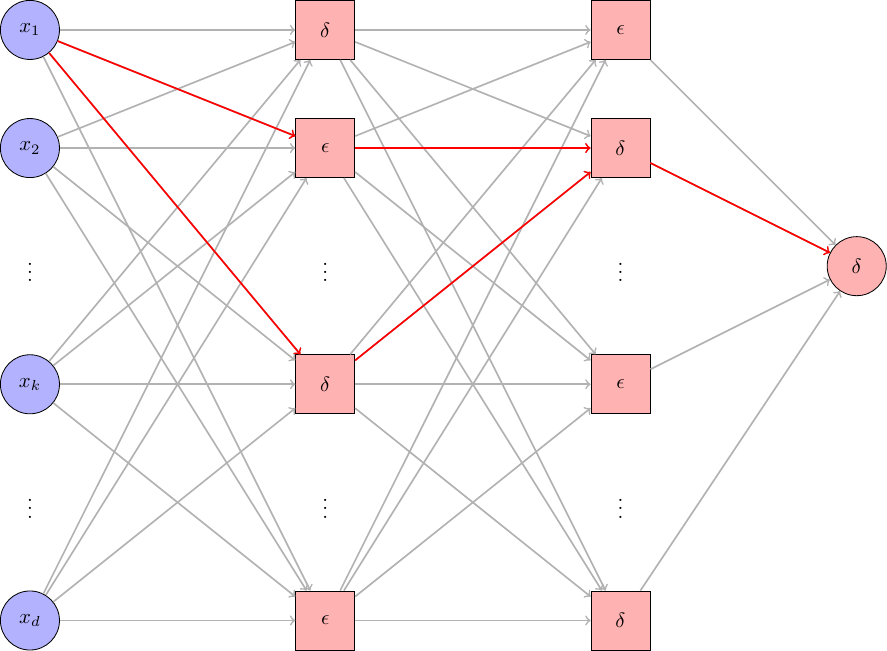}
        \caption{Differentiable with respect to input.}
        \label{fig:network_input_diff}
    \end{subfigure}
    \hspace{5pt}
    \begin{subfigure}{0.4\textwidth}
        \centering
        \includegraphics[width=\linewidth]{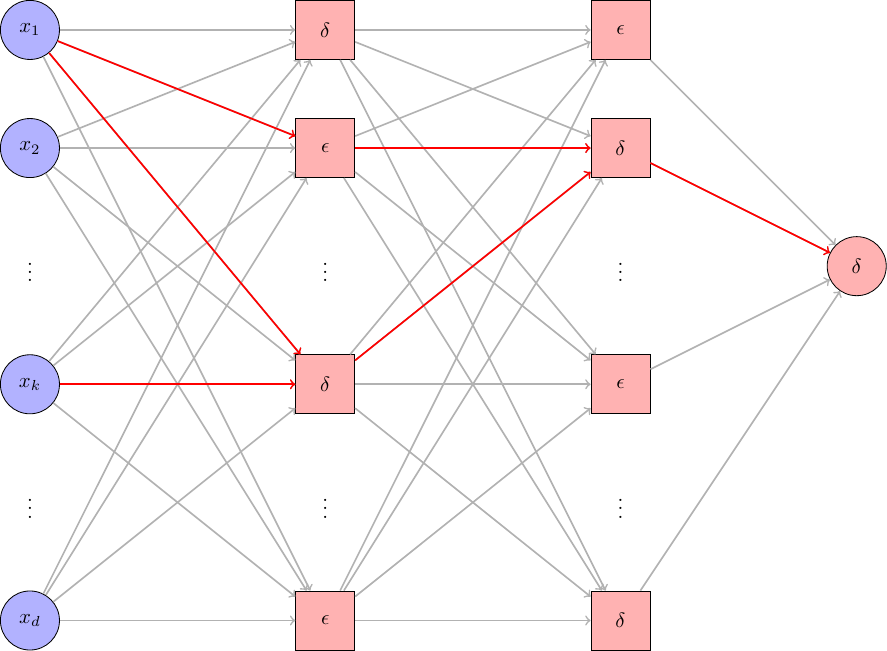}
        \caption{Non-differentiable with respect to input.}
        \label{fig:network_input_nondiff}
    \end{subfigure}

    \begin{subfigure}{0.4\textwidth}
        \centering
        \includegraphics[width=\linewidth]{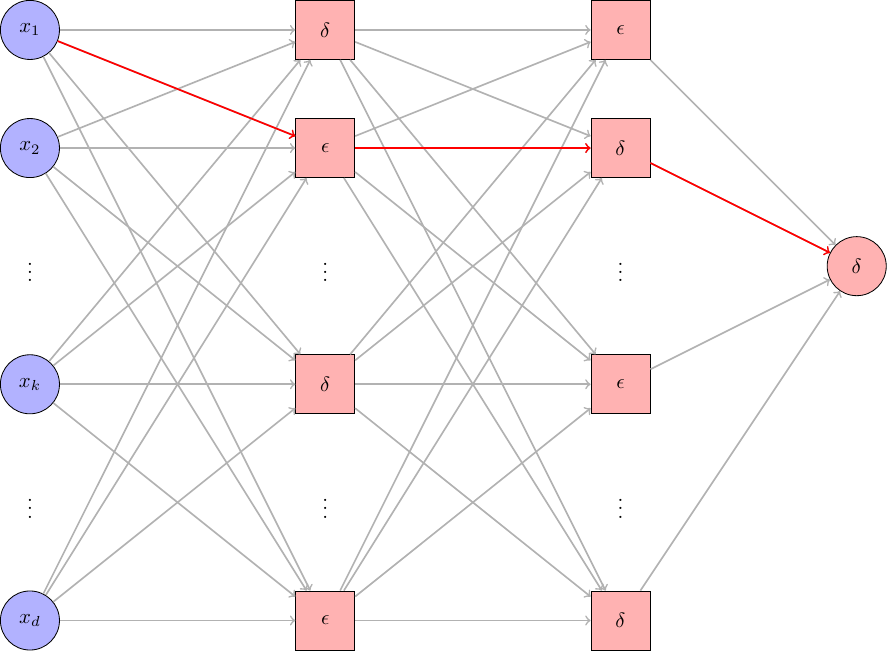}
        \caption{Differentiable with respect to weights.}
        \label{fig:network_weights_diff}
    \end{subfigure}
    \hspace{5pt}
    \begin{subfigure}{0.4\textwidth}
        \centering
        \includegraphics[width=\linewidth]{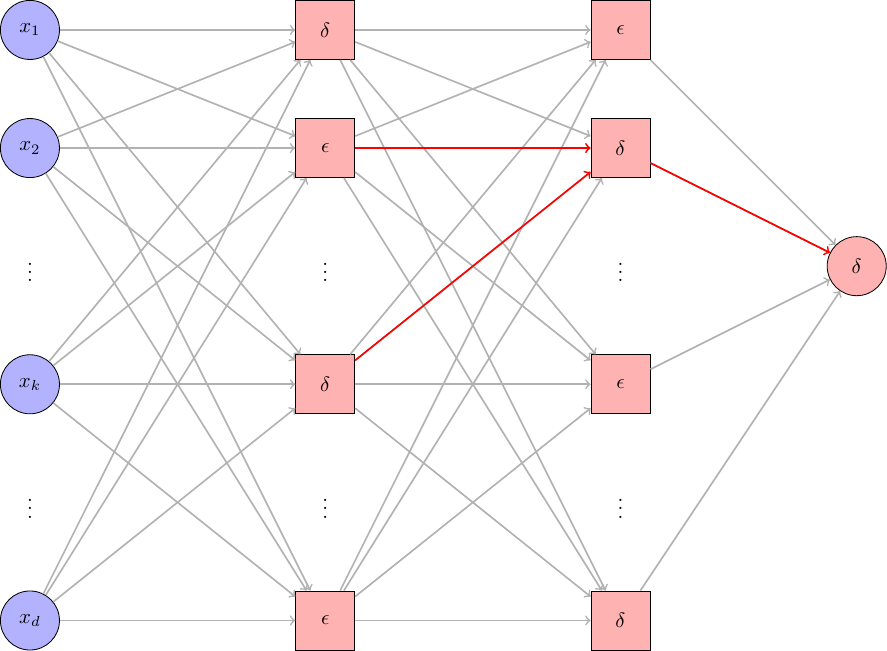}
        \caption{Non-differentiable with respect to weights.}
        \label{fig:network_weights_nondiff}
    \end{subfigure}
    \caption{Cases of differentiable and non-differentiable networks with respect to input and weights.}
    \label{fig:network_diff_nondiff}
\end{figure*}

\begin{restatable}{theorem}{first}
\label{theorem:1}
    For any network that \textbf{only} uses max-plus and min-plus MPs with input \(\mathbf{x}\in \mathbb{R}^d\) and a single output \(y(\mathbf{x})\), we have that \(y(\mathbf{x})\) is Lipschitz continuous on \(\mathbb{R}^d\) and a.e. it holds that either \(\nabla y(\mathbf{x})=0\) or \( \nabla y(\mathbf{x}) = \mathbf{e}_i = [0,\ldots,1,\ldots, 0]^\top \) for some \(i=i(\mathbf{x})\). The \(\mathbf{0}\) case exists due to biases. If one additionally allows negations for anti-dilation and anti-erosion terms, then we can have the additional possibility of \( \nabla y(\mathbf{x}) = -\mathbf{e}_i = [0,\ldots,-1,\ldots, 0]^\top\).
\end{restatable}

This result implies that such networks cannot be universal approximators. Consider, for instance, the function \(f(\mathbf{x})=2x_1\), whose gradient is \(\nabla f=2\mathbf{e}_1= [2,0,\ldots,0]^{\top}\). Since the networks described above can only produce gradients of the form \(\mathbf{e}_i\), they are incapable of representing even this simple linear function. For details, refer to Theorem~\ref{theorem:b5}.

It is worth investigating how Theorem~\ref{theorem:1} relates to the representation theorem. For this, let us see how the discretized representation theorem of min-max-plus networks achieves its approximations. To approximate a function \(f:\mathbb{R}^d \to \mathbb{R}\), the discrete representation theorem samples finite points \(S \subseteq \mathbb{R}^d\), and for each \(\mathbf{r} \in S\) it defines a "pyramid" \(f_i(\mathbf{x})=(\min_i(x_i-r_i)\wedge \min_i(-x_i+r_i'))+f(\mathbf{r})\). Then, it builds its final approximation as
\begin{equation*}
y(\mathbf{x}) = \max_if_i(\mathbf{x}) = \max_i(\min(x_i-r_i) \wedge \min(-x_i+r_i)) + f(\mathbf{r}).
\end{equation*}
If \(f\) does not change too quickly --- that is, if it is \(1\)-Lipschitz with respect to \(\|\cdot\|_{\infty}\) --- then these pyramids do not interfere on their tips, and taking their maximum yields an interpolation of the points \(\{(\mathbf{r}, f(\mathbf{r})) : \mathbf{r} \in S\} \subseteq \operatorname{graph}(f)\). The construction is given in Figure~\ref{fig:min-max-plus} for the function \(f(x)=x/2\).

If the function \(f\) is not \(1\)-Lipschitz, then the pyramids interfere with each other and the approximation breaks. This is covered by the more general Theorem~\ref{theorem:b5}. 
In Appendix~\ref{appendix:f}, we extend this result to the continuous case, and show that one of the byproducts of Theorem \ref{theorem:1} is that we can represent a linear shift-invariant filter as a supremum of weighted erosions exactly when the filter is increasing and translation-invariant. 

Interference of the pyramids was observed by \citet[Theorem 4.2]{luo2021min}, who added multipliers \\
\((1/L_1, \ldots, 1/L_d)\) on the inputs of the network to control the slopes of the pyramids and cover \((L_1, \ldots, L_d)\)-coordinate-wise-Lipschitz functions. This makes the multiplier augmented min-max-plus network a universal approximator of Lipschitz functions. Unfortunately, it is a very "weak" universal approximator: if a function's local Lipschitz continuity showcases a lot of variance, then the number of required pyramids explodes. This can already be seen from the function \(f^{(\epsilon)}(x)=\max(-|x|, -|x|/\epsilon+1)\) on \([-1,1]\) as \(\epsilon\to 0^+\): a PWL function with 4 pieces that requires an unbounded number of pyramids to cover the flat regions; we cover this later in Example~\ref{example:1}. 

\begin{figure}[t]
    \centering
    \begin{subfigure}{0.35\linewidth}
        \includegraphics[width=\linewidth]{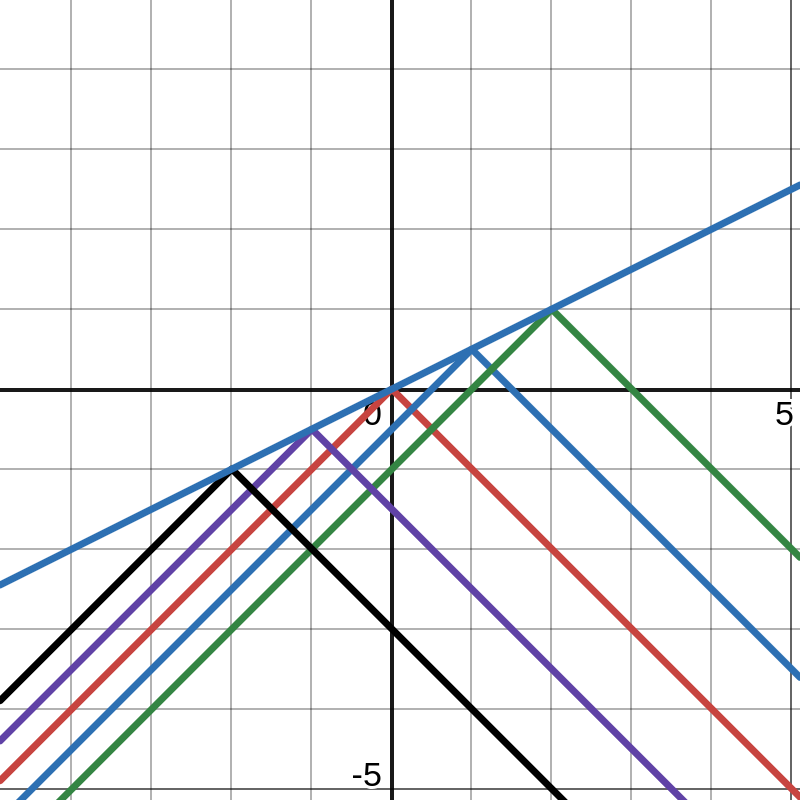}
        \caption{Pyramids for approximating \(x/2\). \(f_i(x) = \min(-x,x)+i/2\) for \(i \in \{-2,-1,0,1,2\}\).}
        \label{fig:pyramids}
    \end{subfigure}
    \hspace{5em}
    \begin{subfigure}{0.35\linewidth}
        \includegraphics[width=\linewidth]{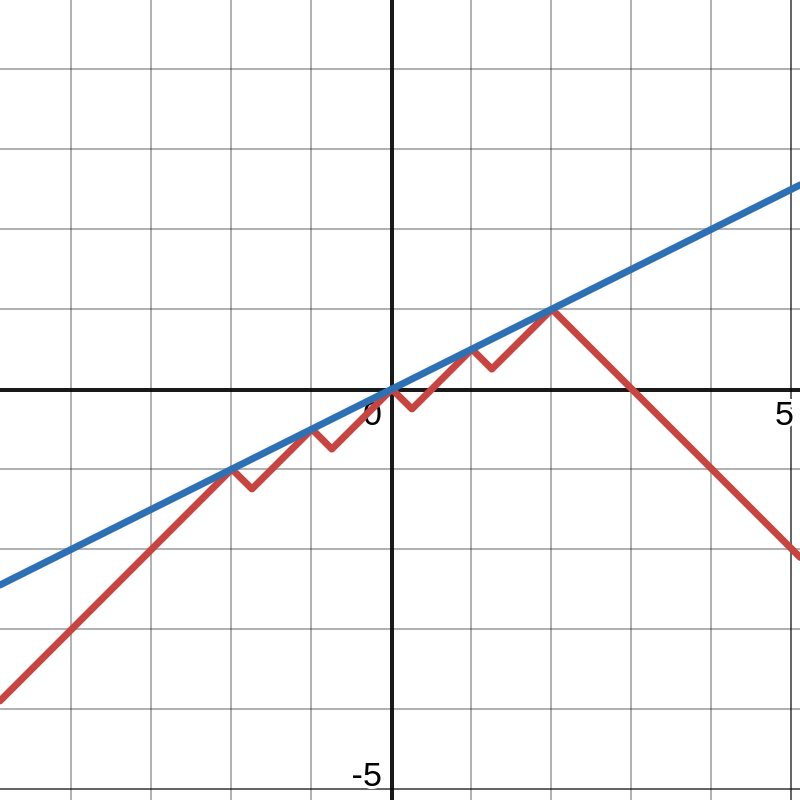}
        \caption{(Blue): \(x/2\), (Red): approximating max of minimums \(f(x) = \max_if_i(x)\).}
        \label{fig:max-pyramids}
    \end{subfigure}
    \caption{Min-max-plus construction.}
    \label{fig:min-max-plus}
\end{figure}

\begin{restatable}{theorem}{second}
\label{theorem:2}
Consider a network that \textbf{only} uses max-plus and min-plus MPs with output \(\mathbf{y}\in \mathbb{R}^m\). For any given input \(\mathbf{x}\), if \(\mathbf{y}\) is differentiable with respect to the network parameters, then in each layer \(n\), there exists at most \(m\) parameters \(w^{(n)}_{ij}\) for which the derivative of \(\mathbf{y}\) is nonzero. 
\end{restatable}

This result highlights a fundamental limitation in training such networks: the sparsity of their gradient signals during backpropagation. This was first noticed by \citet{dimitriadis2021advances}, and very recently revisited by \citet{dimitrova2026learning}. Consequently, gradient-based optimization methods become highly inefficient. 

However, sparsity is not the whole picture. A more important problem is the fact that the gradient is uninformative. Let us focus on the min-max-plus networks: No matter the cardinality of \(|S|\), the gradient \(\nabla f\) is never approximated well. Indeed, by Theorem~\ref{theorem:1}, every min-max-plus form has gradients almost everywhere (being Lipschitz continuous), and they are equal to either \(1\) or \(-1\) in exactly one argument, and \(0\) in the others. This holds for every function of the sequence \(f_i\) that approximates \(f\), meaning that the limit of the gradients does not converge to the gradient of the approximated function. In fact, we do not even have a good estimator of the gradient, since it is possible that \(\nabla_j f > 0\) while \(\nabla_j f_i < 0\) for some input \(j\). This is shown in Figure \ref{fig:bad-gradients}. The function behaves similarly to the Cantor function. This creates problems with back propagation, where the gradient with respect to the input of a layer is required for propagation of the gradient signal. Hence, the problem is not restricted merely to the sparsity of the gradients, but the fact that the min-max-plus network is only a \(0\)-th order universal approximator. 

\begin{figure}
    \centering
    \begin{subfigure}{0.35\linewidth}
        \includegraphics[width=\linewidth]{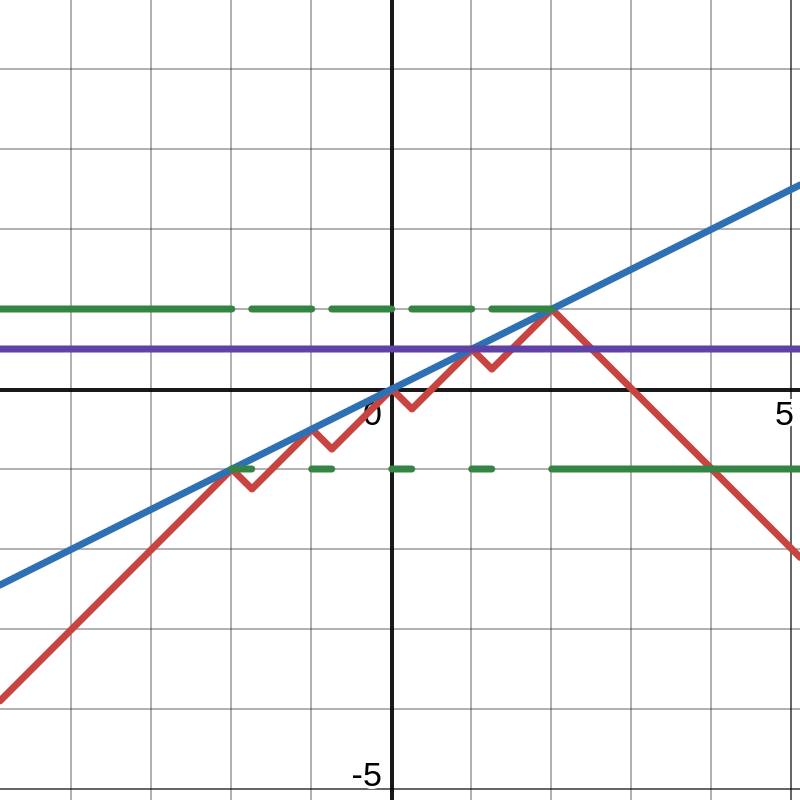}
        \caption{(Green): gradient of min-max-plus form. Compared with (purple): gradient of underlying function. The gradient is a bad estimator.}
        \label{fig:bad-gradients}
    \end{subfigure}
    \hspace{5em}
    \begin{subfigure}{0.35\linewidth}
        \includegraphics[width=\linewidth]{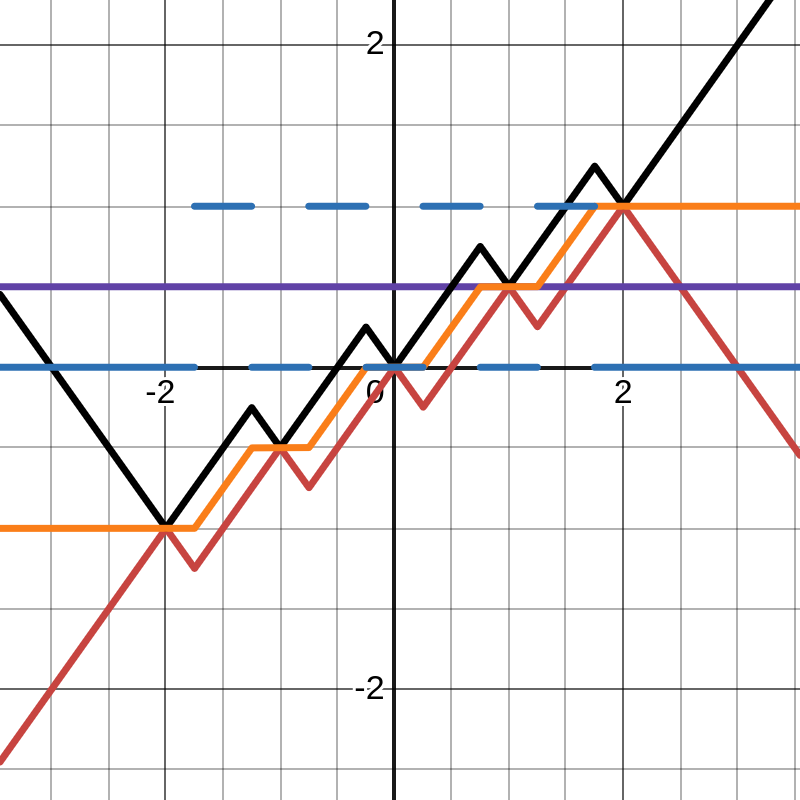}
        \caption{(Red): under-envelop, (black): over-envelop, (orange): average, (blue): gradient of average, (purple): true gradient.}
        \label{fig:MPM-gradients}
    \end{subfigure}
    \caption{Gradient approximation}
    \label{fig:gradient}
\end{figure}

Interestingly, linear networks with a ReLU activation correspond to PWL functions and are \(1\)-st order universal approximators. This allows efficient first-order optimization. They are not \(2\)-nd order universal approximators (with the second derivative being \(0\) almost everywhere), which might explain why SGD is so efficient at training linear networks, while second order methods are not. That is, linear networks suffer from the same exact training problem, one level higher.

\paragraph{DEP-based DMNNs.}
Another way of building a DMNN is by using DEP units in each layer 
\citep{araujo2017morphological}. For a network with \(L\) layers we can write it in recursive form as follows:
\begin{equation*}
    x^{(n)}_i = f^{(n)}(\lambda^{(n)}_i \max_{j\in [N^{(n-1)}]} (x^{(n-1)}_j+w^{(n)}_{ij}) + (1-\lambda^{(n)}_i) \min_{j\in [N^{(n-1)}]} (x^{(n-1)}_j + m^{(n)}_{ij})), 
\end{equation*}
where \( \lambda^{(n)}_i \in [0,1], \forall i\in [N^{(n)}], n\in [L] \), and \(f^{(n)}\) a common activation function or identity. 

On first sight, this network, a generalization of the previous networks, looks like it solves the bounded slope and sparsity problems. By taking the sum, working backwards the gradient disperses in two paths and we have diffusion of the gradient across more parameters and more inputs. In addition, by multiplying the two terms by \(\lambda^{(n)}_i\) and \(1-\lambda^{(n)}_i\) we have more diversity in the values of the gradient. However, while it indeed solves the problem of sparse gradients, this network is too not a universal approximator, and in fact very limited in the range of functions it can represent, as the following theorem suggests. 

\begin{restatable}{theorem}{third}
\label{theorem:3}
For existing DEP-based networks with input \(\mathbf{x}\in \mathbb{R}^d\) and a single output \(y(\mathbf{x})\), we have that \(y(\mathbf{x})\) is Lipschitz continuous on \(\mathbb{R}^d\) and a.e. it holds that \( \nabla y(\mathbf{x}) \succeq 0, \|\nabla y(\mathbf{x})\|_{1} \leq 1 \). 
\end{restatable}

This result implies that existing DEP-based networks cannot be universal approximators. To see this, we can consider again the example of the function \(f(x)=2x_1\). For details, refer to Theorem \ref{theorem:b7}.

In addition to the above problem, DEP-based networks also showcase another problem. The existence of \(\lambda^{(n)}_i\) terms makes training slow. To understand why, suppose that at some point, either after initialization or during training, we have \(\lambda^{(n)}_i\neq 1/2\). Then, the weights \(\mathbf{w}^{(n)}_i, \mathbf{m}^{(n)}_i\) will have to follow distributions with different means in order for the output to be zero-mean. In addition, every time \(\lambda^{(n)}_i\) changes, in order for the output to remain zero-mean, the mean of the distributions of \(\mathbf{w}^{(n)}_i, \mathbf{m}^{(n)}_i\) has to change, which is very slow in morphological networks due to sparse gradients. In general, having \(\lambda\neq 1/2\), fixed or not, hinders trainability. For a detailed discussion we refer the reader to Appendix \ref{appendix:d}.

We seek to solve all of these problems. i) We need a network that is a universal approximator for any continuous function, ii) we need the network to be a powerful universal approximator in the sense that it is parameter efficient, iii) we need it to mitigate the sparsity of the gradients, iv) we need it to have more informative gradients, and v) we need it to be less sensitive to initiaization.

\paragraph{Our proposal.} 
We propose solutions to the above problems that can be summarized as follows: 

\begin{enumerate}[left=0pt, nosep]
    \item We introduce learnable "linear" activations between the morphological layers of the network. Depending on our constraints, the complexity of the "linear" activations varies. 
    \item We change the DEP-based architecture in the following ways:
    \begin{enumerate}[left=0pt, nosep]
        \item We use the same weights for the maximum and the minimum.
        \item We introduce biases.
        \item We remove the learnable parameters \(\lambda^{(n)}_i\) and simply take the average of the maximum and the minimum. 
    \end{enumerate}
\end{enumerate}

We develop networks under three different constraint \textbf{settings} based on the allocation of parameters:  
\begin{enumerate}[left=0pt, nosep]
    \item Each layer \( n \) of size \( N^{(n)} \) has at most \( O(N^{(n)}) \) parameters allocated for activations, with the remainder constrained to morphological operations.  
    \item Each layer \( n \) of size \( N^{(n)} \) has at most \( O(N^{(n)}) \) \textit{learnable} parameters allocated for activations, with the remainder constrained to morphological operations.
    \item A hybrid setting where no constraints are imposed on the number of parameters used in the "linear" activations. 
\end{enumerate}  

\paragraph{Setting 1.}  

In the most restrictive setting, we define a fully connected network, which we refer to as the \textit{Max-Plus-Min} (MPM) network. The network is recursively defined as follows:  
\begin{equation*}
x^{(n)}_i = \alpha^{(n)}_i \bigg( \bigg( w^{(n)}_{i0} \vee \max_{j\in [N^{(n-1)}]} (x^{(n-1)}_j + w^{(n)}_{ij}) \bigg) + \bigg( m^{(n)}_{i0} \wedge \min_{j\in [N^{(n-1)}]} (x^{(n-1)}_j + w^{(n)}_{ij}) \bigg) \bigg).
\end{equation*}
Or, equivalently, as follows: 
\begin{equation*}
\mathbf{x}^{(n)} = \mathrm{diag}([\alpha^{(n)}_i]_{i \in N^{(n)}}) \Big( \Big( \mathbf{w}^{(n)}_i \vee \mathbf{W}^{(n)} \boxplus \mathbf{x}^{(n-1)} \Big) + \Big( \mathbf{m}^{(n)}_i \wedge \mathbf{W}^{(n)} \boxplus' \mathbf{x}^{(n-1)} \Big) \Big).
\end{equation*}

Here, our proposed morphological layer is the addition of a max-plus MP layer and a min-plus MP layer sharing the same learnable weights \(\mathbf{W}^{(n)} \in \mathbb{R}^{N^{(n)} \times N^{(n-1)}}\) but with different learnable biases \(\mathbf{w}_0^{(n)}, \mathbf{m}_0^{(n)} \in \mathbb{R}^{N^{(n)}}\). The activation function is a simple scaling operation: after computing the sum of the maximum and minimum, each output \(x^{(n)}_i\) is multiplied by a learnable parameter \(\alpha^{(n)}_i \in \mathbb{R}\). The final layer is not activated, i.e. \(\alpha^{(L)}_i\) is fixed to \(1\).

For a layer \(n\) of size \(N^{(n)}\), the activation function introduces only \(N^{(n)} \in O(N^{(n)})\) additional parameters, with all remaining parameters dedicated to morphological operations. A central property of the MPM unit is that for bounded domains it can be decomposed into max-plus and min-plus MPs as follows: For sufficiently large \(C\) we have
\begin{align*}
\max_i(x_i+w_i)\vee w_0 &= (\max_i(x_i+w_i+C)\vee (w_0+C)) + (\min_i(x_i+w_i)\wedge -C), 
\\
\min_i(x_i+w_i)\wedge m_0 &= (\max_i(x_i+w_i)\vee C) + (\min_i(x_i+w_i-C)\wedge (m_0-C)).
\end{align*}
The presence of learnable parameters \(\alpha^{(n)}_i\) and the inclusion of bias terms allow us to establish the following result:  

\begin{restatable}{theorem}{fourth}  
\label{theorem:4}
    If the domain of the input is compact (i.e., bounded and closed), the Max-Plus-Min (MPM) network is a universal approximator. In fact, it can represent exactly any continuous PWL function with an additional \(O(\log d)\) factor of active parameters compared to the function's representation as a difference of maxout units (used in the proof of universality of maxout networks \citep{goodfellow2013maxout}), where \(d\) is the input dimension. This makes it a \(1\)-st order universal approximator. 
\end{restatable}  

With universal approximation being a property in the limit, a strength of Theorem \ref{theorem:4} is that the MPM does not require a prohibitively large number of parameters to achieve its stated expressivity guarantee. More broadly, it is interesting to reason about what the inductive bias of the MPM is. In this regard, we note the following:

\begin{itemize}[left=0pt, nosep]
\item The proof of Theorem \ref{theorem:4} reduces universal approximation on bounded domains to the approximation of piecewise affine functions. In particular, for input dimension \(d\), the construction implies that the MPM can represent any maxout function considered in the universality proof of \citet{goodfellow2013maxout} using at most \(O(\log d)\) layers of width \(O(dK)\), where \(K\) denotes the rank of the corresponding maxout construction, which uses \(\Theta(dK)\) parameters. However, the construction of Theorem \ref{theorem:4} only requires each MPM unit to have \(O(1)\) active inputs. Hence, the MPM represents the same function using at most \(O(d \log d \cdot K)\) active weights, incurring only an additional \(O(\log d)\) factor in active parameters relative to the corresponding maxout construction.

\item In practice, although the input dimension \(d\) may be large, the complexity of the target function often dominates the approximation problem, and the \(K\) needed for a good approximation dominates \(d\). The hierarchical construction underlying Theorem \ref{theorem:4} suggests an inductive bias in which the slopes, constructed progressively, share intermediate results, potentially allowing reduction of the width \(K\) required in practice and compensating for the additional \(O(\log d)\) factor relative to the corresponding maxout construction.  

\item We empirically investigate this behavior through aggressive pre-training pruning experiments. The results suggest that the MPM preserves its performance without requiring a blow-up of active parameters.  

\end{itemize}

Compared to architectures that use only max-plus and min-plus MPs, and the DEP architecture, the MPM is a universal approximator on bounded domains. In addition, it is a significantly more powerful approximator than the min-max-plus network with input multipliers. Having linear activations on the outputs of all layers instead of input multipliers improves the representation power over the min-max-plus network. Having additionally the summation of the maximum and the minimum allows \emph{deeper} MPM networks to represent continuous PWL functions exactly. This makes deep MPMs \(1\)-st order universal approximators, allowing their efficient training. While the MPM benefits from increasing depth, the min-max-plus network faces the optimization limitations outlined above.

There are advantages for shallower networks as well. Regarding expressivity, MPM networks can handle functions with varying slopes better; one demonstration of this is Example~\ref{example:1}. 

\begin{example}\label{example:1}
    Consider the uni-variate function 
    \[
    f^{(\epsilon)}(x)=\max(-|x|+\epsilon, -|x|/\epsilon+1)
    \] on \([-1, 1]\) for \(\epsilon > 0\). The optimal min-max-plus network approximating this function uses a multiplier \(1/\epsilon\) on the input, and \(2\operatorname{round}(\frac{1-\epsilon}{2\epsilon d})+1\) pyramids to achieve an approximation error of \(\delta(1-\epsilon^2)>0\). See Figure~\ref{fig:variation} for an illustration. For \(\epsilon \to 0^+\) and \(\delta(1-\epsilon^2) = \mathrm{const} \Rightarrow \delta \approx \mathrm{const}\), this requires an unbounded number of min-plus MPs. The MPM network, on the other hand, can represent the function exactly with 3 MPM units. For details, refer to Appendix~\ref{appendix:c}.
\end{example}

\begin{figure}
    \centering
    \includegraphics[width=0.35\linewidth]{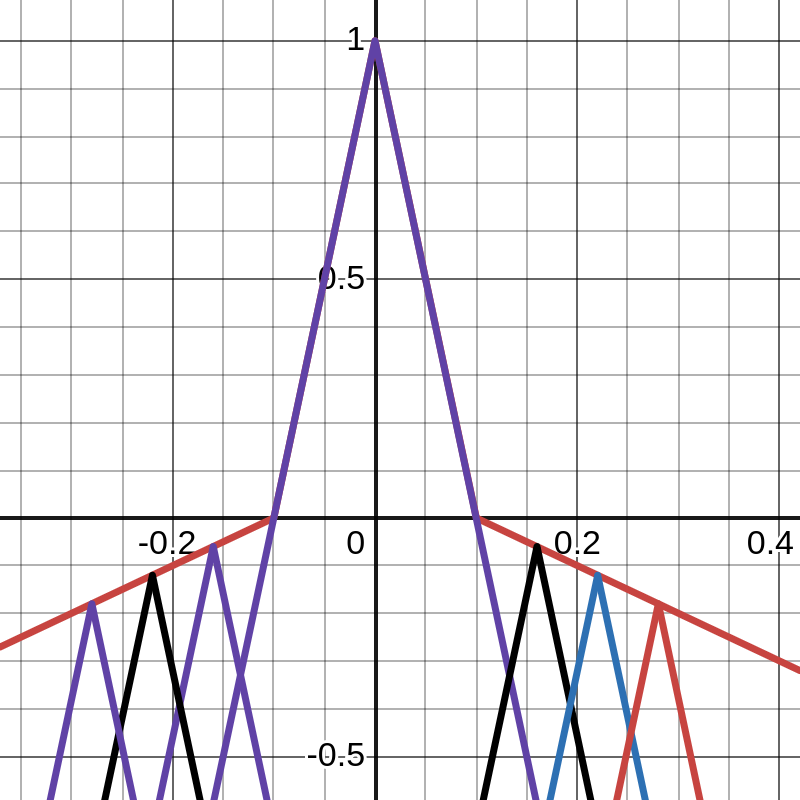}
    \caption{Min-max-plus approximation for a function with slope variation.}
    \label{fig:variation}
\end{figure}

The reason why the min-max-plus network underperformed in this example is because the function had variation in its slopes. By having multipliers as linear activations on all layers of the network, the pyramids are allowed to have varying slopes, and this problem is tackled. By additionally taking in mind the fact that the MPM unit can be decomposed into either a max-plus or min-plus MP, the shallow MPM network can function as a smoother interpolator: we can prove a 1D approximation result that is more closely related to the complexity of the underlying function (cf. the maxout construction used as the baseline for Theorem~\ref{theorem:4}).

\begin{restatable}{proposition}{eighth}\label{theorem:8}
    Suppose we have a compact interval \(\mathcal{I}\subseteq \mathbb{R}\) and an \(1\)-Lipschitz function \(f:\mathcal{I}\to \mathbb{R}\) with a finite number of inflection points \(N_i\). Then, a shallow \(4\)-layer Max-Plus-Min (MPM) network is an approximator asymptotically almost as efficient as the linear interpolator, requiring at most \(N_s + 2N_i + 5\) MPM units, with a total of \(O(N_s)\) active parameters and \(O(N_s+N_i)\) active connections to achieve the same error as an optimal linear interpolator using \(N_s\) samples. The min-max-plus network is, on the other hand, not efficient. 
\end{restatable}

Intuitively, a first minimum layer forms a set of pyramids, the linear activation changes their slopes, a second minimum layer can selectively choose which pyramids to use to form over-envelops of smoother concave shapes, and the final maximum layer uses the concave shapes to interpolate the function.  

Regarding the uninformative gradient problem on \emph{shallow} networks, notice in Figure~\ref{fig:MPM-gradients} that this can be mitigated by augmenting the under-envelop with an over-envelop, and taking their average. This way, the gradient becomes a better approximation of the true gradient. At the very least, it becomes a better-behaved estimator, because the estimated gradient and the underlying gradient have the same sign. After decomposing it to max-plus and min-plus MPs, notice that this is precisely what the MPM unit does: an averaging with shared weights, and so the MPM encompasses this class of functions as well. 

Note that in higher dimensions than one, there exist edge case examples where conservation of the sign does not hold. We leave to future work a theoretical closure of the gap between the shallow networks' and deep networks' analysis and a formal result capturing the above intuition of the gradient of the shallow MPM.

If we had no biases and the max-plus and min-plus weights were different, then the MPM network would be equivalent to a DEP architecture with fixed parameters \(\lambda^{(n)}_i = 0.5\) for all \(n, i\), followed by a learnable scaling for each output. In the Experiments, we show that: i) by introducing biases and sharing the same weights we get slightly better training accuracy and generalization, and ii) both introducing the learnable scaling and taking \(\lambda^{(n)}_i=0.5\) are essential for the network to be trainable. The importance of learnable scaling is highlighted in the 1D regression toy examples of Figure~\ref{fig:regression} in the Appendix. 

\paragraph{Improving generalization.}

The Max-Plus-Min (MPM) network struggles with generalization. To mitigate this, we introduce the Residual-Max-Plus-Min (RMPM) network, which incorporates residual connections for layers where input and output dimensions match. Specifically, we add the input to the activated output before propagating it to the next layer. This residual mechanism accelerates training and slightly improves generalization. 

\begin{restatable}{corollary}{sixth}
\label{theorem:6}
    If the domain of the input is compact (i.e., bounded and closed), and assuming no residual skip connection for the output layer, the Residual-Max-Plus-Min (RMPM) network is a universal approximator with the same guarantees as the MPM. 
\end{restatable}

To further improve generalization, we encourage the RMPM to learn robust representations by introducing \textit{weight dropout} during training -- randomly dropping a percentage of weights during training forward passes. 

\paragraph{Setting 2.}  

In Setting 1, the activation function is effectively a linear transformation with a learnable diagonal matrix. However, a richer class of linear transformations is desirable to enhance gradient diffusion. In Setting 2, our goal is to achieve this expressivity while maintaining a count of \textit{learnable} activation parameters of at most \(O(N^{(n)})\) per layer. Additionally, we prefer that the transformation be full-rank to maximize information propagation.  

This motivates our second architecture, the \textit{Max-Plus-Min-SVD} (MPM-SVD) network, defined as follows:  


\[
\mathbf{y}^{(n)} = \left( \mathbf{w}^{(n)}_0 \vee \mathbf{W}^{(n)} \boxplus \mathbf{x}^{(n-1)} \right) + \left( \mathbf{m}^{(n)}_0 \wedge \mathbf{W}^{(n)} \boxplus' \mathbf{x}^{(n-1)} \right),
\]
\[
\mathbf{x}^{(n)}=U^{(n)} \mathrm{diag}(\sigma^{(n)}_1, \ldots, \sigma^{(n)}_{N^{(n)}}) (V^{(n)})^\top \mathbf{y}^{(n)}.
\]

Again, \(\mathbf{W}^{(n)} \in \mathbb{R}^{N^{(n)} \times N^{(n-1)}}\) and \(\mathbf{w}_0^{(n)}, \mathbf{m}_0^{(n)} \in \mathbb{R}^{N^{(n)}}\) are learnable morphological parameters. \(\{\sigma^{(n)}_i\}_i \in \mathbb{R}^{N^{(n)}}\) are learnable scaling parameters, while \(U^{(n)}, V^{(n)} \in \mathbb{R}^{N^{(n)} \times N^{(n)}}\) are \textbf{fixed}, random orthonormal matrices. These matrices are initialized as follows: we sample a matrix using Kaiming initialization \citep{he2015delving} and compute its singular value decomposition (SVD) to obtain \(U, V\), ensuring a well-conditioned transformation. While the number of learnable scaling parameters per layer remains \(N^{(n)} \in O(N^{(n)})\), the total parameter count, including the fixed matrices \(U\) and \(V\), scales as \(\Theta((N^{(n)})^2)\). The final layer is not activated, i.e. \(\sigma^{(L)}_i\) are fixed to \(1\) and \(U^{(L)}=V^{(L)}=I_{N^{(L)}}\).

At this point we should emphasize that networks such as those of \citet{mondal2019dense,dimitriadis2021advances, valle2020reduced} incorporate fully connected linear layers and do not fall in Settings 1 or 2, but in Setting 3. To the best of our knowledge, we are the first to provide trainable networks in settings as restrictive as 1 and 2.

\paragraph{Setting 3.}
In this setting, we impose no constraints on the number of parameters in the linear transformations. We study networks following a hybrid architecture, alternating between linear and our proposed morphological layers. We refer to this model as the Hybrid-MLP, defined as follows:
\[
\mathbf{y}^{(n)}=\mathbf{A}^{(n)}\mathbf{x}^{(n-1)} + \mathbf{b}^{(n)},
\]
\[
\mathbf{x}^{(n)}=\left( \mathbf{w}^{(n)}_0 \vee \mathbf{W}^{(n)} \boxplus \mathbf{y}^{(n)} \right) + \left( \mathbf{m}^{(n)}_0 \wedge \mathbf{W}^{(n)} \boxplus' \mathbf{y}^{(n)} \right), 
\]

Here, \(\mathbf{A}^{(n)} \in \mathbb{R}^{N^{(n)} \times N^{(n-1)}}, \mathbf{b}^{(n)} \in \mathbb{R}^{N^{(n)}}, \mathbf{W}^{(n)} \in\mathbb{R}^{N^{(n)} \times N^{(n)}}, \mathbf{w}^{(n)}_0 \in \mathbb{R}^{N^{(n)}}, \mathbf{m}^{(n)}_0 \in \mathbb{R}^{N^{(n)}}\) are all trainable parameters. The Hybrid-MLP is effectively an MLP whose ReLU activations have been replaced with our proposed morphological layers. We should note that the biases \(\mathbf{b}^{(n)}\) are theoretically not necessary. 

\begin{restatable}{theorem}{fifth}
\label{theorem:5}
If the domain of the input is compact (i.e., bounded and closed), the Hybrid-MLP is a universal approximator. In fact, any fully connected ReLU or maxout network is a special case of the Hybrid-MLP. 
\end{restatable}

\paragraph{Convolutional networks.}

So far we have focused on building fully connected DMNNs. We can extend our insights from these networks and build convolutional networks. Convolutional networks will be based on the morphological convolution. The morphological layer we propose takes the sum of a dilation and an erosion with shared weights and different biases. For Setting 1, for a layer with \(N^{(n)}\) output channels, we activate the layer by linearly convoluting each channel with a learnable \(3\times 3\) matrix, obtaining \(9N^{(n)}\in O(N^{(n)})\) parameters in total. For Setting 2, we activate the layer by a linear convolutional layer, which has weight matrix \(A^{(n)}\in \mathbb{R}^{N^{(n)}\times N^{(n)}\times 3\times 3}\), initialized according to Glorot. For each \(i, j\in [3]\) we write \(A^{(n)}_{:,:,i,j}=U^{(n)}_{i,j}\Sigma^{(n)}_{i,j}(V^{(n)}_{i,j})^\top\), fix \(U^{(n)}_{i,j}, V^{(n)}_{i,j}\), and take \(\Sigma^{(n)}_{i,j}\) to be a learnable diagonal matrix, obtaining \(9N^{(n)}\in O(N^{(n)})\) learnable parameters. For Setting 3, we alternate between linear convolutional layers and our proposed morphological convolutional layers. 

\section{Experiments}

To test the efficacy of our networks, we conduct experiments on the MNIST, Fashion-MNIST, \citep{deng2012mnist, xiao2017fashionmnist}, and CIFAR-10 \citep{krizhevsky2009learning} datasets. Unless otherwise stated, we use a batch size of \(64\), a random \(80/20\) train-validation split, and the Adam optimizer \citep{kingma2015adam} with a learning rate of \(0.001\) for 50 epochs. Our loss function is the Cross Entropy Loss. Throughout, we report mean and std of accuracy for different runs. The networks were initialized according to the remarks of Appendix \ref{appendix:d}. For 
additional experimental details, additional experiments, and number of parameters of each model, refer to Appendix~\ref{appendix:e}. 

\subsection{Fully Connected Networks}

In this section, we evaluate various fully connected networks on the MNIST and Fashion-MNIST datasets. Our primary objectives are:  
i) to demonstrate that our proposed networks are trainable, and  
ii) to highlight the necessity of our proposed modifications through systematic ablation. We train the following networks:  

\begin{itemize}[left=0pt, nosep]
    \item \textbf{MLP}: A standard ReLU-activated multilayer perceptron.  
    \item \textbf{MP}: A max-plus MP-based DMNN.  
    \item \textbf{DEP}: A non-activated DEP-based DMNN.
    \item \textbf{DEP (\(\lambda=1/2\))}: A non-activated DEP-based DMNN with fixed \(\lambda=1/2\).
    \item \textbf{Act-MP}: A max-plus MP-based DMNN with activation applied according to our proposed method in Setting 1.  
    \item \textbf{Act-DEP}: A DEP-based DMNN with learnable \(\lambda^{(n)}_i\), activated according to our proposed method in Setting 1.  
    \item \textbf{Act-DEP (\(\lambda=3/4\))}: A DEP-based DMNN with fixed \(\lambda^{(n)}_i = 3/4\), activated according to our proposed method in Setting 1.
    \item \textbf{Act-DEP (\(\lambda=1/2\))}: A DEP-based DMNN with fixed \(\lambda^{(n)}_i = 1/2\), activated according to our proposed method in Setting 1.  
    \item \textbf{MinMaxPlus}: A min-max-plus network \citep{luo2021min} with input multipliers. Each "layer" is a min-max combination layer.
    \item \textbf{MPM}: Our proposed Max-Plus-Min network for Setting 1.  
    \item \textbf{RMPM}: Our proposed Residual-Max-Plus-Min network for Setting 1.  
    \item \textbf{RMPM-Drop}: An RMPM with 0.3 weight dropout, trained for 200 epochs.  
    \item \textbf{MPM-SVD}: Our proposed Max-Plus-Min-SVD network for Setting 2.  
\end{itemize}

\begin{table}
\caption{Accuracies of fully connected networks}
\begin{subtable}{0.45\linewidth}
  \caption{Train (peak) and test accuracy of fully connected networks on MNIST.}
  \label{table:fully-connected-mnist}
  \centering
  \begin{tabular}{lcc}
    \toprule
    \textbf{Network} & \textbf{Train (\%)} & \textbf{Test (\%)} \\
    \midrule
    MLP & 99.88 $\pm$ 0.04 & 98.01 $\pm$ 0.08 \\
    \midrule
    MP & 31.24 $\pm$ 1.51 & 31.59 $\pm$ 1.28 \\
    DEP & 76.63 $\pm$ 3.46 & 76.51 $\pm$ 3.36 \\
    DEP (\(\lambda=1/2\)) & 76.96 $\pm$ 1.13 & 77.64 $\pm$ 1.03 \\
    Act-MP & 66.17 $\pm$ 14.17 & 65.27 $\pm$ 13.97 \\
    Act-DEP & 84.51 $\pm$ 1.16 & 84.15 $\pm$ 1.03 \\
    Act-DEP (\(\lambda=3/4\)) & 94.04 $\pm$ 0.37 & 92.29 $\pm$ 0.64 \\
    Act-DEP (\(\lambda=1/2\)) & 99.15 $\pm$ 0.23 & 94.43 $\pm$ 0.26 \\
    MinMaxPlus & 76.92 $\pm$ 1.09 & 77.29 $\pm$ 1.07 \\
    MPM & 99.82 $\pm$ 0.04 & 94.66 $\pm$ 0.13 \\
    RMPM & 99.99 $\pm$ 0.00 & 95.52 $\pm$ 0.22 \\
    RMPM-Drop & 99.85 $\pm$ 0.03 & 97.49 $\pm$ 0.12 \\
    \midrule
    MPM-SVD & 99.99 $\pm$ 0.00 & 96.14 $\pm$ 0.04 \\
    \bottomrule
  \end{tabular}
\end{subtable}
\hspace{2em}
\begin{subtable}{0.45\linewidth}
  \caption{Train (peak) and test accuracy of fully connected networks on Fashion-MNIST.}
  \label{table:fully-connected-fashion-mnist}
  \centering
  \begin{tabular}{lcc}
    \toprule
    \textbf{Network} & \textbf{Train (\%)} & \textbf{Test (\%)} \\
    \midrule
    MLP & 98.17 $\pm$ 0.12 & 88.82 $\pm$ 0.23 \\
    \midrule
    MP & 22.34 $\pm$ 3.04 & 22.05 $\pm$ 2.93 \\
    DEP & 66.41 $\pm$ 2.08 & 65.30 $\pm$ 2.38 \\
    DEP (\(\lambda=1/2\)) & 70.99 $\pm$ 2.12 & 70.30 $\pm$ 2.29 \\
    Act-MP & 49.37 $\pm$ 9.84 & 48.50 $\pm$ 9.54 \\
    Act-DEP & 68.52 $\pm$ 2.26 & 66.79 $\pm$ 2.25 \\
    Act-DEP (\(\lambda=3/4\)) & 82.82 $\pm$ 0.89 & 79.26 $\pm$ 0.64 \\
    Act-DEP (\(\lambda=1/2\)) & 95.13 $\pm$ 0.59 & 82.58 $\pm$ 0.39 \\
    MinMaxPlus & 74.29 $\pm$ 1.00 & 72.84 $\pm$ 0.79 \\
    MPM & 98.42 $\pm$ 0.01 & 82.86 $\pm$ 0.17 \\
    RMPM & 99.66 $\pm$ 0.03 & 84.24 $\pm$ 0.26 \\
    RMPM-Drop & 96.07 $\pm$ 0.19 & 86.88 $\pm$ 0.19 \\
    \midrule
    MPM-SVD & 99.63 $\pm$ 0.02 & 84.72 $\pm$ 0.12 \\
    \bottomrule
  \end{tabular}
\end{subtable}
\end{table}

All networks consist of \(5\) hidden layers of size \(256\). Dropout makes convergence slower, hence RMPM-Drop was trained for 200 epochs. The results are reported in Tables \ref{table:fully-connected-mnist}, \ref{table:fully-connected-fashion-mnist}. As expected, learnable "linear" activation is crucial for training these networks, with activated networks consistently outperforming their non-activated counterparts. Moreover, both MPM and Act-DEP (\(\lambda=1/2\)) outperform the models with learnable \(\lambda\) and fixed \(\lambda=3/4\), highlighting the importance of summing the maximum and the minimum for trainability. 

MPM further benefits from biases and using the same weights for both the dilation and the erosion, resulting in a slight edge in training accuracy and generalization. The networks Act-DEP (\(\lambda=1/2\)), MPM, RMPM, RMPM-Drop, and MPM-SVD successfully train (i.e. reach satisfactory train accuracy peaks and convergence; for convergence results refer to Appendix \ref{appendix:e}) and achieve training accuracy comparable to that of a standard MLP. Among morphological networks without dropout, RMPM demonstrates slightly better generalization than MPM, while MPM-SVD -- operating under a different setting -- achieves the best generalization. With weight dropout, RMPM-Drop improves generalization significantly, achieving on MNIST/Fashion-MNIST test accuracies 97.49\%/86.88\%, 0.52\%/1.94\% lower compared to the linear MLP. Interesting, weight dropout in Setting 1 was a more successful strategy at incorporating all weights into training and improving generalization than what was relaxing the setting to 2 with the MPM-SVD. 

As expected, the MPM networks surpassed the MinMaxPlus network, which showcased both limited expressivity and a slow convergence, highlighting the need for linear activations instead of just input multipliers. Note that for the MinMaxPlus network, we considered the min-max combination as the "atomic layer", meaning the network had about double the number of parameters.

\subsection{Convolutional networks}

In this section, we evaluate various convolutional networks on the MNIST, Fashion-MNIST, and CIFAR-10 datasets. We train the following networks:  

\begin{itemize}[left=0pt, nosep]
    \item \textbf{LeNet-5}: A variant of the standard linear LeNet-5 with max-pooling. 
    \item \textbf{MPM-LeNet-5}: A morphological LeNet-5 according to Setting 1. Both convolutional and fully connected layers are morphological.
    \item \textbf{MPM-SVD-LeNet-5}: A morphological LeNet-5 according to Setting 2. Both convolutional and fully connected layers are morphological.
    \item \textbf{ResNet-20}: A variant of the standard, linear ResNet-20. 
    \item \textbf{MPM-ResNet-20}: A morphological ResNet-20 according to Setting 1, trained for 100 epochs. All convolutional layers are morphological. The classification layer is linear. 
\end{itemize}

For MPM-LeNet-5 and MPM-SVD-LeNet-5, morphological convolutions were implemented with a LogSumExp scheme for compatibility with PyTorch. For MPM-ResNet-20, we developed a CUDA module\footnote{see Appendix \ref{appendix:e} for details thereof} for PyTorch implementing max-plus convolution under its strict definition. For all LeNet-5 networks, a slight variation was adopted: incorporating max-pooling instead of average pooling and using \(5\times 5\) kernels with a padding of \(1\) for all convolutional layers. For ResNet-20 networks, we used max-pooling instead of stride for down-sampling. Although the validation accuracy converged within 50 epochs, for MPM-ResNet-20 the training accuracy took longer to converge, and thus we trained it for 100 epochs. We did not use any form of dropout. The results are reported in Tables \ref{table:convolutional-mnist}, \ref{table:convolutional-fashion-mnist}, \ref{table:convolutional-cifar-10}. On MNIST, it is clear that the networks we propose are trainable. 
It is also clear that they benefit from incorporating convolutions, showcasing a clear jump in generalization compared to fully connected networks. However, they lag behind the linear LeNet-5. On Fashion-MNIST, the MPM-LeNet-5 and MPM-SVD-LeNet-5 networks struggled to train as effectively as their fully connected counterparts. The deeper MPM-ResNet-20 network, on the other hand, trained successfully and reached 89.38\% test accuracy, comparable to the linear LetNet-5 but slightly lower than the linear ResNet-20. On CIFAR-10, MPM-ResNet-20 reached satisfactory training accuracy, and 62.14\% test accuracy. Although the accuracy is lower than that of a linear ResNet-20, the results show that our proposed morphological networks are capable of learning meaningful representations. To the best of our knowledge, this is the first demonstration of training such networks on CIFAR-10. For comparison, a linear LeNet-5 typically achieves around 60–65\% test accuracy on this dataset.

\begin{table}
\caption{Accuracies of convolutional networks}
\begin{subtable}{0.45\linewidth}
  \caption{Train (peak) and test accuracy of convolutional networks on MNIST.}
  \label{table:convolutional-mnist}
  \centering
  \begin{tabular}{lcc}
    \toprule
    \textbf{Network} & \textbf{Train (\%)} & \textbf{Test (\%)} \\
    \midrule
    LeNet-5 & 99.99 $\pm$ 0.01 & 99.03 $\pm$ 0.07 \\
    MPM-LeNet-5 & 98.46 $\pm$ 0.76 & 97.08 $\pm$ 0.30 \\
    MPM-SVD-LeNet-5 & 99.21 $\pm$ 0.19 & 97.25 $\pm$ 0.28 \\
    \bottomrule
  \end{tabular}
\end{subtable}
\hspace{2em}
\begin{subtable}{0.45\linewidth}
  \caption{Train (peak) and test accuracy of convolutional networks on Fashion-MNIST.}
  \label{table:convolutional-fashion-mnist}
  \centering
  \begin{tabular}{lcc}
    \toprule
    \textbf{Network} & \textbf{Train (\%)} & \textbf{Test (\%)} \\
    \midrule
    LeNet-5 & 99.20 $\pm$ 0.07 & 90.14 $\pm$ 0.10 \\
    MPM-LeNet-5 & 85.22 $\pm$ 1.56 & 82.12 $\pm$ 1.47 \\
    MPM-SVD-LeNet-5 & 88.57 $\pm$ 0.72 & 84.52 $\pm$ 0.28 \\
    ResNet-20 & 99.69 $\pm$ 0.05 & 92.42 $\pm$ 0.04 \\
    MPM-ResNet-20 & 96.30 $\pm$ 0.22 & 89.38 $\pm$ 0.26 \\
    \bottomrule
  \end{tabular}
\end{subtable}
\centering
\begin{subtable}{0.45\linewidth}
  \caption{Train (peak) and test accuracy of convolutional networks on CIFAR-10.}
  \label{table:convolutional-cifar-10}
  \centering
  \begin{tabular}{lcc}
    \toprule
    \textbf{Network} & \textbf{Train (\%)} & \textbf{Test (\%)} \\
    \midrule
    ResNet-20 & 99.10 $\pm$ 0.11 & 81.74 $\pm$ 0.27 \\
    MPM-ResNet-20 & 95.34 $\pm$ 0.80 & 62.14 $\pm$ 0.22 \\
    \bottomrule
  \end{tabular}
\end{subtable}
\end{table}

\subsection{Pruning}
\textit{Revision note: This version corrects an implementation issue affecting only the SNIP pruning experiments reported in an earlier preprint version. All SNIP pruning results have been recomputed and the corresponding discussion has been revised. All other experiments, theoretical results, and conclusions are unchanged.
}

We study pruning as a means of investigating whether the proposed architectures require prohibitively many active parameter counts to achieve expressive representations in practice. For lack of a pruning method compatible with morphological architectures, we consider: i) the pre-training pruning method SNIP \citep{lee2018snip}, and ii) unstructured \(\ell_1\) masking,

SNIP is applied as standard: redundant weights are identified and are completely removed from the network. We apply a slightly modified version of the standard procedure. Since SNIP relies on gradient-based importance scores and morphological networks exhibit sparse gradients (Theorem \ref{theorem:2}), directly applying SNIP can produce unstable pruning scores. To mitigate this issue, we apply SNIP after a short warm-up phase and aggregate importance scores across multiple batches. The same procedure is used for both linear and morphological networks. In addition, for the convolutional MPM architectures, pruning was restricted to the morphological backbone parameters, while the constrained linear activations and final classification layer were kept fixed. Empirically, the linear activations were systematically assigned low importance by SNIP. For aggressively pruned MPMs, we additionally maintain validity masks to avoid degenerate max/min units with all inputs removed. The results are reported in Tables \ref{table:pruning-snip} and \ref{table:pruning-snip-res}. 

We can see that, even after aggressive pre-training pruning, the fully connected RMPM and convolutional MPM-ResNet-20 maintain stable performance, indicating that an excessive number of active parameters are not required for the networks to achieve satisfactory expressivity. 

For \(\ell_1\) masking, in each layer a given fraction \(r\) of low-importance weights are set to zero (i.e., they are not removed), where importance is measured by the absolute value of the weight. Note that in the case of morphological networks, this means signal still potentially passes through the connection. The practical advantage is that this produces sparse parameter matrices that can be stored more efficiently. More speculatively, if pruning is done in blocks, then maximums and minimums of blocks of inputs can be calculated once for multiple outputs, resulting in a potential speedup under suitable hardware support. The results are reported in Table \ref{table:pruning} of Appendix \ref{appendix:e}. 

Finally, we note that SNIP and related pruning methods were originally developed for linear networks, where removing a connection is modeled by setting its weight to zero. In morphological networks, removing a connection does not correspond to setting its weight to zero, and the associated importance scores are not aligned with the underlying pruning objective. To the best of our knowledge, there is currently no broadly adopted pruning method specifically designed for deep morphological networks; our experiments only serve the purpose of proving that our proposed networks do not require prohibitively many parameters to achieve expressivity, not fairly comparing the pruning performance of morphological networks to linear ones. 

\begin{table}
\caption{Performance of SNIP-pruned networks.}
\begin{subtable}{\linewidth}
\centering
\caption{Performance of SNIP pruned MLP and RMPM for various pruning ratios on MNIST and Fashion-MNIST.}
\label{table:pruning-snip}
\begin{tabular}{cccccccc}
\toprule
\multicolumn{4}{c}{MLP (initial params: 466698)} &
\multicolumn{4}{c}{RMPM (initial params: 469268)} \\
\cmidrule(lr){1-4} \cmidrule(lr){5-8}
\multirow{2}{*}{\shortstack[l]{Pruning\\ ratio}} & \multirow{2}{*}{\shortstack[l]{Params\\ (kept)}} & MNIST & F-MNIST &
\multirow{2}{*}{\shortstack[l]{Pruning\\ ratio}} & \multirow{2}{*}{\shortstack[l]{Params\\ (kept)}} & MNIST & F-MNIST\\
& & & & & & & \\
\midrule
0.9875 & 5839 & 95.41 $\pm$ 0.24 & 85.05 $\pm$ 0.06 & 
0.9875 & 5895 & 94.61 $\pm$ 0.16 & 84.32 $\pm$ 0.23 \\
0.9900 & 4672 & 94.98 $\pm$ 0.16 & 84.32 $\pm$ 0.13 & 
0.9900 & 4722 & 94.22 $\pm$ 0.07 & 83.72 $\pm$ 0.12 \\
0.9925 & 3506 & 93.97 $\pm$ 0.42 & 82.92 $\pm$ 0.48 & 
0.9925 & 3549 & 93.26 $\pm$ 0.22 & 82.95 $\pm$ 0.17 \\
0.9950 & 2340 & 88.41 $\pm$ 4.99 & 79.09 $\pm$ 1.33 & 
0.9950 & 2376 & 90.17 $\pm$ 0.27 & 79.98 $\pm$ 0.22 \\
\bottomrule
\end{tabular}
\end{subtable}

\begin{subtable}{\linewidth}
\centering
\caption{Performance of SNIP pruned ResNet-20 and MPM-ResNet-20 for various pruning ratios on Fashion-MNIST and CIFAR-10. Note that F-MNIST is grayscale while CIFAR-10 is RGB, leading to a slight discrepancy in the initial parameters for the two datasets; both are reported separated by a slash "/".}
\label{table:pruning-snip-res}
\begin{tabular}{cccccccc}
\toprule
\multicolumn{4}{c}{ResNet-20 (initial params: 271994 / 272282)} &
\multicolumn{4}{c}{MPM-ResNet-20 (initial params: 276810 / 277098)} \\
\cmidrule(lr){1-4} \cmidrule(lr){5-8}
\multirow{2}{*}{\shortstack[l]{Pruning\\ ratio}} & \multirow{2}{*}{\shortstack[l]{Params\\ (kept)}} & F-MNIST & CIFAR-10 &
\multirow{2}{*}{\shortstack[l]{Pruning\\ ratio}} & \multirow{2}{*}{\shortstack[l]{Params\\ (kept)}} & F-MNIST & CIFAR-10 \\
& & & & & & & \\
\midrule
0.9500 & 14217 & 91.34 $\pm$ 0.23 & 72.27 $\pm$ 0.57 & 
0.9500 & 14472 & 89.27 $\pm$ 0.11 & 61.49 $\pm$ 1.16 \\
0.9750 & 7433 & 90.02 $\pm$ 0.38 & 66.78 $\pm$ 0.32 & 
0.9750 & 7561 & 87.76 $\pm$ 0.22 & 59.52 $\pm$ 1.02 \\
\bottomrule
\end{tabular}
\end{subtable}
\end{table}

\subsection{Hybrid-MLP: Gradient descent with large batches}

In this section, we train the proposed Hybrid-MLP with \(5\) linear and \(5\) morphological hidden layers of size \(256\) each, using different batch sizes and report the results in Table \ref{table:hybridmlp}. Our findings indicate that the network requires a large batch size to be trainable, suggesting that the inclusion of morphological layers introduces significant noise in the gradient estimation of stochastic optimization methods like Adam. However, for sufficiently large batch sizes, Adam exhibits rapid convergence. This is evident in Figures \ref{fig:convergence-train-mnist}, \ref{fig:convergence-val-mnist}, \ref{fig:convergence-train-fmnist}, and \ref{fig:convergence-val-fmnist}, which compare the training and validation accuracy of a standard MLP, a Maxout network \citep{goodfellow2013maxout}, and a Hybrid-MLP, each with five layers of size \(256\), trained with Adam using a batch size of \(6400\). Under sufficiently large batch sizes, the Hybrid-MLP exhibits stable and rapid optimization dynamics, suggesting that reducing gradient noise may play an important role in training deep morphological architectures. A final interesting observation is that the Hybrid-MLP of setting 3 was the most demanding at training, while the MPM-SVD of setting 2, while easier at training, lost out with respect to generalization to the RMPM with weight dropout of setting 1. 

{
\setlength{\tabcolsep}{5pt}
\begin{table*}
  \caption{Performance of Hybrid-MLP on MNIST and Fashion-MNIST for different batch sizes}
  \label{table:hybridmlp}
  \centering
  \begin{tabular}{lcccccc}
    \toprule
    & \multicolumn{3}{c}{\textbf{MNIST}} & \multicolumn{3}{c}{\textbf{Fashion-MNIST}} \\
    \cmidrule(lr){2-4} \cmidrule(lr){5-7}
    & \textbf{64} & \textbf{640} & \textbf{6400} & \textbf{64} & \textbf{640} & \textbf{6400} \\
    \midrule
    Train Acc. (peak) (\%) & 33.54 $\pm$ 2.89 & 98.52 $\pm$ 0.18 & 99.96 $\pm$ 0.06 & 35.13 $\pm$ 2.79 & 92.68 $\pm$ 0.40 & 98.90 $\pm$ 0.19 \\
    Train Acc. & 18.66 $\pm$ 5.28 & 16.76 $\pm$ 1.31 & 99.81 $\pm$ 0.27 & 19.76 $\pm$ 6.20 & 32.32 $\pm$ 15.92 & 98.36 $\pm$ 0.27 \\ 
    \quad (last epoch) (\%) \\
    Validation Acc. (\%) & 33.37 $\pm$ 2.60 & 96.76 $\pm$ 0.21 & 97.63 $\pm$ 0.15 & 34.61 $\pm$ 2.69 & 88.62 $\pm$ 0.02 & 88.96 $\pm$ 0.19 \\
    Test Acc. (\%) & 33.59 $\pm$ 3.20 & 96.74 $\pm$ 0.23 & 97.42 $\pm$ 0.15 & 34.64 $\pm$ 2.40 & 87.89 $\pm$ 0.24 & 88.15 $\pm$ 0.24 \\
    \bottomrule
  \end{tabular}
\end{table*}
}

\begin{figure}
    \centering
    \begin{subfigure}{0.4\linewidth}
        \centering
        \includegraphics[width=\linewidth]{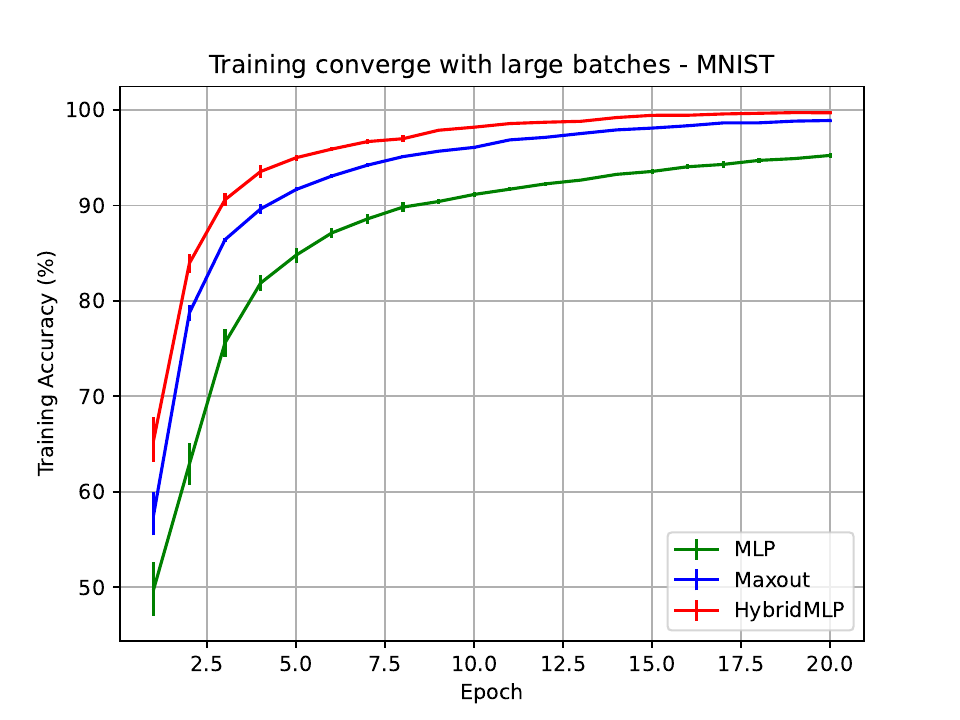}
        \caption{MNIST, Train acc.}
        \label{fig:convergence-train-mnist}
    \end{subfigure}
    \begin{subfigure}{0.4\linewidth}
        \centering
        \includegraphics[width=\linewidth]{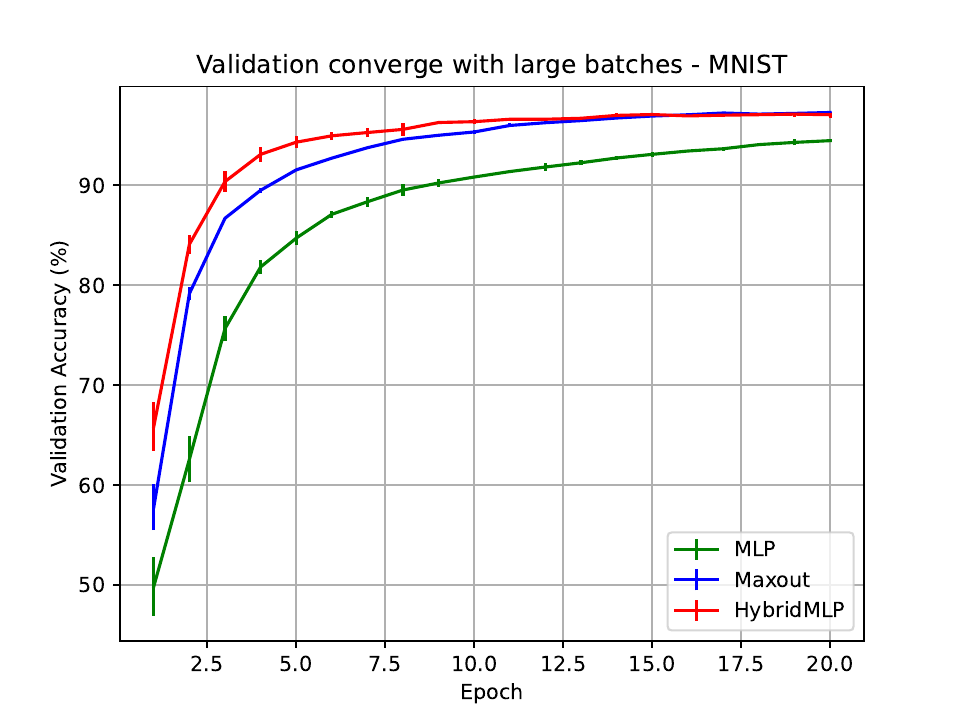}
        \caption{MNIST, Val. acc.}
        \label{fig:convergence-val-mnist}
    \end{subfigure}
    \begin{subfigure}{0.4\linewidth}
        \centering
        \includegraphics[width=\linewidth]{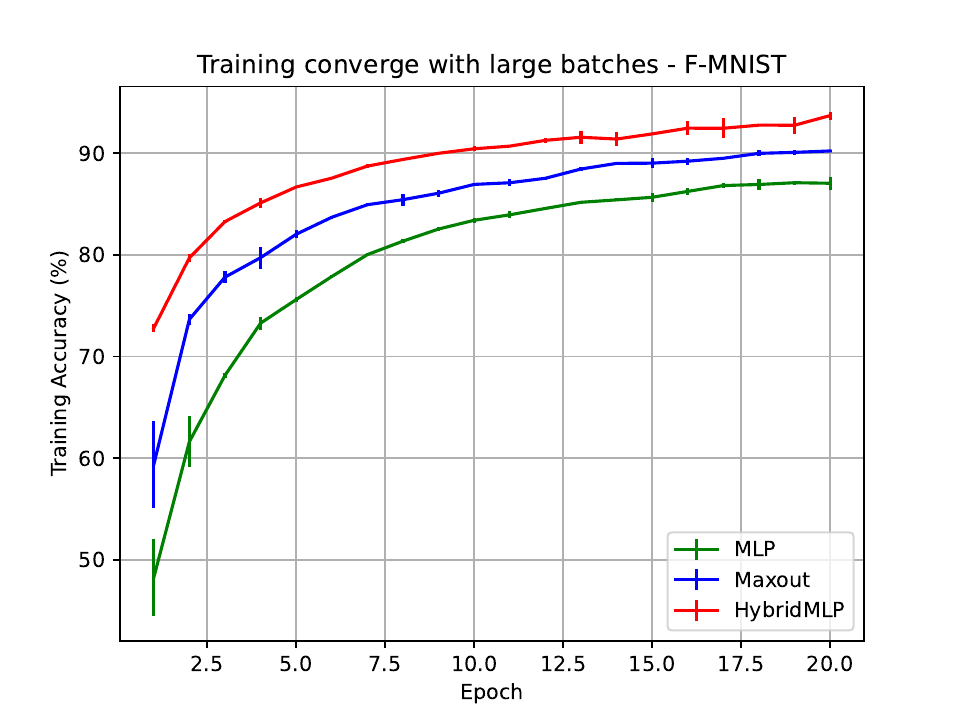}
        \caption{F-MNIST, Train acc.}
        \label{fig:convergence-train-fmnist}
    \end{subfigure}
    \begin{subfigure}{0.4\linewidth}
        \centering
        \includegraphics[width=\linewidth]{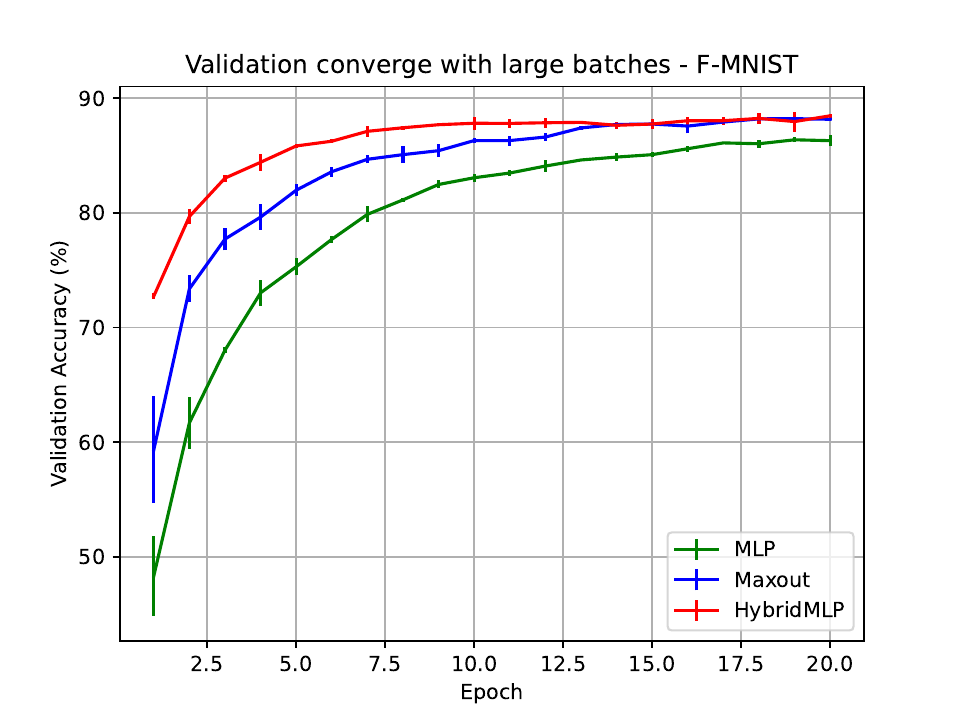}
        \caption{F-MNIST, Val. acc. }
        \label{fig:convergence-val-fmnist}
    \end{subfigure}
    \caption{Convergence rate of different models on MNIST and Fashion-MNIST for \(6400\) batch size}
    \label{fig:convergence}
\end{figure}

\section{Conclusion}

We studied deep morphological neural networks from the perspective of how alternative algebraic structures affect the expressiveness and trainability of deep architectures. Our analysis revealed fundamental algebraic and optimization limitations of existing deep morphological networks, motivating the introduction of constrained linear activations between morphological layers and averaging of max and min components. We showed that these minimal augmentations restore universal approximation, achieve \(1\)-st order universal approximation, and can successfully be trained on standard image classification tasks. Residual connections and weight dropout further improved generalization, and our pruning experiments demonstrated that the resulting architectures remain competitive without requiring prohibitively large active parameter counts. More broadly, our results suggest that deep learning architectures built on non-standard algebraic operations can achieve expressive and trainable representations. We hope this work motivates further study of neural architectures beyond the standard linear setting and improved optimization methods for regimes with sparse, uninformative gradients. 

\section*{Acknowledgments}
The research project was supported by the Hellenic Foundation for Research and Innovation (H.F.R.I.) under the “2nd Call for H.F.R.I. Research Projects to support Faculty Members \& Researchers” (Project Number:2656, Acronym: TROGEMAL).

The authors thank Panagiotis Papanikolaou and Assistant Professor G. Tzimpragos for identifying an implementation issue in the SNIP pruning experiments, which led to a correction incorporated in this version.

We thank all people who build and maintain the tools that enabled this work, including all open-source software used for the experiments. We also thank all people posting answers on fora such as Stack Exchange, which were valuable for the development of the CUDA module. 

\bibliography{refs}


\appendix


\section*{Organization of the appendix}

The Appendix is organized as follows:
\begin{itemize}[label={}, left=5pt]
    \item Appendix \ref{appendix:b} establishes results on previous architectures. Specifically, we prove Theorems \ref{theorem:1} and \ref{theorem:3} and show that these results are sufficient to conclude non-universality of i) networks using only max-plus and min-plus MPs with Theorem~\ref{theorem:b5}, and ii) existing DEP networks with Theorem~\ref{theorem:b7}. We prove Theorem \ref{theorem:2}, which highlights the sparse gradient signal in MP-based networks. 
    \item Appendix \ref{appendix:c} proves Theorems \ref{theorem:4} and \ref{theorem:5} and Corollary \ref{theorem:6}, demonstrating that our proposed networks are universal approximators. We also derive Example~\ref{example:1} more thoroughly and prove Proposition~\ref{theorem:8}. 
    \item Appendix \ref{appendix:d} investigates the initialization of morphological neural networks. In particular, apart from the activation of their layers, we showcase a main problem that makes training and initializing morphological networks difficult. We show how initializing our proposed networks is easier than initializing MP-based networks or DEP-based networks with \(\lambda\neq 1/2\). Finally, we explain how the networks in our experiments were initialized, and the reasoning behind it. 
    \item Appendix \ref{appendix:f} presents the relationship between our results and the Representation Theorem 
    \citep{maragos1987morphological}. Specifically, it showcases the differences in assumptions made, where these matter in the proof of our theorems, and shows also that our theoretical results are actually complementary to the Representation Theorem. 
    \item Appendix \ref{appendix:e} presents our experimental setup details, compute resources required, and our declaration of LLM usage. In addition, it includes additional experimental results, including experiments on simple regression tasks, and the full training history of networks that we claimed were trainable but for which only peak training accuracy was reported in the main body.
\end{itemize}


\section{Results on previous work}
\label{appendix:b}

In this appendix we establish our results regarding previous architectures. For our proofs, we are going to need the following lemmas.

\begin{lemma}
\label{theorem:b1}
    Let \(\mathcal{B}\subseteq \mathbb{R}\) be a non-trivial compact interval, and let \(\mathcal{F}\) be the class of Lipschitz continuous, uni-variate functions \(f:\mathcal{B} \to \mathbb{R}\) with \(|f'(x)| \leq 1\) a.e.. The class \(\mathcal{F}\) is not a universal approximator on \(\mathcal{B}\) (i.e. it is not a dense subclass of the  class of continuous functions on \(\mathcal{B}\)).
\end{lemma}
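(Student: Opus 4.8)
The plan is to show that $\mathcal{F}$ is a \emph{closed} subset of $C(\mathcal{B})$ under the sup-norm while failing to contain every continuous function; since a subset of a metric space that is simultaneously dense and closed must coincide with the whole space, this immediately yields non-density. The workhorse observation is that the a.e.\ derivative bound is equivalent to a global Lipschitz bound with constant $1$, and that this property survives uniform limits.

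First I would upgrade the pointwise bound $|f'| \leq 1$ a.e.\ to a genuine Lipschitz constant. Every $f \in \mathcal{F}$ is Lipschitz, hence absolutely continuous on the compact interval $\mathcal{B}$, so by the fundamental theorem of calculus $f(x) - f(y) = \int_y^x f'(t)\,dt$ for all $x, y \in \mathcal{B}$. Combined with $|f'| \leq 1$ a.e., this gives $|f(x) - f(y)| \leq |x - y|$, so every member of $\mathcal{F}$ is $1$-Lipschitz. Next I would verify that $\mathcal{F}$ is closed under uniform convergence: if $f_i \in \mathcal{F}$ and $f_i \to g$ uniformly, then for each fixed pair $x, y$ passing to the limit in $|f_i(x) - f_i(y)| \leq |x - y|$ yields $|g(x) - g(y)| \leq |x - y|$. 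Thus $g$ is $1$-Lipschitz, hence differentiable a.e.\ with $|g'| \leq 1$, so $g \in \mathcal{F}$ as defined.

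Finally I would exhibit a continuous function lying outside $\mathcal{F}$. The affine map $g_0(x) = 2x$ is continuous on $\mathcal{B}$ but satisfies $|g_0(x) - g_0(y)| = 2|x - y|$, which violates the $1$-Lipschitz bound on any non-trivial interval. Since the uniform closure of $\mathcal{F}$ equals $\mathcal{F}$ itself and does not contain $g_0$, no sequence in $\mathcal{F}$ can converge uniformly to $g_0$, and therefore $\mathcal{F}$ is not dense in $C(\mathcal{B})$.

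The only step requiring genuine care is the first one: justifying that the a.e.\ pointwise derivative bound genuinely forces the Lipschitz constant to be at most $1$, which relies on absolute continuity of Lipschitz functions together with the fundamental theorem of calculus for absolutely continuous functions. The remaining arguments — closedness under uniform limits and the choice of a counterexample — are elementary, so I do not anticipate any real obstacle beyond stating these standard facts cleanly.
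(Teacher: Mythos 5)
Your proof is correct, but it takes a different route from the paper's. Both arguments share the same technical core: upgrading the a.e.\ bound \(|f'|\leq 1\) to the increment bound \(|f(x)-f(y)|\leq |x-y|\) via absolute continuity of Lipschitz functions and the fundamental theorem of calculus. From there you diverge. The paper argues by direct quantitative contradiction: it fixes a subinterval \([x_0-r,x_0+r]\subseteq\mathcal{B}\) with \(r<1\), takes the steep target \(g(x)=x/r\), and shows that an \(\varepsilon\)-approximation \(f\in\mathcal{F}\) would force \(f(x_0+r)-f(x_0)>1-2\varepsilon\) while the FTC bound caps this increment at \(r\), yielding \(r\geq 1\), a contradiction. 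You instead make a structural observation: \(\mathcal{F}\) coincides with the class of \(1\)-Lipschitz functions, which is closed under uniform (indeed pointwise) limits, and a closed proper subset of \(C(\mathcal{B})\) cannot be dense; the function \(2x\) witnesses properness. Your argument buys a strictly stronger conclusion essentially for free --- since the uniform closure of \(\mathcal{F}\) is \(\mathcal{F}\) itself, \emph{every} continuous function that fails the \(1\)-Lipschitz bound is non-approximable, not just one designated target --- and it isolates the reusable fact that Lipschitz bounds survive limits (the paper later needs a similar limit-stability statement in its Appendix F, Lemma F.2, proved by the same integral technique). The paper's version is more elementary and self-contained, avoiding the topological framing, and its quantitative style matches how the lemma is consumed downstream in Lemma \ref{theorem:b2}. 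One small remark: in your closedness step, the hypothesis of uniform convergence is stronger than needed --- pointwise convergence already passes the Lipschitz bound to the limit --- but since density is formulated in the sup-norm, uniform convergence is the natural setting and nothing is lost.
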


\begin{proof}
    Since \(\mathcal{B}\) has non-empty interior, there exists \(x_0\) and \(0<r<1\) such that \([x_0-r, x_0+r]\subseteq \mathcal{B}\). Let \(g(x)=x/r\), a continuous function on \(\mathcal{B}\). For the sake of contradiction, suppose that \(\mathcal{F}\) is a universal approximator. Then, for every \(\varepsilon>0\) we can find \(f\in \mathcal{F}\) such that \(|f(x)-g(x)|<\varepsilon, \forall x\in \mathcal{B}\). In particular, \(f(x_0)<g(x_0)+\varepsilon=x_0/r+\varepsilon\) and \(f(x_0+r) > g(x_0+r)-\varepsilon=x_0/r+1-\varepsilon\). It follows that \(f(x_0+r)-f(x_0)>1-2\varepsilon\). However, f is Lipschitz continuous, and hence also absolutely continuous. Thus, it is differentiable a.e., and it holds that \(f(x_0+r)-f(x_0)=\int_{x_0}^{x_0+r} f'(x)dx\). But, due to the bound on the derivates of \(f\), we have that \(\int_{x_0}^{x_0+r} f'(x)dx \leq \int_{x_0}^{x_0+r}1dx=r \Rightarrow r > 1-2\varepsilon, \forall \varepsilon>0 \Rightarrow r \geq 1\), a contradiction. 
\end{proof}

The previous lemma can be generalized to functions of multiple variables. 

\begin{lemma}
\label{theorem:b2}
    Let \(\mathcal{B}\subseteq \mathbb{R}^d\) be a compact, convex domain with non-empty interior, and let \(\mathcal{F}\) be the class of Lipschitz continuous functions \(f:\mathcal{B} \to \mathbb{R}\) with \(\|\nabla f(\mathbf{x})\|_1 \leq 1\) a.e.. The class \(\mathcal{F}\) is not a universal approximator on \(\mathcal{B}\).
\end{lemma}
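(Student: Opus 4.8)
The plan is to reduce the multivariate claim to the one-dimensional Lemma \ref{theorem:b1} by restricting approximants to a line segment parallel to a coordinate axis. The key pointwise fact is that wherever \(f\in\mathcal F\) is differentiable we have \(|\partial_1 f(\mathbf x)| \le \|\nabla f(\mathbf x)\|_1 \le 1\); hence the restriction of \(f\) to any line in direction \(\mathbf e_1\) should be Lipschitz with a.e.\ derivative bounded by \(1\), i.e.\ a member of the class appearing in Lemma \ref{theorem:b1}. So I would argue by contradiction: suppose \(\mathcal F\) is a universal approximator on \(\mathcal B\). Since \(\mathcal B\) has non-empty interior, fix an open ball \(B(\mathbf c,\rho)\subseteq\mathcal B\); after shrinking, assume \(\rho<1\) and pick \(r\in(0,1)\) so that the ball contains a segment of length strictly greater than \(2r\) in the direction \(\mathbf e_1\) (e.g.\ \(r=\rho/2\)). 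Set \(g(\mathbf x)=x_1/r\), a continuous function on \(\mathcal B\), and let \(\{f_i\}\subseteq\mathcal F\) be a sequence with \(f_i\to g\) uniformly on \(\mathcal B\).

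The main obstacle is that the hypothesis \(\|\nabla f_i\|_1\le1\) holds only for a.e.\ \(\mathbf x\in\mathbb R^d\), whereas a single line has \(d\)-dimensional measure zero, so a fixed line could in principle lie entirely inside the exceptional set. I would resolve this with Fubini's theorem, crucially exploiting that we deal only with the countable family \(\{f_i\}\). For each \(i\), let \(E_i\) be the (Lebesgue-null, by Rademacher's theorem together with the hypothesis) set of points where \(f_i\) fails to be differentiable or where \(\|\nabla f_i\|_1>1\); then \(E=\bigcup_i E_i\) is still \(d\)-null. By Fubini, for almost every transverse coordinate \(\mathbf y_0\in\mathbb R^{d-1}\) the line \(\{(t,\mathbf y_0)\}\) meets \(E\) in a one-dimensional null set. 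I would choose such a good \(\mathbf y_0\) whose line also passes through the ball with a chord of length exceeding \(2r\) (possible since the admissible \(\mathbf y_0\) form a set of positive \((d-1)\)-measure while the bad ones are null), so that the interval \(I=\{t:(t,\mathbf y_0)\in\mathcal B\}\) contains some \([x_0-r,x_0+r]\).

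Along this good line the reduction closes. Each \(\phi_i(t):=f_i(t,\mathbf y_0)\) is Lipschitz, hence absolutely continuous and differentiable a.e.\ on \(I\); at every \(t\) with \((t,\mathbf y_0)\notin E\) — that is, for a.e.\ \(t\) — \(f_i\) is totally differentiable there, so \(\phi_i'(t)=\partial_1 f_i(t,\mathbf y_0)\) with \(|\phi_i'(t)|\le\|\nabla f_i\|_1\le1\). Thus every \(\phi_i\) lies in the one-dimensional class of Lemma \ref{theorem:b1} on \(I\), and \(\phi_i(t)\to g(t,\mathbf y_0)=t/r\) uniformly on \(I\). This exhibits \(t\mapsto t/r\) as a uniform limit of functions from that class on a compact interval containing \([x_0-r,x_0+r]\) with \(0<r<1\), contradicting Lemma \ref{theorem:b1}. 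The only points needing care beyond this are standard: Rademacher differentiability (applied on the open interior, or after a McShane extension of \(f_i\) to \(\mathbb R^d\)), and the identification \(\phi_i'=\partial_1 f_i\) at points of total differentiability.
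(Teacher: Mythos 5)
Your proposal is correct, and at the top level it follows the same route as the paper: reduce to the one\-/dimensional Lemma \ref{theorem:b1} by restricting approximants to a segment parallel to \(\mathbf{e}_1\). The difference lies exactly at the step you call ``the main obstacle,'' and there your argument is more careful than the paper's own. The paper fixes a line through a point \(\mathbf{x}_0\in\mathrm{int}(\mathcal{B})\) in advance, invokes Rademacher to get differentiability a.e.\ on \(\mathcal{B}\), and then asserts that the restriction \(\tilde f(t)=f(\mathbf{x}_0+t\mathbf{e}_1)\) satisfies \(\tilde f'(t)=(\nabla f)_1\) with \(|\tilde f'|\le 1\) a.e.\ on the line --- silently passing from ``a.e.\ with respect to \(d\)-dimensional measure'' to ``a.e.\ on a fixed line,'' even though the line is a \(d\)-null set and could in principle lie inside the exceptional set. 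Your Fubini argument (form the countable union \(E=\bigcup_i E_i\) of exceptional sets of the approximating sequence, note it is \(d\)-null, and pick a transverse coordinate \(\mathbf{y}_0\) whose slice \(E_{\mathbf{y}_0}\) is \(1\)-null and whose chord through the chosen ball has length exceeding \(2r\)) closes precisely this hole, and your observation that countability of the family \(\{f_i\}\) is what makes the union argument work is the key point the paper misses. The price you pay is that your good line depends on the approximating sequence, so unlike the paper you cannot conclude that the whole restricted class is dense on a fixed interval and then cite Lemma \ref{theorem:b1} purely as a black-box statement; approximating the single function \(t\mapsto t/r\) does not by itself contradict ``not dense.'' Instead, your contradiction is with what the \emph{proof} of Lemma \ref{theorem:b1} establishes, namely that \(t/r\) with \(r<1\) stays at uniform distance bounded away from zero from every Lipschitz \(\phi\) with \(|\phi'|\le 1\) a.e.\ (equivalently: such \(\phi\) are \(1\)-Lipschitz, the \(1\)-Lipschitz class is closed under uniform limits, hence \(1/r\le 1\), a contradiction). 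That is a one-line patch for which you have already supplied all the ingredients; with it, your proof is complete and rigorous, and it doubles as a repair of the gap in the paper's own argument.
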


\begin{proof}
    First, we prove that if \(\mathcal{H}\) is a universal approximator on \(\mathcal{B}\), \(\mathbf{x}_0\in \mathrm{int}(\mathcal{B})\), \(\hat{\mathbf{v}}\) is a directional vector, and \(\widetilde{\mathcal{B}}=\{t\in \mathbb{R} : \mathbf{x}_0 + t\hat{\mathbf{v}}\in \mathcal{B}\}\), then \(\widetilde{\mathcal{H}}=\{f(\mathbf{x}_0+t\hat{\mathbf{v}}) | f\in \mathcal{H}\}\) is a universal approximator on \(\widetilde{\mathcal{B}}\). Let \(\tilde{g}:\widetilde{\mathcal{B}} \to \mathbb{R}\) be a continuous function. Since \(\tilde{\mathcal{B}}\) is a compact interval, we extend the function continuously on \(\mathbb{R}\). For every \(\mathbf{x}\in \mathcal{B}\) if \(\mathbf{x}_0+t\mathbf{v}\) is its projection on the line, we define \(g(\mathbf{x})=\tilde{g}(t)\).  In the end, we obtain a continuous \(g\) on \(\mathcal{B}\) for which \(\tilde{g}(t)=g(\mathbf{x}_0+t\hat{\mathbf{v}})\). Since \(\mathcal{H}\) is a universal approximator of \(\mathcal{B}\), we can find a sequence \(f_1, f_2, \ldots\) of functions in \(\mathcal{H}\) such that \(f_n\to g\) uniformly on \(\mathcal{B}\), i.e. \(\sup_{\mathbf{x}\in \mathcal{B}}(f_n(\mathbf{x})-g(\mathbf{x}))\to 0 \Rightarrow \sup_{t\in \widetilde{\mathcal{B}}}(f_n(\mathbf{x}_0+t\hat{\mathbf{v}})-\tilde{g}(t)) \to 0\). Thus, \(f_1(\mathbf{x}_0+t\hat{\mathbf{x}}), f_2(\mathbf{x}_0+t\hat{\mathbf{x}}), \ldots\) approximates the continuous function \(\tilde{g}\), and \(\widetilde{\mathcal{H}}\) is a universal approximator on \(\widetilde{\mathcal{B}}\). 

    Next, we take some \(\mathbf{x}_0\in \mathrm{int}(\mathcal{B})\) and \(\hat{\mathbf{v}}=(1,0,\ldots,0)\). To prove that \(\mathcal{F}\) is not a universal approximator on \(\mathcal{B}\), it suffices to prove that \(\widetilde{\mathcal{F}}=\{f(\mathbf{x}_0+t\hat{\mathbf{x}}) | f\in \mathcal{F}\}\) is not a universal approximator on \(\widetilde{\mathcal{B}}\). Notice that \(\widetilde{\mathcal{F}}\) and \(\widetilde{\mathcal{B}}\) satisfy all requirements of Lemma \ref{theorem:b1}. Indeed, \(\mathcal{B}\) is compact, convex, with non-empty interior, and thus \(\widetilde{\mathcal{B}}\) is a non-trivial compact interval. In addition, let \(\tilde{f}\in \widetilde{\mathcal{F}}\). Then, we have that
    \begin{enumerate}
        \item \(\tilde{f}\) is Lipschitz: \(|\tilde{f}(t_2)-\tilde{f}(t_1)|=|f(\mathbf{x}_0+t_2\hat{\mathbf{v}})-f(\mathbf{x}_0+t_1\hat{\mathbf{v}})| \leq L\|\mathbf{x}_0+t_2\hat{\mathbf{v}}-\mathbf{x}_0+t_1\hat{\mathbf{v}}\|= L|t_2-t_1|\|\hat{\mathbf{v}}\|=L|t_2-t_1|\)
        \item \(|\tilde{f}'(t)|\leq 1\) a.e.: Since \(f\) is Lipschitz, by Rademacher's theorem, it is differentiable a.e. on \(\mathrm{int}(\mathcal{B})\). Since \(\mathcal{B}\) is convex, its boundary has zero measure, and hence its interior \(\mathrm{int}(\mathcal{B})\) has same measure as \(\mathcal{B}\). This means that \(f\) is differentiable a.e. on \(\mathcal{B}\). This in turn means that the directional derivative of \(f\) in the direction \(\hat{\mathbf{v}}\) (i.e. the derivative of \(\tilde{f}\)) equals \(\left<\nabla f, \hat{\mathbf{v}}\right> = (\nabla f)_1\) a.e.. It also holds that \(|(\nabla f)_1|\leq \|\nabla f\|_1 \leq 1\) a.e..
    \end{enumerate}
    Hence, by Lemma \ref{theorem:b1}, \(\widetilde{\mathcal{F}}\) is not a universal approximator on \(\widetilde{\mathcal{B}}\), and \(\mathcal{F}\) is not a universal approximator on \(\mathcal{B}\). 
\end{proof}

Before we proceed with the proof of Theorem \ref{theorem:1}, we will need the following two auxiliary lemmas. 

\begin{lemma}
\label{theorem:b3}
    Let \(\mathcal{B}\subseteq \mathbb{R}^d\) and \(f_1, f_2, \ldots, f_n:\mathcal{B} \to \mathbb{R}\) be Lipschitz continuous functions with Lipschitz constants \(L_1, \ldots\). Then, their pointwise maximum and point-wise minimum are Lipschitz continuous functions.
\end{lemma}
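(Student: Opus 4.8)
The plan is to show that the pointwise maximum $f(\mathbf{x}) = \max_i f_i(\mathbf{x})$ is Lipschitz with constant $L = \max_i L_i$, and then obtain the claim for the pointwise minimum essentially for free via the identity $\min_i f_i = -\max_i(-f_i)$, noting that each $-f_i$ is Lipschitz with the same constant $L_i$. Since the collection $\{f_i\}$ is finite, the maxima and minima are attained pointwise and $L = \max_i L_i$ is finite, so no compactness or convexity hypotheses on $\mathcal{B}$ are required.

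The key step is the elementary inequality, valid for any two finite tuples of reals $(a_i)$ and $(b_i)$,
$$
\left| \max_i a_i - \max_i b_i \right| \leq \max_i |a_i - b_i|.
$$
To prove it, I would pick an index $k$ attaining $\max_i a_i$; then
$$
\max_i a_i - \max_i b_i = a_k - \max_j b_j \leq a_k - b_k \leq |a_k - b_k| \leq \max_i |a_i - b_i|,
$$
and the symmetric argument, swapping the roles of $(a_i)$ and $(b_i)$, yields $\max_i b_i - \max_i a_i \leq \max_i |a_i - b_i|$. Combining the two bounds gives the absolute-value inequality above.

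With this in hand, I would substitute $a_i = f_i(\mathbf{x})$ and $b_i = f_i(\mathbf{y})$ and chain the estimates, using the individual Lipschitz bounds $|f_i(\mathbf{x}) - f_i(\mathbf{y})| \leq L_i \|\mathbf{x} - \mathbf{y}\|$:
$$
|f(\mathbf{x}) - f(\mathbf{y})| \leq \max_i |f_i(\mathbf{x}) - f_i(\mathbf{y})| \leq \max_i L_i \|\mathbf{x} - \mathbf{y}\| = L \|\mathbf{x} - \mathbf{y}\|.
$$
This establishes that $f$ is Lipschitz with constant $L$. Applying the same result to the functions $(-f_i)$ and negating then delivers the Lipschitz continuity of the pointwise minimum, with the same constant $L$.

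There is no genuine obstacle here; the entire content of the lemma reduces to the max-difference inequality, which is fully elementary. The only point deserving a word of care is the reduction of the minimum to the maximum via negation, which is immediate once one observes that negation preserves the Lipschitz constant.
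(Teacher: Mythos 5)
Your proof is correct, but it takes a genuinely different route from the paper's. The paper proves the two-function case via the algebraic identity \(\max(f_1,f_2)=\tfrac{f_1+f_2+|f_1-f_2|}{2}\) (and its dual for the minimum), reduces the problem to showing that \(|f_1-f_2|\) is Lipschitz using the triangle inequality, and then extends to \(n\) functions by induction; this yields a Lipschitz constant on the order of \(L_1+L_2\) per step, hence a constant that grows with the sum of the \(L_i\) after the induction. You instead prove the max-difference inequality
\[
\Bigl| \max_i a_i - \max_i b_i \Bigr| \leq \max_i |a_i - b_i|
\]
directly for finite tuples, which handles all \(n\) functions in one stroke with no induction, and you obtain the minimum for free via \(\min_i f_i = -\max_i(-f_i)\). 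Your argument buys two things: it is shorter (no induction, no case for the minimum), and it produces the sharp constant \(L=\max_i L_i\) rather than a sum. The paper's decomposition, on the other hand, is modular -- it leans only on closure of Lipschitz functions under addition, scaling, and absolute value -- and since the lemma is used downstream only qualitatively (any finite Lipschitz constant suffices for the a.e.\ differentiability arguments via Rademacher), the weaker constant costs nothing there. Both proofs are complete and valid; yours is the tighter and more economical of the two.
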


\begin{proof}
    We can prove the statement for two functions \(f_1\) and \(f_2\), and for multiple functions it will follow from induction on \(n\). 

    Notice that \(\max(f_1, f_2)=\frac{f_1+f_2+|f_1-f_2|}{2}\), and \(\min(f_1, f_2)=\frac{f_1+f_2-|f_1-f_2|}{2}\). Since the addition and scaling of Lipschitz functions is Lipschitz, it suffices to show that \(|f_1-f_2|\) is Lipschitz. By triangle inequality, for every \(\mathbf{x}_1, \mathbf{x}_2\) we have that
    \begin{align*}
    &\big| |f_1(\mathbf{x}_1)-f_2(\mathbf{x}_1)|-|f_1(\mathbf{x}_2)-f_2(\mathbf{x}_2)| \big| \\
    \leq & |f_1(\mathbf{x}_1)-f_2(\mathbf{x}_1)-f_1(\mathbf{x}_2)+f_2(\mathbf{x}_2)|\\ 
    \leq & |f_1(\mathbf{x}_1)-f_1(\mathbf{x}_2)|+|f_2(\mathbf{x}_1)-f_2(\mathbf{x}_2)| \\
    \leq & (L_1+L_2)\|\mathbf{x}_1-\mathbf{x}_2\|
    \end{align*}
    Thus, \(\max(f_1, f_2), \min(f_1, f_2)\) are Lipschitz. By induction, \(\max_{i\in [n]}f_i, \min_{i\in [n]}f_i\) are Lipschitz.
\end{proof}

\begin{lemma}
\label{theorem:b4}
    Let open \(\mathcal{B}\subseteq \mathbb{R}^d\), \(f_1, f_2, \ldots, f_n:\mathcal{B}\to \mathbb{R}\), and \(g=\max_{i\in [n]}(f_i),\ h=\min_{i\in [n]}(f_i)\). Let \(\mathbf{x}_0\in \mathcal{B}\) and \(I=\{i\in [n]: f_i(\mathbf{x}_0)=g(\mathbf{x}_0)\}, J=\{i\in [n]:f_i(\mathbf{x}_0)=h(\mathbf{x}_0)\}\). Suppose that \(f_i\):differentiable on \(\mathbf{x}_0, \forall i\in [n]\). Then, we have that 
    \[
    g:\text{diff. on }\mathbf{x}_0\Rightarrow \nabla g(\mathbf{x}_0)=\nabla f_i(\mathbf{x}_0), \forall i\in I.
    \]
    \[
    h:\text{diff. on }\mathbf{x}_0\Rightarrow \nabla h(\mathbf{x}_0)=\nabla f_j(\mathbf{x}_0), \forall j\in J.
    \]
\end{lemma}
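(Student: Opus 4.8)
The plan is to reduce both claims to the classical first-order optimality condition at an interior minimizer. I will treat the maximum in detail; the minimum is perfectly dual. Fix an arbitrary index $i\in I$, so that by definition of $I$ we have $f_i(\mathbf{x}_0)=g(\mathbf{x}_0)$, and introduce the auxiliary function $d:\mathcal{B}\to\mathbb{R}$ defined by $d(\mathbf{x})=g(\mathbf{x})-f_i(\mathbf{x})$.

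The first key observation is that $\mathbf{x}_0$ is a global minimizer of $d$. Indeed, since $g=\max_{k\in[n]}f_k$ is a pointwise maximum, we have $g(\mathbf{x})\geq f_i(\mathbf{x})$ for every $\mathbf{x}\in\mathcal{B}$, so $d(\mathbf{x})\geq 0$ throughout $\mathcal{B}$; on the other hand $d(\mathbf{x}_0)=g(\mathbf{x}_0)-f_i(\mathbf{x}_0)=0$ because $i\in I$. Hence $\mathbf{x}_0$ attains the (global, and in particular local) minimum value of $d$.

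The second step is to apply Fermat's rule. By hypothesis $g$ is differentiable at $\mathbf{x}_0$, and $f_i$ is differentiable at $\mathbf{x}_0$ by assumption, so $d=g-f_i$ is differentiable at $\mathbf{x}_0$ with $\nabla d(\mathbf{x}_0)=\nabla g(\mathbf{x}_0)-\nabla f_i(\mathbf{x}_0)$. Because $\mathcal{B}$ is open, $\mathbf{x}_0$ is an interior point, so the local minimum at $\mathbf{x}_0$ forces the first-order necessary condition $\nabla d(\mathbf{x}_0)=0$. This yields $\nabla g(\mathbf{x}_0)=\nabla f_i(\mathbf{x}_0)$, and since $i\in I$ was arbitrary, the equality holds for all $i\in I$. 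For the minimum, I would instead set $d(\mathbf{x})=f_j(\mathbf{x})-h(\mathbf{x})$ for a fixed $j\in J$; then $h=\min_k f_k\leq f_j$ gives $d\geq 0$, while $d(\mathbf{x}_0)=0$, and the identical argument produces $\nabla h(\mathbf{x}_0)=\nabla f_j(\mathbf{x}_0)$ for all $j\in J$.

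I do not expect a genuine obstacle here: the result is a direct consequence of optimality. The only points that require care are structural rather than computational, namely that the differentiability of $g$ (respectively $h$) at $\mathbf{x}_0$ must be taken as a hypothesis, since a pointwise maximum of differentiable functions need not be differentiable at kink points, and that the openness of $\mathcal{B}$ is exactly what guarantees $\mathbf{x}_0$ lies in the interior so that Fermat's rule applies. Both are already built into the statement, so the argument goes through cleanly.
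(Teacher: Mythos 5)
Your proof is correct and takes essentially the same approach as the paper: both rest on the observation that \(g - f_i \geq 0\) with equality at \(\mathbf{x}_0\), so the gradients must agree by first-order reasoning at an interior point. The paper merely inlines what you outsource to Fermat's rule -- it argues by contradiction, picking a direction \(\hat{\mathbf{v}}\) in which \(f_i - g\) has positive directional derivative and contradicting \(g(\mathbf{x}) \geq f_i(\mathbf{x})\), which is exactly the standard proof of the optimality condition you cite.
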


\begin{proof}
    We will prove only the part of the statement involving the maximum, with the proof of the statement for the minimum being similar. Suppose that \(g\) is differentiable on \(\mathbf{x}_0\) and that there exists \(i\in I\) such that \(\nabla g(\mathbf{x}_0) \neq \nabla f_i(\mathbf{x}_0)\). Then, we have that
    \begin{align*}
    & \nabla (f_i-g)(\mathbf{x}_0)\neq \mathbf{0}\\
    \Rightarrow \quad & \exists \hat{\mathbf{v}}:\nabla (f_i-g)(\mathbf{x}_0)\cdot \hat{\mathbf{v}}>0 \\
    \Rightarrow \quad & \exists \hat{\mathbf{v}}: \frac{\partial (f_i-g)}{\partial \hat{\mathbf{v}}}(\mathbf{x}_0) > 0
    \end{align*}
    But it also holds that \((f_i-g)(\mathbf{x}_0)=0\). Hence, for small enough \(t>0\), we have that
    \[
    (f_i-g)(\mathbf{x}_0+t\hat{\mathbf{v}}) > 0
    \]
    \[
    \Rightarrow f_i(\mathbf{x}_0+t\hat{\mathbf{v}})> g(\mathbf{x}_0+t\hat{\mathbf{v}})=\max_{j\in [n]}(f_j(\mathbf{x}_0 +t\hat{\mathbf{v}})),
    \]
    which is a contradiction. 
\end{proof}

\begin{rmk}
    In Lemma \ref{theorem:b4} it is implicitly proven that if \(g\) is differentiable at \(\mathbf{x}_0\), then the gradients of \(f_i\) at \(\mathbf{x}_0\) for \(i\in I\) will be equal. 
\end{rmk}

Next, we restate and prove Theorem \ref{theorem:1}. 


\first*

\begin{proof}
    We first prove the theorem for the case of no negative (anti-erosion or anti-dilation) terms and then extend it to cover that case as well. 
    
    We first prove that \(y\) is Lipschitz continuous. We use induction on the layers of the network to prove that the output of each unit is Lipschitz continuous. For the base case \(n=0\), notice that each input \(i\) can be thought of as a projection \(\mathbf{x}\to x_i\), which is Lipschitz continuous. Suppose that the outputs \(x^{(n)}_j\) of all units of the \(n\)-th layer are Lipschitz continuous. Without loss of generality, suppose the \(i\)-th unit of the (\(n+1\))-th layer is a max-plus MP. We have that \(x^{(n)}_j + w^{(n+1)}_{ij}\) is Lipschitz for all \(j\in [N^{(n)}]\), \(w^{(n+1)}_{i0}\) is Lipschitz, and hence by Lemma \ref{theorem:b3} \(x^{(n+1)}_i=w^{(n+1)}_{i0}\vee \max_{j\in [n]}(x^{(n)}_j+w^{(n+1)}_{ij})\) is Lipschitz continuous. Thus, by induction, the outputs of all units of the networks, including output \(y\) of the last unit, are Lipschitz continuous. 

    Next, we prove the stated form of the gradient. To simplify notation, we denote \(\mathbf{e}_0=\mathbf{0}=[0,\ldots,0]^\top\). 
    
    To prove our theorem, we use induction on the number of layers. For the base case, i.e. the input, we have that \(\nabla x_i=\mathbf{e}_i\) everywhere. Suppose that the claim holds true for the \(n\)-th layer. Take any MP unit of the \((n+1)\)-th layer, say the \(i\)-th unit, and suppose without loss of generality that it is a max-plus unit. We have that
    \[
    x^{(n+1)}_i(\mathbf{x})=w^{(n+1)}_{i0}\vee \max_{j}(x^{(n)}_j(\mathbf{x})+w^{(n+1)}_{ij}). 
    \]
    We set \(f^{(n+1)}_{i0}=w^{(n+1)}_{i0}, f^{(n+1)}_{ij}=x^{(n)}_j+w^{(n+1)}_{ij}\). Notice that by Rademacher's theorem, we have that \(x^{(n+1)}_i(\mathbf{x})\) is differentiable a.e. on \(\mathbb{R}^d\). In addition, by the inductive hypothesis, we have that \(x^{(n)}_j(\mathbf{x})\) is differentiable a.e. with (\(\nabla x^{(n)}_j(\mathbf{x}) = \mathbf{e}_{k_j(\mathbf{x})}\) for some \(k_j(\mathbf{x})\)) a.e., and hence \(f^{(n+1)}_{ij}\) is differentiable a.e. with (\(\nabla f^{(n+1)}_{ij} = \nabla x^{(n)}_j=\mathbf{e}_{k_j(\mathbf{x})}\) for some \(k_j(\mathbf{x})\)) a.e.. Moreover, \(w^{(n+1)}_{i0}\) is differentiable everywhere with \(\nabla f^{(n+1)}_{i0}=\nabla w^{(n+1)}_{i0}=\mathbf{e}_0\). This means that Lemma \ref{theorem:b4} holds a.e., and we have that a.e. \(\nabla x^{(n+1)}_i(\mathbf{x})=\nabla f^{(n+1)}_{ij}(\mathbf{x}), \forall j \in J\), where \(J=\{j:x^{(n+1)}_{i}(\mathbf{x})=f^{(n+1)}_{ij}(\mathbf{x})\}\). This means that a.e.: 1) \(k_j(\mathbf{x})=k(\mathbf{x}), \forall j\in J\), and 2) \(\nabla x^{(n+1)}_i(\mathbf{x})=\mathbf{e}_{k(\mathbf{x})}\). This concludes the induction. 

    Finally, we extend the proof to cover the case where there are also negative terms. If a function \(f\) is Lipschitz, then so is \(-f\), with \(\nabla(-f) = -\nabla f\) wherever it exists. Hence, both inductions go though once we account for any possible negation of intermediate outputs. 
\end{proof}

Notice that the above proof also gives us a way to find the derivative (i.e. the \(i=i(\mathbf{x})\) of the statement): It suffices to work backwards, following the paths where the maximum or the minimun is attained. If we end up with two or more "leafs" as inputs/biases, then the function is not differentiable. Otherwise, if we end up at a unique leaf, which can either be an input or a bias, then the index of the input (0 for the bias) is the \(i=i(\mathbf{x})\) we are looking for. Refer to Figure~\ref{fig:network_input_diff} for a differentiable example, and Figure~\ref{fig:network_input_nondiff} for a non-differentiable example. 

Combining Theorem \ref{theorem:1} and Lemma \ref{theorem:b2}, we obtain the following result.

\begin{theorem}
\label{theorem:b5}
    Networks that only use max-plus and min-plus MPs are not universal approximators.
\end{theorem}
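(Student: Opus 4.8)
The plan is to combine the structural constraint from Theorem~\ref{theorem:1} with the negative density result of Lemma~\ref{theorem:b2}, reducing the claim to a one-line inclusion argument. First I would fix a single real-valued output \(y:\mathbb{R}^d\to\mathbb{R}\) of an arbitrary network built only from max-plus and min-plus MPs. By Theorem~\ref{theorem:1}, \(y\) is Lipschitz continuous on all of \(\mathbb{R}^d\), and almost everywhere its gradient is either \(\mathbf{0}\) or a standard basis vector \(\mathbf{e}_i\). In either case \(\|\nabla y(\mathbf{x})\|_1\in\{0,1\}\), so in particular \(\|\nabla y(\mathbf{x})\|_1\le 1\) a.e.

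Next I would fix any compact, convex domain \(\mathcal{B}\subseteq\mathbb{R}^d\) with nonempty interior (e.g.\ a closed box or ball), since by Definition~\ref{definition:b0} it suffices to exhibit a single domain on which density fails in order to conclude non-universality. Restricting \(y\) to \(\mathcal{B}\) preserves both Lipschitz continuity and the gradient bound \(\|\nabla y\|_1\le 1\) a.e., so \(y|_{\mathcal{B}}\) lies precisely in the class \(\mathcal{F}\) of Lemma~\ref{theorem:b2}. Letting \(\mathcal{G}\) denote the class of all restrictions to \(\mathcal{B}\) of single outputs of such networks, I would conclude \(\mathcal{G}\subseteq\mathcal{F}\).

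Finally I would invoke monotonicity of density under inclusion: if \(\mathcal{G}\) were dense in the continuous functions on \(\mathcal{B}\) under \(\|\cdot\|_\infty\), then its superclass \(\mathcal{F}\supseteq\mathcal{G}\) would be dense as well, contradicting Lemma~\ref{theorem:b2}. Hence \(\mathcal{G}\) is not dense, i.e.\ the network class is not a universal approximator on \(\mathcal{B}\), and therefore not a universal approximator.

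There is essentially no hard step: the entire difficulty has already been absorbed into Theorem~\ref{theorem:1} (which produces the gradient constraint) and Lemma~\ref{theorem:b2} (which shows that the \(\ell_1\)-gradient-bounded class cannot approximate a steep linear function such as \(\mathbf{x}\mapsto 2x_1\)). The only points requiring care are (i) checking that both admissible gradient values \(\mathbf{0}\) and \(\mathbf{e}_i\) satisfy \(\|\cdot\|_1\le 1\), so that the output genuinely lands in \(\mathcal{F}\), and (ii) the elementary observation that a subclass of a non-dense class is itself non-dense. For a network with multiple outputs the argument applies verbatim to any single output coordinate.
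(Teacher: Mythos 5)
Your proposal is correct and follows essentially the same route as the paper's proof: invoke Theorem~\ref{theorem:1} to get Lipschitz continuity and \(\|\nabla y\|_1 \leq 1\) a.e., restrict to a compact convex domain with nonempty interior, and apply Lemma~\ref{theorem:b2}. Your explicit remark that a subclass of a non-dense class is non-dense merely spells out what the paper leaves implicit, so there is no substantive difference.
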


\begin{proof}
    We prove that they are not universal approximators on any compact, convex domain \(\mathcal{B}\subseteq \mathbb{R}^d\) with non-empty interior, which is stronger that not being universal approximators on \(\mathbb{R}^d\). Notice that by Theorem \ref{theorem:1} we have that a.e. either \(\nabla y(\mathbf{x})=0\) or \(\nabla y(\mathbf{x})=\mathbf{e}_{i(\mathbf{x})} \Rightarrow \|\nabla y(\mathbf{x})\|_1 \leq 1\). Hence, \(\|\nabla y(\mathbf{x})\|_1 \leq 1\) a.e. on \(\mathbb{R}^n\). If we restrict \(y\) on \(\mathcal{B}\), we have a compact, convex, domain \(\mathcal{B}\) with non-empty interior and a Lipschitz continuous \(y(\mathbf{x})\) with \(\|\nabla y(\mathbf{x})\|_1 \leq 1\) a.e. on \(\mathcal{B}\). From Lemma \ref{theorem:b2} it follows that networks that only use max-plus and min-plus MPs are not universal approximators on \(\mathcal{B}\). 
\end{proof}

Before we proceed with the proof of Theorem \ref{theorem:2}, we prove the following lemma. 

\begin{lemma}
\label{theorem:b6}
Let open \(\mathcal{B}\subseteq \mathbb{R}^d\), continuous \(f_1, \ldots, f_n:\mathcal{B} \to \mathbb{R}\), and \(g=\max_{i\in [n]}(f_i), h=\min_{i\in [n]}(f_i)\). If for some \(\mathbf{x}_0\in \mathcal{B}\) we have that the maximum is attained only at some \(j\in [n]\) (i.e. \(f_j(\mathbf{x}_0)>f_i(\mathbf{x}_0),\forall i\neq j\)) and \(f_j:\) differentiable on \(\mathbf{x}_0\), then \(g:\) differentinable on \(\mathbf{x}_0\) with \(\nabla g(\mathbf{x}_0)=\nabla f_j(\mathbf{x}_0)\). Similarly for minimum. 
\end{lemma}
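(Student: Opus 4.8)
The plan is to show that $g$ coincides with $f_j$ on an entire neighborhood of $\mathbf{x}_0$, which immediately transfers differentiability and the value of the gradient from $f_j$ to $g$. The whole argument rests on the strict-domination hypothesis together with the continuity of the finitely many functions $f_i$.

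First I would quantify the gap at $\mathbf{x}_0$. Since the maximum is attained only at $j$, I set $\delta = \min_{i \neq j}\big(f_j(\mathbf{x}_0) - f_i(\mathbf{x}_0)\big)$; because the index set $[n]$ is finite and each difference is strictly positive, we have $\delta > 0$. Next, using continuity of each $f_i$ at $\mathbf{x}_0$ and the fact that $\mathcal{B}$ is open, I would choose a neighborhood $U \subseteq \mathcal{B}$ of $\mathbf{x}_0$ on which $|f_i(\mathbf{x}) - f_i(\mathbf{x}_0)| < \delta/2$ holds simultaneously for every $i \in [n]$ (intersecting the finitely many neighborhoods supplied by continuity). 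On $U$ this forces, for each $i \neq j$, the bound $f_j(\mathbf{x}) - f_i(\mathbf{x}) > \big(f_j(\mathbf{x}_0) - f_i(\mathbf{x}_0)\big) - \delta \geq 0$, so that $f_j$ strictly dominates every competing $f_i$ throughout $U$.

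Consequently $g(\mathbf{x}) = \max_{i} f_i(\mathbf{x}) = f_j(\mathbf{x})$ for all $\mathbf{x} \in U$. Since $g$ agrees with $f_j$ on the open set $U$ and $f_j$ is differentiable at $\mathbf{x}_0$, and since differentiability is a purely local property, $g$ is differentiable at $\mathbf{x}_0$ with $\nabla g(\mathbf{x}_0) = \nabla f_j(\mathbf{x}_0)$. The statement for the minimum follows by the symmetric argument, or directly by applying the maximum case to $-f_1, \ldots, -f_n$ and using $\min_i f_i = -\max_i(-f_i)$.

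There is no serious obstacle here; the only point requiring care is ensuring that a single dominating neighborhood $U$ works uniformly across all competing indices. This is exactly where the finiteness of $[n]$ enters: it lets me take one positive $\delta$ and one finite intersection of neighborhoods, rather than having to control a supremum over infinitely many functions, which is why the lemma is stated for a finite collection.
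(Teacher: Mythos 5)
Your proof is correct and follows essentially the same route as the paper's: use continuity and the strict gap at \(\mathbf{x}_0\) to obtain a neighborhood on which \(f_j\) strictly dominates all other \(f_i\), so that \(g\) coincides with \(f_j\) locally and differentiability transfers immediately. You merely make explicit the \(\delta/2\) bookkeeping that the paper leaves implicit, and your reduction of the minimum case via \(\min_i f_i = -\max_i(-f_i)\) is a valid alternative to the paper's "similar argument" remark.
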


\begin{proof}
    We only prove the statement for the maximum, with the minimum being similar. Since \(f_i\): continuous on \(\mathbf{x}_0\) and \(f_j(\mathbf{x}_0) > f_i(\mathbf{x}_0), \forall i\neq j\), we have that in a small enough neighborhood \(N(\mathbf{x}_0, \varepsilon)\) it holds that \(f_j(\mathbf{x})>f_i(\mathbf{x}), \forall i\neq j, \mathbf{x}\in N(\mathbf{x}_0, \varepsilon)\). Then \(g(\mathbf{x})=f_j(\mathbf{x}), \forall \mathbf{x}\in N(\mathbf{x}_0, \varepsilon) \Rightarrow g:\) differentiable on \(\mathbf{x}_0\) with \(\nabla g(\mathbf{x}_0)=\nabla f_j(\mathbf{x}_0)\). 
\end{proof}

In a similar fashion to Theorem~\ref{theorem:1}, we can prove Theorem~\ref{theorem:2}.

\second*

\begin{proof}
    We work backwards, following the paths where the maximum and the minimum is attained. If the maximum or the minimum is attained only for one argument, we keep track of the argument's "slack" and continue the path from this argument. We continue this process until either 1) we reach a "leaf" (i.e. an input or a bias), or 2) we find a maximum or minimum that is attained for multiple arguments and we get a "split". 
    
    In the second case, we have that the output is not differentiable with respect to the weights: If the split happens for the path of the \(k\)-th output, WLOG at a max-plus MP, and \(w^{(n)}_{ij}\) is a weight such that the maximum is attained at \(j\), then by letting a small \(dw>0\), smaller than the minimum of the recorded slacks, and \((w^{(n)}_{ij})'=w^{(n)}_{ij}+dw\), we have that all the units along the path are incremented by \(dw\), and \(y_k'=y_k+dw\Rightarrow dy_k=dw\). If, on the other hand, we let \(dw<0\), then the MP unit where the split happened remains unchanged and \(y_k'=y_k\Rightarrow dy_k=0\). Hence, \(\mathbf{y}\) is not differentiable with respect to \(w^{(n)}_{ij}\). 

    In the first case, each output \(y_k\) has a path from \(y_k\) to a "leaf". We have that the output is differentiable with respect to the weights, and the statement holds. We will prove this using induction on the nodes of the path. We will prove the following stronger statement: If \(i_k(n)\geq 0\) is the unique node of layer \(n\) that belongs to the path of \(y_k\), and \(\text{path}_k(n)\) is the set of weights that belong to the path of \(y_k\) and are before the \(n\)-th layer, then for every \(n\) it holds that \(x^{(n)}_{i_k(n)}\) is differentiable with respect to the weights and \(\nabla_{\mathbf{w}} x^{(n)}_{i_k(n)}=\sum_{w\in \text{path}_k(n)} \mathbf{e}_w\) (here we denote \(x^{(n)}_0=0\)). Notice that since we have no "splits", paths can only merge, and hence the statement is indeed stronger. 

    Take the path of \(y_k\). The statement obviously holds true for the first node of the path, i.e. the leaf. Suppose it holds true for the \(n\)-th node. Then, we have that \(f_{i_k(n+1),i_k(n)}=x^{(n)}_{i_k(n)}+w^{(n+1)}_{i_k(n+1),i_k(n)}\) is differentiable in terms of the weights with \(\nabla_\mathbf{w} f_{i_k(n+1),i_k(n)}=\sum_{w\in \text{path}_k(n)} \mathbf{e}_w + \mathbf{e}_{w^{(n+1)}_{i_k(n+1),i_k(n)}}=\sum_{w\in \text{path}_k(n+1)} \mathbf{e}_w\). Also since the maximum or minimum is attained only for \(i_k(n)\), by Lemma \ref{theorem:b6} we have that \(x^{(n+1)}_{i_k(n+1)}\) is differentiable with respect to the weights, and \(\nabla_\mathbf{w} x^{(n+1)}_{i_k(n+1)} = \nabla_\mathbf{w} f_{i_k(n+1),i_k(n)} = \sum_{w\in \text{path}_k(n+1)}\mathbf{e}_w\), which concludes the induction. 
\end{proof}

The above proof also gives us a way to calculate the derivative with respect to any weight. Refer to Figure~\ref{fig:network_weights_diff} for a differentiable example, and to Figure~\ref{fig:network_weights_nondiff} for a non-differentiable example. 

Continuing, we restate and prove Theorem \ref{theorem:3}. We consider two cases: the non-activated DEP-based network (i.e. \(f^{(n)}=1,\forall n\)), and the activated DEP-based network with a common activation function (i.e. sigmoid as in \citep{araujo2017morphological}, tanh, ReLU, ELU, or Leaky ReLU). For both cases, we prove a stronger variant of Theorem \ref{theorem:3}. 

\setcounter{repeatedtheorem}{4}
\begin{repeatedtheorem}[non-activated]
For any non-activated DEP-based network with input \(\mathbf{x}\in \mathbb{R}^d\) and a single output \(y(\mathbf{x})\), we have that \(y(\mathbf{x})\) is Lipschitz continuous on \(\mathbb{R}^d\) and a.e. it holds that \( \nabla y(\mathbf{x}) \succeq 0, \|\nabla y(\mathbf{x})\|_{1} = 1 \).
\end{repeatedtheorem}

\begin{proof}
    First, we prove that \(y\) is Lipschitz continuous. We again use induction on the layers of the network to prove that the output of each unit is Lipschitz continuous. For the base case \(n=0\), notice that each input \(i\) can be thought of as a projection \(\mathbf{x}\to x_i\), which is Lipschitz continuous. Suppose that the outputs \(x^{(n)}_j\) of all units of the \(n\)-th layer are Lipschitz continuous. We have that \(x^{(n)}_j + w^{(n+1)}_{ij}\) is Lipschitz for all \(j\in [N^{(n)}]\), and hence by Lemma \ref{theorem:b3} \(f^{(n+1)}_i=\max_{j\in [n]}(x^{(n)}_j+w^{(n+1)}_{ij})\) is Lipschitz continuous for all \(i\in [N^{(n+1)}]\). Similarly, \(g^{(n+1)}_i=\min_{j\in [n]}(x^{(n)}_j+m^{(n+1)}_{ij})\) is Lipschitz continuous. Since scaling and addtion preserve Lipschitz continuity, we conclude that \(x^{(n+1)}_i=\lambda^{(n+1)}_i f^{(n+1)}_i + (1-\lambda^{(n+1)}_i) g^{(n+1)}_i\) is also Lipschitz continuous. Thus, by induction, the outputs of all units of the networks, including output \(y\) of the last unit, are Lipschitz continuous. 

    Next, we prove the remaining part of the theorem. To prove our theorem, we use induction on the number of layer. For the base case, i.e. the input, we have that \(\nabla x_i=\mathbf{e}_i\Rightarrow \nabla x_i \succeq 0, \|\nabla x_i\|_1=1\) everywhere. Take any DEP unit of the \((n+1)\)-th layer, say the \(i\)-th unit. We have that
    \[
    f^{(n+1)}_i(\mathbf{x})=\max_{j}(x^{(n)}_j(\mathbf{x})+w^{(n+1)}_{ij}),
    \]
    \[
    g^{(n+1)}_i(\mathbf{x})=\min_{j}(x^{(n)}_j(\mathbf{x})+m^{(n+1)}_{ij}),
    \]
    \[
    x^{(n+1)}_i(\mathbf{x})=\lambda^{(n+1)}_i f^{(n+1)}_i(\mathbf{x}) + (1-\lambda^{(n+1)}_i)g^{(n+1)}_i(\mathbf{x}).
    \]
    Notice that by Rademacher's theorem, we have that \(f^{(n+1)}_i\) and \(g^{(n+1)}_i\) are differentiable a.e. on \(\mathbb{R}^d\) (since they are Lipschitz, see previous paragraph). In addition, by the inductive hypothesis, we have that \(x^{(n)}_j\) is differentiable a.e., and hence \(x^{(n)}_j+w^{(n+1)}_{ij}\) and \(x^{(n)}_j+m^{(n+1)}_{ij}\) are differentiable a.e.. This means that Lemma \ref{theorem:b4} holds a.e., and we have that a.e. \(\nabla f^{(n+1)}_i(\mathbf{x})=\nabla (x^{(n)}_j(\mathbf{x})+w^{(n+1)}_{ij})= \nabla x^{(n)}_j(\mathbf{x}), \forall j\in J_1\), \(\nabla g^{(n+1)}_i(\mathbf{x})=\nabla (x^{(n)}_j(\mathbf{x})+w^{(n+1)}_{ij})= \nabla x^{(n)}_j(\mathbf{x}), \forall j\in J_2\), where \(J_1=\{j:f^{(n+1)}_i(\mathbf{x})=x^{(n)}_j(\mathbf{x})+w^{(n+1)}_{ij}\}, J_2=\{j:g^{(n+1)}_i(\mathbf{x})=x^{(n)}_j(\mathbf{x})+w^{(n+1)}_{ij}\}\). This means that a.e. \(x^{(n+1)}_i\) is differentiable, with \(\nabla x^{(n+1)}_i=\lambda^{(n+1)}_i\nabla x^{(n)}_{j_1} + (1-\lambda^{(n+1)}_i)\nabla x^{(n)}_{j_2}\), for some \(j_1\in J_1, j_2\in J_2\). Since \(\lambda^{(n+1)}_i\in [0, 1]\) and a.e. \(\nabla x^{(n)}_{j_1}\succeq 0, \nabla x^{(n)}_{j_2}\succeq 0\), we have also that a.e. \(\nabla x^{(n+1)}_i\succeq 0\). Therefore, we also have a.e. that \(\|\nabla x^{(n+1)}_i\|_1=\sum_{k}(\nabla x^{(n+1)}_i)_k=\sum_{k}(\lambda^{(n+1)}_i\nabla x^{(n)}_{j_1}+(1-\lambda^{(n+1)}_i)\nabla x^{(n)}_{j_2})_k=\lambda^{(n+1)}_i\sum_{k}(\nabla x^{(n)}_{j_1})_k+(1-\lambda^{(n+1)}_i)\sum_{k}(\nabla x^{(n)}_{j_2})_k=\lambda^{(n+1)}_i + 1-\lambda^{(n+1)}_i=1\). This concludes the induction. 
\end{proof}

At this point, we should note that using standard non-linear activations, as is done by some existing work such as \citep{araujo2017morphological}, which uses a sigmoid function, does not solve the problem (and if anything, it makes it worse). 

\setcounter{repeatedtheorem}{4}
\begin{repeatedtheorem}[activated]
    For activated DEP-based networks with \(L\) layers in total, of which \(\widetilde{L}\) are activated by a common activation function \(f\) with \(0 \leq f' \leq s \leq 1\) a.e., input \(\mathbf{x}\in \mathbb{R}^d\) and a single output \(y(\mathbf{x})\), we have that \(y(\mathbf{x})\) is Lipschitz continuous on \(\mathbb{R}^d\) and a.e. it holds that \( \nabla y(\mathbf{x}) \succeq 0, \|\nabla y(\mathbf{x})\|_{1} \leq s^{\widetilde{L}} \leq 1\). 
\end{repeatedtheorem}

\begin{proof}
    We denote with \(y^{(n)}_i\) the non-activated outputs, and \(x^{(n)}_i\) the outputs after activation (where as activation we can either have a common activation function or an identity). Again, we denote with \(f^{(n)}_i\) the output of the maximum operations, and by \(g^{(n)}_i\) the output of the minimum operation. 

    Because the composition of Lipschitz continuous functions is Lipschitz continuous, and the common activation functions are Lipschitz, with a proof similar to that of the previous theorem it follows immediately that both \(y^{(n)}_i\) and \(x^{(n)}_i\) are Lipschitz continuous for all \(n, i\). 

    We will prove that after \(l\leq \widetilde{L}\) activated layers, we have that a.e. it holds that \(\nabla x^{(n)}_i \succeq 0, \|\nabla x^{(n)}_i\|_1 \leq s^l\). We will prove this with induction. The base case is similar to that of the previous theorem. Suppose that the statement holds for the \(n\)-th layer, i.e. if \(l\) activated layers precede the \(n\)-th output, then a.e. it holds that \(\nabla x^{(n)}_i \succeq 0, \|\nabla x^{(n)}_i\|_1 \leq s^l\). With an argument similar to that of the previous theorem, we have that a.e. \(\nabla y^{(n+1)}_i = \lambda^{(n+1)}_i \nabla x^{(n)}_{j_1} + (1-\lambda^{(n+1)}_i) \nabla x^{(n)}_{j_2}\), for some \(j_1\in J_1, j_2\in J_2\), where \(J_1, J_2\) are defined as in the above proof. Since \(\lambda^{(n+1)}_i\in [0,1]\) and a.e. \(\nabla x^{(n)}_{j_1} \succeq 0, \nabla x^{(n)}_{j_2} \succeq 0\), we also have that a.e. \(\nabla y^{(n+1)}_i\succeq 0\). Also, a.e. \(\|\nabla y^{(n+1)}_i\|_1 \leq \lambda^{(n+1)}_i \|\nabla x^{(n)}_{j_1}\|_1 + (1-\lambda^{(n+1)}_i)\|\nabla x^{(n)}_{j_2}\|_1 \leq \lambda^{(n+1)}_i s^l + (1-\lambda^{(n+1)}_i)s^l=s^l\). If the \((n+1)\)-th layer is not activated, then we have that \(x^{(n+1)}_i=y^{(n+1)}_i\), and then, \(l\) activated layers precede the \((n+1)\)-th output, and we have that a.e. \(\nabla x^{(n+1)}_i = \nabla y^{(n+1)}_i\succeq 0\), \(\|\nabla x^{(n+1)}_i\|_1 = \|\nabla y^{(n+1)}_i\|_1 \leq s^l\). If the \((n+1)\)-th layer is activated, then by the chain rule, because the derivative of all common activation functions exists a.e. and is bounded by \(0\) and \(s\), and because \(\nabla y^{(n)}_i\succeq 0\), we have that a.e. \(\nabla x^{(n+1)}_i \succeq 0\), \(\|\nabla x^{(n+1)}_i\|_1 \leq s \|\nabla y^{(n+1)}_i\|_1 \leq s^{l+1} \). This concludes the induction. 
\end{proof}

Combining Theorem \ref{theorem:3} and Lemma \ref{theorem:b2}, we obtain the following result. 

\begin{theorem}
\label{theorem:b7}
    Existing DEP-based networks are not universal approximators. 
\end{theorem}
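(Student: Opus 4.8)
The plan is to mirror exactly the proof of Theorem \ref{theorem:b5}, combining the gradient constraint established in Theorem \ref{theorem:3} with the non-density result of Lemma \ref{theorem:b2}. The whole argument is a packaging step: the analytic content has already been proved, so I only need to extract a uniform gradient bound and feed it into the abstract obstruction lemma.

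First I would invoke both versions of Theorem \ref{theorem:3}. For any DEP-based network with a single output \(y(\mathbf{x})\), that theorem guarantees that \(y\) is Lipschitz continuous on \(\mathbb{R}^d\) and that a.e. \(\nabla y(\mathbf{x}) \succeq 0\). In the non-activated case the bound reads \(\|\nabla y(\mathbf{x})\|_1 = 1\), and in the activated case it reads \(\|\nabla y(\mathbf{x})\|_1 \leq s^{\widetilde{L}} \leq 1\). The only consequence I actually need is the common uniform bound \(\|\nabla y(\mathbf{x})\|_1 \leq 1\) a.e. on \(\mathbb{R}^d\), which holds in either case.

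Next I would fix an arbitrary compact, convex domain \(\mathcal{B} \subseteq \mathbb{R}^d\) with non-empty interior and restrict \(y\) to \(\mathcal{B}\). The restriction remains Lipschitz continuous, and the a.e. bound \(\|\nabla y(\mathbf{x})\|_1 \leq 1\) persists on \(\mathcal{B}\) (here convexity of \(\mathcal{B}\) ensures its boundary has measure zero, so "a.e. on \(\mathbb{R}^d\)" transfers to "a.e. on \(\mathcal{B}\)"). Hence every function realizable by a DEP-based network lies in the class \(\mathcal{F}\) of Lemma \ref{theorem:b2}. Applying that lemma, \(\mathcal{F}\) is not a universal approximator on \(\mathcal{B}\); since the DEP-realizable functions form a subclass of \(\mathcal{F}\), they cannot be universal approximators on \(\mathcal{B}\) either. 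As this holds for every such \(\mathcal{B}\), the conclusion is stronger than mere non-universality on \(\mathbb{R}^d\). Concretely, the target \(f(\mathbf{x}) = 2x_1\) has \(\|\nabla f\|_1 = 2 > 1\) and so cannot be uniformly approximated, exactly as noted after Theorem \ref{theorem:3}.

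I do not expect a genuine obstacle, since the differential analysis was already carried out in Theorem \ref{theorem:3}. The only point requiring care is unifying the activated and non-activated regimes under the single inequality \(\|\nabla y\|_1 \leq 1\) before handing off to Lemma \ref{theorem:b2}, together with the measure-theoretic remark that restricting to the convex set \(\mathcal{B}\) preserves the almost-everywhere gradient bound.
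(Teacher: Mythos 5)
Your proposal is correct and follows essentially the same route as the paper's own proof: extract the a.e.\ bound \(\|\nabla y(\mathbf{x})\|_1 \leq 1\) (valid in both the non-activated and activated cases of Theorem \ref{theorem:3}), restrict to a compact, convex domain with non-empty interior, and apply Lemma \ref{theorem:b2}. Your additional remarks on unifying the two regimes and on convexity preserving the almost-everywhere bound are points the paper handles implicitly (the latter inside the proof of Lemma \ref{theorem:b2}), so there is no substantive difference.
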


\begin{proof}
    We prove that they are not universal approximators on any compact, convex domain \(\mathcal{B}\subseteq \mathbb{R}^d\) with non-empty interior, which is stronger that not being universal approximators on \(\mathbb{R}^d\). Notice that by Theorem \ref{theorem:3} we have that a.e. \(\nabla y(\mathbf{x})\succeq 0, \|\nabla y(\mathbf{x})\|_1 \leq 1\). Hence, \(\|\nabla y(\mathbf{x})\|_1 \leq 1\) a.e. on \(\mathbb{R}^n\). If we restrict \(y\) on \(\mathcal{B}\), we have a compact, convex, domain \(\mathcal{B}\) with non-empty interior and a Lipschitz continuous \(y(\mathbf{x})\) with \(\|\nabla y(\mathbf{x})\|_1 \leq 1\) a.e. on \(\mathcal{B}\). From Lemma \ref{theorem:b2} it follows that DEP-based networks are not universal approximators on \(\mathcal{B}\). 
\end{proof}

At this point, it is important to note that Theorem \ref{theorem:b7} can be easily generalized to hold for any activation function with a bounded derivative, which also includes more exotic activation functions. 

\section{Proposed networks are universal approximators}
\label{appendix:c}

In this appendix we provide the proofs of Theorems \ref{theorem:4} and \ref{theorem:5} regarding the universality of our proposed networks. 

\fourth*

First, we give an overview of the ideas of the proof. The main idea is that, if \(C\) is a large enough constant, then for some MPM unit, by letting some input weights hover around \(+C\), some other input weights hover around \(-C\) and the rest to hover around \(0\), we effectively untangle the MPM unit: The third type of weights are rendered idle, the first type of weights contribute only to the maximum, the second type of weights contribute only to the minimum, and after summing the result the \(+C\) and \(-C\) cancel out. This way, we can control which inputs contribute to the maximum, which to the nimimum, and which to neither. This way, we can do the following: 1) We can build functions of the form \(\mathbf{a}^\top \mathbf{x} + b\), 2) We can split the network into parallel networks each of which build their own function, 3) we can feed some of them into a maximum, and the rest into a minimum, to build a function of the form \(\max_{i\in [n]}(\mathbf{a}^\top_i \mathbf{x} + b_i)+\min_{i\in [n+m]-[n]}(\mathbf{a}^\top_i \mathbf{x} + b_i)\), which can be shown to be a universal approximator based on the proof of universality of the Maxout networks \citep{goodfellow2013maxout}. 

\begin{proof}
    A byproduct of the proof of universality of Maxout networks, is that the class of functions of the form \(\max_{k\in [K]}(\mathbf{a}^\top_k\mathbf{x} + b_k) - \max_{m\in [M]}(\mathbf{c}^\top_m\mathbf{x}+d_m)=\max_{k\in [K]}(\mathbf{a}^\top_k\mathbf{x} + b_k) + \min_{m\in [M]}((-\mathbf{c})^\top_m\mathbf{x}-d_m)=\max_{k\in [K]}(\mathbf{a}^\top_k\mathbf{x} + b_k) + \min_{m\in [M]}(\tilde{\mathbf{c}}^\top_m\mathbf{x}+\tilde{d}_m)\) is a universal approximator on \(\mathbb{R}^d\). Thus, it suffices to prove that over any compact domain, we can build any function of the above form using an MPM. We restrict ourselves further by assuming that \(\mathbf{w}^{(n)}_{0}=\mathbf{m}^{(n)}_0\), for all layers \(n\).

    First, we show how we can build linear functions of the form \(\mathbf{a}^\top \mathbf{x}+b\). Let a bounded domain \(\mathcal{B}\) and let some \(R>0\) such that \(\mathcal{B} \subseteq B_1(\mathbf{0}, R)\), where \(B_1(\mathbf{0}, R)\) denotes the \(\ell_1\) ball centered at the origin with \(\ell_1\) radius \(R\). Let \(C>0\) be a large constant to be determined. Write \(\mathbf{a}^\top \mathbf{x} + b=a_1x_1+\ldots+a_dx_d+b\). First, we build each \(a_ix_i\) term. Take the first MPM layer to have \(d\) outputs and \(d\) inputs, with weights
    \[
    w^{(1)}_{ii}=+C,
    \]
    \[
    w^{(1)}_{ij}=0, \quad i\neq j, j\neq 0,
    \]
    \[
    w^{(1)}_{i0} = -C.
    \]
    Then, for the \(i\)-th output we have that
    \begin{gather*}
    x_i+w^{(1)}_{ii} = x_i + C > x_j + 0 = x_j+w^{(1)}_{ij}, \quad j\neq i,\\
    x_i+w^{(1)}_{ii} = x_i + C > -C = w^{(1)}_{i0}, \\
    x_j+w^{(1)}_{ij} = x_j > -C = w^{(n)}_{i0} \\
    \Rightarrow w^{(1)}_{i0}\vee \max_{j\in [d]} (x_j+w^{(1)}_{ij})=x_i+C, \\
    w^{(1)}_{i0}\wedge \min_{j\in [d]} (x_j+w^{(1)}_{ij})=-C \\
    \Rightarrow y^{(1)}_i = (w^{(1)}_{i0}\vee \max_{j\in [d]} (x_j+w^{(1)}_{ij})) + (w^{(1)}_{i0}\wedge \min_{j\in [d]} (x_j+w^{(1)}_{ij})) = x_i+C-C=x_i
    \end{gather*}
    For the inequalities to hold, it suffices that \(C>\|\mathbf{x}\|_1\). Take the activation of the first layer to be
    \[
    \alpha^{(1)}_i=a_i
    \]
    Then, for the \(i\)-th activated output we have that
    \begin{equation}
    \label{eq:uni_1}
    x^{(1)}_i=a_iy^{(1)}_i=a_ix_i
    \end{equation}
    Next, we will start summing up the terms. First, we sum up \(a_{d-1}x_{d-1}\) and \(a_dx_d\). Take a MPM layer with \(d-1\) outputs and \(d\) inputs. The weights of the second layer are as follows:
    \begin{gather*}
    w^{(2)}_{(d-1)(d-1)} = +C, \quad  w^{(2)}_{(d-1)d}=-C,\\
    w^{(2)}_{(d-1)j}=0, \quad \forall j<d-1, \\
    w^{(2)}_{ii}=+C, \quad i<d-1, \\
    w^{(2)}_{ij}=0, \quad i\neq j, i<d-1, j\neq 0, \\
    w^{(2)}_{i0} = -C, \quad i<d-1.
    \end{gather*}
    Similarly with before, for \(i<d-1\) we have that
    \begin{gather*}
    x^{(1)}_i+w^{(2)}_{ii} = x^{(1)}_i + C > x^{(1)}_j + 0 = x^{(1)}_j+w^{(2)}_{ij}, \  j\neq i, \\
    x^{(1)}_i+w^{(2)}_{ii} = x^{(1)}_i + C > -C = w^{(2)}_{i0}, \\
    x^{(1)}_j+w^{(2)}_{ij} = x^{(1)}_j > -C = w^{(2)}_{i0} \\
    \Rightarrow w^{(2)}_{i0}\vee \max_{j\in [n]} (x^{(1)}_j+w^{(2)}_{ij})=x^{(1)}_i+C, \\
    w^{(2)}_{i0}\wedge \min_{j\in [n]} (x^{(1)}_j+w^{(2)}_{ij})=-C \\
    \Rightarrow y^{(2)}_i = (w^{(2)}_{i0}\vee \max_{j\in [d]} (x^{(1)}_j+w^{(2)}_{ij})) + (w^{(2)}_{i0}\wedge \min_{j\in [d]} (x^{(1)}_j+w^{(2)}_{ij})) = x^{(1)}_i+C-C=x^{(1)}_i=a_ix_i
    \end{gather*}
    For \(i=d-1\) we get the sum of the terms \(a_{d-1}x_{d-1}, a_dx_d\). We have that
    \begin{gather*}
    x^{(1)}_{d-1} + w^{(2)}_{(d-1)(d-1)} = x^{(1)}_{d-1}+C > x^{(1)}_j + 0 = x^{(1)}_j+w^{(2)}_{(d-1)j}, j < d-1, \\
    x^{(1)}_{d-1} + w^{(2)}_{(d-1)(d-1)} = x^{(1)}_{d-1}+C > x^{(1)}_d - C = x^{(1)}_d+w^{(2)}_{(d-1)d}, \\
    x^{(1)}_{d-1} + w^{(2)}_{(d-1)(d-1)} = x^{(1)}_{d-1}+C > 0 = w^{(2)}_{(d-1)0} \\
    \Rightarrow w^{(2)}_{(d-1)0}\vee \max_{j\in [d]} (x^{(1)}_{d-1}+w^{(2)}_{(d-1)j})=x^{(1)}_{d-1}+C, \\
    x^{(1)}_{d} + w^{(2)}_{(d-1)d} = x^{(1)}_{d}-C < x^{(1)}_j + 0 = x^{(1)}_j+w^{(2)}_{(d-1)j}, j < d-1, \\
    x^{(1)}_{d} + w^{(2)}_{(d-1)d} = x^{(1)}_{d}-C < x^{(1)}_{d-1} + C = x^{(1)}_{d-1}+w^{(2)}_{(d-1)(d-1)}, \\
    x^{(1)}_{d} + w^{(2)}_{(d-1)d} = x^{(1)}_{d}-C < 0 = w^{(2)}_{(d-1)0} \\
    \Rightarrow w^{(2)}_{(d-1)0}\wedge \min_{j\in [d]} (x^{(1)}_{d-1}+w^{(2)}_{(d-1)j})=x^{(1)}_{d}-C \\
    \Rightarrow y^{(2)}_{d-1} = (w^{(2)}_{(d-1)0}\vee \max_{j\in [d]} (x^{(1)}_{d-1}+w^{(2)}_{(d-1)j})) + (w^{(2)}_{(d-1)0}\wedge \min_{j\in [d]} (x^{(1)}_{d-1}+w^{(2)}_{(d-1)j}))= x^{(1)}_{d-1}+C + x^{(1)}_{d}-C \\
    =x^{(1)}_{d-1}+x^{(1)}_d=a_{d-1}x_{d-1}+a_dx_d
    \end{gather*}
    For the inequalities to hold, it suffices that \(C>\|\mathbf{x}^{(1)}\|_1=|a_1x_1|+\ldots+|a_dx_d|\). Take the activation of the second layer to be
    \[
    \alpha^{(2)}_i=1
    \]
    Then, for the \(i\)-th activated output we have that
    \begin{equation}
    \label{eq:uni_2}
    x^{(2)}_i=y^{(2)}_i=a_ix_i, \quad i<d-1, \quad x^{(2)}_{d-1}=y^{(2)}_{d-1}=a_{d-1}x_{d-1}+a_dx_d
    \end{equation}

    We repeat this process for a total of \(d\) layers (\(1^{st}\) layer for multiplication with \(a_i\), the rest for adding the terms) until all the terms \(a_ix_i\) have been summed up and we end up with a single output. To add the bias term \(b\), we add one final layer with \(1\) output and \(1\) input, with weights defined as follows:
    \[
    w^{(d+1)}_{11} = b, \quad w^{(d+1)}_{10}=0
    \]
    Then, we have that
    \begin{gather}
    x^{(d+1)}_1=y^{(d+1)}_1=(w^{(d+1)}_{10} \vee (x^{(d)}_1+w^{(d+1)}_{11})) + (w^{(d+1)}_{10} \wedge (x^{(d)}_1+w^{(d+1)}_{11})) \notag \\ \label{eq:uni_3}
    = w^{(d+1)}_{10} + (x^{(d)}_1+w^{(d+1)}_{11}) = 0+x^{(d)}_1+b=a_1x_1+\ldots+a_dx_d+b
    \end{gather}
    For all the inequalities to hold, it suffices that the following hold:
    \[
    C > \|\mathbf{x}\|_1,
    \]
    \[
    C>\|\mathbf{x}^{(1)}\|_1=|a_1x_1|+\ldots+|a_dx_d|, 
    \]
    \[
    C>\|\mathbf{x}^{(2)}\|_1=|a_1x_1|+\ldots +|a_{d-2}x_{d-2}| + |a_{d-1}x_{d-1}+a_dx_d|, 
    \]
    \[
    \vdots
    \]
    \[
    C> \|\mathbf{x}^{(d)}\|_1=|a_1x_1+\ldots+a_dx_d|
    \]
    For the above to hold, it suffices that the following holds
    \[
    C > (1+\max_{i\in [d]}|a_i|)\|\mathbf{x}\|_1
    \]
    We simply choose \(C= (1+\max_{i\in [d]}|a_i|)R\) and we are done. 
    
    So far we have shown how to built the single output function \(\mathbf{a}^\top \mathbf{x} + b\). We can also build networks with multiple outputs each of which is an affine function of the input. 
    
    Say we want to build a network which outputs the functions \(\mathbf{a}^\top_1 \mathbf{x}+b_1, \ldots, \mathbf{a}^\top_K \mathbf{x}+b_K\). We can use the exact same method as the one used for the single function \(\mathbf{a}^\top\mathbf{x} + b\). Specifically, we build "parallel" networks, each calculating a single function independent of each other. 
    
    Again, let a bounded domain \(\mathcal{B}\) and let some \(R>0\) such that \(\mathcal{B} \subseteq B_1(\mathbf{0}, R)\), where \(B_1(\mathbf{0}, R)\) denotes the \(\ell_1\) ball centered at the origin with \(\ell_1\) radius \(R\). Let \(C>0\) be a large constant to be determined. Write \(\mathbf{a}^\top_k \mathbf{x} + b_k = a_{k,1}x_1+\ldots+a_{k,d}x_d+b_k\). First, we build each \(a_{k,i}x_i\) term for all \(k\in [K], i\in [d]\). Take the first MPM layer to have \(Kd\) outputs and \(d\) inputs, with weights
    \[
    w^{(1)}_{(i+(k-1)d),i}=+C,
    \]
    \[
    w^{(1)}_{(i+(k-1)d),j}=0, \quad i\neq j, j\neq 0,
    \]
    \[
    w^{(1)}_{(i+(k-1)d),0} = -C,
    \]
    where \(k\in [K]\). Then, similarly to before, if \(C>\|\mathbf{x}\|_1\) we have that
    \[
    y^{(1)}_{i+(k-1)d}=x_i+C-C=x_i,
    \]
    where \(k\in [K]\). For the activations, take 
    \[
    \alpha^{(1)}_{i+(k-1)d}=a_{k,i}, \quad k\in [K], i\in [d].
    \]
    Then, for the (\(i+(k-1)d\))-th activated output we have that
    \[
    x^{(1)}_{i+(k-1)d}=a_{ki}y^{(1)}_{i+(k-1)d}=a_{k,i}x_i, \quad k\in [K], i\in [d]
    \]
    Next, we start summing up the terms. First, we take the sums \(a_{1,(d-1)}x_{d-1}+a_{1,d}x_d\), \ldots, \(a_{K,(d-1)}x_{d-1}+a_{K,d}x_d\). We use the same method as the one used previously. Take a MPM layer with \(K(d-1)\) outputs and \(Kd\) inputs. The weights of the second layer are as follows: 
    \[
    w^{(2)}_{k(d-1),k(d-1)} = +C, \quad  w^{(2)}_{k(d-1), kd}=-C,\quad w^{(2)}_{k(d-1),j}=0, \forall k\in [K], j\neq kd, k(d-1),
    \]
    \[
    w^{(2)}_{i+(k-1)(d-1),i+(k-1)(d-1)}=+C, \quad k\in [K], i<d-1,
    \]
    \[
    w^{(2)}_{i+(k-1)(d-1),j}=0, \quad k\in [K], i<d-1, j\neq i+(k-1)(d-1), j\neq 0,
    \]
    \[
    w^{(2)}_{i+(k-1)(d-1),0} = -C, \quad k\in [K], i<d-1,
    \]
    Similarly to before, if \(C>\|\mathbf{x}^{(1)}\|_1=|a_{11}x_1|+\ldots+|a_{1d}x_d|+\ldots+|a_{Kd}x_d|\), then we have
    \[
    y^{(2)}_{i+(k-1)(d-1)}=a_{ki}x_i, \quad k\in [K], i<d-1
    \]
    \[
    y^{(2)}_{k(d-1)}=a_{k,(d-1)}x_{d-1}+a_{k,d}x_d, \quad k\in [K]
    \]
    Take the activation of the second layer to be \(
    a^{(2)}_{i+(k-1)(d-1)}=1, \quad i\in [d-1], k\in[K].
    \)
    Then, for the \(i\)-th activated output we have that
    \[
    x^{(2)}_{i+(k-1)(d-1)}=y^{(2)}_{i+(k-1)(d-1)}=a_{k,i}x_i, \quad i<d-1, k\in [K], 
    \]
    \[
    \quad x^{(2)}_{k(d-1)}=y^{(2)}_{k(d-1)}=a_{k,(d-1)}x_{d-1}+a_{k,d}x_d, \quad k\in [K]
    \]
    We repeat this process for a total of \(d\) layers until all the terms have been summed up and we end up with \(K\) outputs. To add the bias terms \(b_k\) we add one final layer with \(K\) outputs and \(K\) inputs, with weights defined as follows:
    \[
    w^{(d+1)}_{kk}=b_k+C, \quad w^{(d+1)}_{k0}=-C, \quad w^{(d+1)}_{kj}=0, j\neq k
    \]
    Notice the slight deviation from the previous method. If \(C > \|\mathbf{x}^{(d)}\|_1+\max_{k\in [K]} |b_k|\), then we have that
    \[
    x^{(d+1)}_k=y^{(d+1)}_k=a_{k1}x_1+\ldots+a_{kd}x_d + b_k
    \]
    For the inequalities to hold, it suffices that the following hold:
    \[
    C > \|\mathbf{x}\|_1, 
    \]
    \[
    C > \|\mathbf{x}^{(1)}\|_1=|a_{11}x_1|+\ldots+|a_{1d}x_d|+\ldots+|a_{Kd}x_d|, 
    \]
    \[
    C>\|\mathbf{x}^{(2)}\|_1=|a_{11}x_1|+\ldots+|a_{1(d-1)}x_{d-1}+a_{1d}x_d|+\ldots+|a_{K(d-1}x_{d-1}+a_{Kd}x_d|,
    \]
    \[
    \vdots
    \]
    \[
    C>\|\mathbf{x}^{(d)}\|_1 + \max_{k\in [K]} |b_k|=|a_{11}x_1+\ldots+a_{1d}x_d|+\ldots+|a_{K1}x_1+\ldots+a_{Kd}x_d| + \max_{k\in [K]} |b_k|
    \]
    For the above to hold, it suffices that the following holds
    \[
    C>(1+K \max_{i\in [d], k\in [K]}|a_{ki}|)\|\mathbf{x}\|_1 + \max_{k\in [K]} |b_k|
    \]
    We simply take \(C>(1+K \max_{i\in [d], k\in [K]}|a_{ki}|)R + \max_{k\in [K]} |b_k|\) and we are done. 

    To finish the proof, we will prove that we can build functions of the form \(\max_{k\in [K]}(\mathbf{a}^\top_k \mathbf{x}+b_k)+\min_{m\in [M]}(\mathbf{c}^\top_m \mathbf{x}+d_m)\). First, using the previous method, build a network with \(K+M\) outputs, of which the first \(K\) outputs are the functions \(\mathbf{a}^\top_k \mathbf{x} + b_k, k\in [K]\), and the last \(M\) outputs are the functions \(\mathbf{c}^\top_m \mathbf{x} + d_m, m\in [M]\). Activate the final layer with an identity activation. Let \(C'>0\) be a large constant to be determined. Place another final layer with \(1\) output and \(K+M\) inputs. The weights of the new final layer are as follows
    \[
    w^{(d+2)}_{10}=0, \quad w^{(d+2)}_{1i}=+C', i\in [K], \quad w^{(d+2)}_{1(K+i)}=-C', i\in [M]
    \]
    Then, we have that
    \[
    \mathbf{a}^\top_k \mathbf{x} + b_k + C' > 0 > \mathbf{c}^\top_m \mathbf{x} + d_m,-C', k\in [K], m\in [M],
    \]
    and hence, the output will be given by
    \[
    x^{(d+2)}_1=y^{(d+2)}_1=\max_{k\in [K]}(\mathbf{a}^\top_k \mathbf{x} + b_k + C) + \min_{m\in [M]}(\mathbf{c}^\top \mathbf{x} + d_k - C) = \max_{k\in [K]}(\mathbf{a}^\top_k \mathbf{x} + b_k) + \min_{m\in [M]}(\mathbf{c}^\top \mathbf{x} + d_k).
    \]
    For the above inequalities to hold, it suffices that
    \[
    C'>(1+(K+M) \max(\max_{i\in [d], k\in [K]}|a_{ki}|, \max_{i\in [d], m\in [M]}|c_{mi}|))R + \max(\max_{k\in [K]} |b_k|, \max_{m\in [M]} |d_m|).
    \]
    This finishes the proof. 
\end{proof}

We should point out that for the purpose of simplifying the proof, we summed up the terms one-by-one, resulting in \(\Theta(d)\) layers. However, one can easily sum up the terms in a hierchical fashion, resulting in \(\Theta(\log(d))\) layers. In addition, in each layer, each unit has \(O(1)\) active inputs (inputs that can even win the maximum or minimum). In addition, we required width \(O(d)\) for each affine term, and \(O(dK)\) in total. Hence, we have \(O(d \log d \cdot K)\) total active parameters, a factor of \(O(\log d)\) then those required by the corresponding maxout construction of \citet{goodfellow2013maxout}.

The above proof can also be extended to the RMPM networks. It suffices to adjust steps \eqref{eq:uni_1}, \eqref{eq:uni_2}, and \eqref{eq:uni_3} to work with residual connections. We avoid another lengthy derivation, and instead show how these steps can be adjusted. 

\sixth*
\begin{proof}
    To avoid repetition with Theorem \ref{theorem:4}, in this proof wherever dots (\(\ldots\)) are used in equations, they indicate arguments that are dominated and never chosen by \(\max\) or \(\min\). Also, in contrast to Theorem \ref{theorem:4}, here the biases of the \(\max\) and the \(\min\) are not taken to be the same. 
    
    \eqref{eq:uni_1} sets the weights of the MPM appropriately so that the first layer performs scaling for each input. For the RMPM, if the first layer contains a residual skip connection, then \eqref{eq:uni_1} becomes
    \[
    x^{(1)}_i = x_i + \tilde{a}_i x_i = (1+\tilde{\alpha}_i) x_i. 
    \]
    For the RMPM we select \(\tilde{a}_i = a_i - 1\), and then the outputs of the first layer are matching by using the same large constants. 

    \eqref{eq:uni_2} performs either a pass-through for a variable, or pair-wise addition of others. Since we now have skip connections, we can instead set outputs to 0 or allow a pass-through respectively to get the same behavior. In particular, for \(i < d-1\), instead of setting \(w^{(2)}_{ii} = +C\), we set \(w^{(2)}_{i0} = C \neq m^{(2)}_{i0} = -C\), and after including the residual skip we get
    \[
    y^{(2)}_i = x^{(1)}_i + (C \vee \max_{j \in [d]}(\ldots)) + (-C\wedge \min_{j\in [d]}(\ldots)) = x^{(1)}_i + C - C = x^{(1)}_i, 
    \]
    and we recover the same output as the MPM. For \(i=d-1\), instead of setting \(w^{(2)}_{(d-1)(d-1)} = +C\), we set \(w^{(2)}_{(d-1)0} = +C \neq m^{(2)}_{(d-1)0} = 0\), and after including the residual skip we get
    \[
    y^{(2)}_{d-1} = x^{(1)}_{d-1} + (+C \vee \max_{j \in [d]}(\ldots)) + (0\wedge \min(x^{(1)}_d-C, \ldots)) = x^{(1)}_{d-1} + C + x^{(1)}_d - C,
    \]
    and we again recover the same output as the MPM. Choosing the same large constants suffices. 

    Finally, \eqref{eq:uni_3} adds a bias to the input. Instead of setting \(w^{(d+1)}_{11}=b\), we set \(w^{(d+1)}_{10} = C+b \neq m^{(d+1)}_{10} = -C\), and after including the residual skip we get
    \[
    x^{(d+1)}_1 = x^{(d)}_i + (b+C \vee \max(\ldots))+(-C \wedge \min(\ldots)) = x^{(d)}_i+b,
    \]
    and we recover the same output as the MPM. Choosing the same large constants suffices. 

    With \eqref{eq:uni_1}, \eqref{eq:uni_2}, and \eqref{eq:uni_3} properly adjusted, the remainder of the proof of Theorem \ref{theorem:4} can also be adjusted since it uses the same exact decompositions (with the exception of the final merging layer, which we took by assumption to not have a residual skip connection). 
\end{proof}

Next, we proceed with our theorem on the universality of Hybrid-MLP. We are going to need the following two lemmas regarding ReLU and Maxout networks. 

\begin{lemma}
\label{theorem:c1}
    Suppose we are given a ReLU activated fully connected linear network with \(L\) linear layers \((A^{(n)}, b^{(n)}),n\in [L]\), which can be recursively defined as follows
    \[
    \mathbf{y}^{(n)}=\mathbf{A}^{(n)}\mathbf{x}^{(n-1)}+\mathbf{b}^{(n)}, \quad n\in [L],
    \]
    \[
    \mathbf{x}^{(n)}=\max(\mathbf{0},\mathbf{y}^{(n)}), \quad n\in [L-1], \quad \mathbf{x}^{(L)}=\mathbf{y}^{(L)}
    \]
    where max is taken element-wise. Then, for every \(n\in [L]\) it holds that
    \[
    \|\mathbf{y}^{(n)}\|_1 \leq \left( \prod_{i=1}^n \|\mathbf{A}^{(i)}\|_1 \right) \|\mathbf{x}\|_1 + \sum_{i=1}^n\left(\prod_{j=i+1}^n \|\mathbf{A}^{(j)}\|_1\right) \|\mathbf{b}^{(i)}\|_1. 
    \]
\end{lemma}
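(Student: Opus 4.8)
The plan is to prove the bound by induction on $n$, relying on two elementary facts. The first is the submultiplicativity of the induced $\ell_1$ operator norm, $\|A\mathbf{v}\|_1 \le \|A\|_1\|\mathbf{v}\|_1$. The second is the nonexpansiveness of the ReLU in the $\ell_1$ norm: since for each coordinate $0 \le \max(0, y_i) \le |y_i|$, we get $\|\mathbf{x}^{(n)}\|_1 = \|\max(\mathbf{0}, \mathbf{y}^{(n)})\|_1 \le \|\mathbf{y}^{(n)}\|_1$ for $n \in [L-1]$. Combined with the triangle inequality, these two facts yield the single-layer recursion $\|\mathbf{y}^{(n+1)}\|_1 \le \|A^{(n+1)}\|_1\,\|\mathbf{y}^{(n)}\|_1 + \|\mathbf{b}^{(n+1)}\|_1$, which is the engine driving the induction.

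For the base case $n=1$, writing $\mathbf{x}=\mathbf{x}^{(0)}$ for the input, the triangle inequality and submultiplicativity give $\|\mathbf{y}^{(1)}\|_1 = \|A^{(1)}\mathbf{x} + \mathbf{b}^{(1)}\|_1 \le \|A^{(1)}\|_1\|\mathbf{x}\|_1 + \|\mathbf{b}^{(1)}\|_1$, which matches the claimed formula once the empty product $\prod_{j=2}^{1}$ is read as $1$.

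For the inductive step I would assume the bound holds for $n$, then apply the single-layer recursion obtained above (using $\|\mathbf{x}^{(n)}\|_1 \le \|\mathbf{y}^{(n)}\|_1$) to get $\|\mathbf{y}^{(n+1)}\|_1 \le \|A^{(n+1)}\|_1\,\|\mathbf{y}^{(n)}\|_1 + \|\mathbf{b}^{(n+1)}\|_1$, and finally substitute the inductive hypothesis for $\|\mathbf{y}^{(n)}\|_1$. Multiplying the hypothesis through by $\|A^{(n+1)}\|_1$ turns $\prod_{i=1}^{n}\|A^{(i)}\|_1$ into $\prod_{i=1}^{n+1}\|A^{(i)}\|_1$ and each coefficient $\prod_{j=i+1}^{n}\|A^{(j)}\|_1$ into $\prod_{j=i+1}^{n+1}\|A^{(j)}\|_1$; the leftover term $\|\mathbf{b}^{(n+1)}\|_1$ is exactly the $i=n+1$ summand, its coefficient being the empty product $\prod_{j=n+2}^{n+1} = 1$. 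The pieces thus reassemble into $\sum_{i=1}^{n+1}\bigl(\prod_{j=i+1}^{n+1}\|A^{(j)}\|_1\bigr)\|\mathbf{b}^{(i)}\|_1$, completing the induction.

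The only step requiring any care is the index bookkeeping in the telescoping products: one must apply the empty-product convention consistently so that the freshly multiplied factor $\|A^{(n+1)}\|_1$ and the appended bias norm land in the correct positions. There is no genuine analytic difficulty here — the entire substance of the lemma is carried by the two inequalities noted in the first paragraph, and in particular by the observation that the ReLU cannot increase the $\ell_1$ norm.
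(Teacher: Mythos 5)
Your proof is correct and takes essentially the same route as the paper's: both rest on the triangle inequality, submultiplicativity of the induced \(\ell_1\) norm, and the nonexpansiveness of ReLU, combined via induction. The only cosmetic difference is that the paper runs the induction on \(\|\mathbf{x}^{(n)}\|_1\) and converts to \(\|\mathbf{y}^{(n)}\|_1\) at the end, whereas you induct directly on \(\|\mathbf{y}^{(n)}\|_1\); the two bookkeeping schemes are equivalent.
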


\begin{proof}
    We will prove that for every \(n\in \{0,\ldots,L-1\}\) it holds that
    \[
    \|\mathbf{x}^{(n)}\|_1 \leq \left( \prod_{i=1}^n \|\mathbf{A}^{(i)}\|_1 \right) \|\mathbf{x}\|_1 + \sum_{i=1}^n\left(\prod_{j=i+1}^n \|\mathbf{A}^{(j)}\|_1\right) \|\mathbf{b}^{(i)}\|_1. 
    \]
    For \(n=0\) the inequality obviously holds. For \(n\in [L-1]\) we have that
    \[
    \|\mathbf{x}^{(n)}\|_1 = \|\max(\mathbf{0}, \mathbf{y}^{(n)})\|_1 = \sum_{j} |\max(0, y^{(n)}_j)| \leq \sum_{j} |y^{(n)}_j| = \|\mathbf{y}^{(n)}\|_1 = \|\mathbf{A}^{(n)} \mathbf{x}^{(n-1)}+\mathbf{b}^{(n)}\|_1.
    \]
    By the sub-additivity of vector norm \(\|\cdot\|_1\) and sub-multiplicativity of matrix operator norm \(\|\cdot\|_1\) we have that
    \[
    \|\mathbf{x}^{(n)}\|_1 \leq \|\mathbf{A}^{(n)}\|_1 \|\mathbf{x}^{(n-1)}\|_1 + \|\mathbf{b}^{(n)}\|_1
    \]
    By expanding the recurrence, we have that
    \[
    \|\mathbf{x}^{(n)}\|_1 \leq \|\mathbf{A}^{(n)}\|_1 \|\mathbf{x}^{(n-1)}\|_1 + \|\mathbf{b}^{(n)}\|_1 
    \]
    \[
    \leq \|\mathbf{A}^{(n)}\|_1 \|\mathbf{A}^{(n-1)}\|_1 \|\mathbf{x}^{(n-2)}\|_1 + \|\mathbf{b}^{(n)}\|_1 + \|\mathbf{A}^{(n)}\|_1 \|\mathbf{b}^{(n-1)}\|_1 
    \]
    \[
    \leq \ldots \leq \left( \prod_{i=1}^n \|\mathbf{A}^{(i)}\|_1 \right) \|\mathbf{x}\|_1 + \sum_{i=1}^n\left(\prod_{j=i+1}^n \|\mathbf{A}^{(j)}\|_1\right) \|\mathbf{b}^{(i)}\|_1. 
    \]
    Then, for every \(n\in [L]\) we have that
    \[
    \|\mathbf{y}^{(n)}\|_1 \leq \|\mathbf{A}^{(n)}\|_1 \|\mathbf{x}^{(n-1)}\|_1 + \|\mathbf{b}^{(n)}\|_1, 
    \]
    and the result follows immediately. 
\end{proof}

\begin{lemma}
\label{theorem:c2}
    Suppose we are given a Maxout fully connected linear network with \(L\) layers and pooling of \(P\) which can be recursively defined as follows
    \[
    \mathbf{y}^{(n)}_p=\mathbf{A}^{(n)}_p\mathbf{x}^{(n-1)}+\mathbf{b}^{(n)}_p, \quad n\in [L-1], p\in [P],
    \]
    \[
    \mathbf{x}^{(n)}=\max_{p\in [P]}(\mathbf{y}^{(n)}_p), \quad n\in [L-1], \quad \mathbf{x}^{(L)}=\mathbf{y}^{(L)} = \mathbf{A}^{(L)}\mathbf{x}^{(L-1)} + \mathbf{b}^{(L)}
    \]
    where max is taken element-wise. Then, for every \(n\in [L-1]\) it holds that
    \[
    \sum_{p=1}^P \|\mathbf{y}^{(n)}_p\|_1 \leq \left( \prod_{i=1}^n \sum_{p\in [P]}\|\mathbf{A}^{(i)}_p\|_1 \right) \|\mathbf{x}\|_1 + \sum_{i=1}^n\left(\prod_{j=i+1}^n \sum_{p\in [P]}\|\mathbf{A}^{(j)}_p\|_1\right) \left(\sum_{p\in [P]}\|\mathbf{b}^{(i)}_p\|_1\right)
    \]
\end{lemma}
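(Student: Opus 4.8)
The plan is to mirror the proof of Lemma \ref{theorem:c1}, replacing the single-matrix recurrence with one aggregated over the pooling index \(p\). Writing \(M_i = \sum_{p\in[P]}\|\mathbf{A}^{(i)}_p\|_1\), \(B_i = \sum_{p\in[P]}\|\mathbf{b}^{(i)}_p\|_1\), \(S_n := \sum_{p\in[P]}\|\mathbf{y}^{(n)}_p\|_1\), and setting \(\mathbf{x}^{(0)}=\mathbf{x}\), the claimed inequality is exactly the closed form of the recurrence \(S_n \le M_n S_{n-1} + B_n\) with \(S_0 = \|\mathbf{x}\|_1\). So I would first establish this one-step recurrence and then expand it by induction on \(n\).

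The only place where the Maxout structure genuinely enters is in controlling \(\|\mathbf{x}^{(n)}\|_1\) by the previous-layer quantities. For ReLU the key fact was \(|\max(0,y)|\le |y|\); the Maxout analog is the element-wise bound \(|\max_{p} y^{(n)}_{p,j}| \le \max_{p}|y^{(n)}_{p,j}| \le \sum_{p}|y^{(n)}_{p,j}|\). Summing over the coordinate \(j\) gives \(\|\mathbf{x}^{(n)}\|_1 = \sum_j |\max_p y^{(n)}_{p,j}| \le \sum_j\sum_p |y^{(n)}_{p,j}| = \sum_p \|\mathbf{y}^{(n)}_p\|_1 = S_n\). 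This is precisely why a \emph{sum} over \(p\) appears on the left-hand side and inside every factor of the bound, in contrast to Lemma \ref{theorem:c1} where no such pooling sum is present.

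Next, for the one-step recurrence, by sub-additivity of the vector norm \(\|\cdot\|_1\) and sub-multiplicativity of the induced matrix operator norm \(\|\cdot\|_1\), each term satisfies \(\|\mathbf{y}^{(n)}_p\|_1 = \|\mathbf{A}^{(n)}_p\mathbf{x}^{(n-1)}+\mathbf{b}^{(n)}_p\|_1 \le \|\mathbf{A}^{(n)}_p\|_1\|\mathbf{x}^{(n-1)}\|_1 + \|\mathbf{b}^{(n)}_p\|_1\). Summing over \(p\) yields \(S_n \le M_n\|\mathbf{x}^{(n-1)}\|_1 + B_n\). Applying the coordinate bound \(\|\mathbf{x}^{(n-1)}\|_1 \le S_{n-1}\) from the previous paragraph (valid for \(n\ge 2\)), together with \(\|\mathbf{x}^{(0)}\|_1 = \|\mathbf{x}\|_1\) for the base case \(n=1\), gives \(S_n \le M_n S_{n-1} + B_n\).

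Finally I would expand this recurrence by induction on \(n\), exactly as in the closing step of Lemma \ref{theorem:c1}. The base case \(n=1\) reads \(S_1 \le M_1\|\mathbf{x}\|_1 + B_1\), which matches the claim with the empty product \(\prod_{j=2}^{1}M_j = 1\). For the inductive step, multiplying the hypothesis by \(M_n\) and adding \(B_n\) reproduces the claimed closed form, with the new \(B_n\) term absorbed as the \(i=n\) summand via the empty product \(\prod_{j=n+1}^{n}M_j = 1\). I do not expect any real obstacle here: the only subtlety is the coordinate-wise max bound in the second paragraph, and the bookkeeping with empty products; everything else is a routine adaptation of the ReLU argument.
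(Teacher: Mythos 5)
Your proposal is correct and follows essentially the same argument as the paper's proof: the coordinate-wise bound \(\sum_j |\max_p y^{(n)}_{p,j}| \le \sum_p \|\mathbf{y}^{(n)}_p\|_1\), sub-additivity of the vector norm with sub-multiplicativity of the operator norm \(\|\cdot\|_1\), and unfolding of the resulting linear recurrence. The only cosmetic difference is that you run the induction on \(S_n = \sum_{p}\|\mathbf{y}^{(n)}_p\|_1\) directly, whereas the paper runs it on \(\|\mathbf{x}^{(n)}\|_1\) and passes to the \(\mathbf{y}\)-quantities in a final step; the two bookkeeping schemes are equivalent.
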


\begin{proof}
    First, we will prove that for every \(n\in \{0, \ldots, L-1\}\) it holds that
    \[
    \|\mathbf{x}^{(n)}\|_1 \leq \left( \prod_{i=1}^n \sum_{p\in [P]}\|\mathbf{A}^{(i)}_p\|_1 \right) \|\mathbf{x}\|_1 + \sum_{i=1}^n\left(\prod_{j=i+1}^n \sum_{p\in [P]}\|\mathbf{A}^{(j)}_p\|_1\right) \left(\sum_{p\in [P]}\|\mathbf{b}^{(i)}_p\|_1\right). 
    \]
    For \(n=0\) the inequality obviously holds. For \(n\in [L-1]\) we have that
    \[
    \|\mathbf{x}^{(n)}\|_1=\|\max_{p\in [P]}(\mathbf{y}^{(n)}_p)\|_1=\sum_{j}|\max_{p\in [P]}(y^{(n)}_{pj})| \leq \sum_{j} \sum_{p\in [P]} |y^{(n)}_{pj}| = \sum_{p\in [P]} \|\mathbf{y}^{(n)}_p\|_1
    \]
    \[
    = \sum_{p\in [P]}\|\mathbf{A}^{(n)}_p\mathbf{x}^{(n-1)}+\mathbf{b}^{(n)}_p\|_1
    \]
    By the sub-additivity of vector norm \(\|\cdot\|_1\) and sub-multiplicativity of matrix operator norm \(\|\cdot\|_1\) we have that
    \[
    \|\mathbf{x}^{(n)}\|_1 \leq \left( \sum_{p\in [P]} \|\mathbf{A}^{(n)}_p\|_1 \right) \|\mathbf{x}^{(n)}\|_1 + \left( \sum_{p\in [P]}\|\mathbf{b}^{(n)}_p\|_1 \right)
    \]
    By expanding the recurrence, we have that
    \[
    \|\mathbf{x}^{(n)}\|_1 \leq \left( \sum_{p\in [P]} \|\mathbf{A}^{(n)}_p\|_1 \right) \|\mathbf{x}^{(n)}\|_1 + \left( \sum_{p\in [P]}\|\mathbf{b}^{(n)}_p\|_1 \right)
    \]
    \[
    \leq \left( \sum_{p\in [P]} \|\mathbf{A}^{(n)}_p\|_1 \right) \left( \sum_{p\in [P]} \|\mathbf{A}^{(n-1)}_p\|_1 \right) \|\mathbf{x}^{(n-2)}\|_1 + \left( \sum_{p\in [P]}\|\mathbf{b}^{(n)}_p\|_1 \right)
    \]
    \[
    + \left( \sum_{p\in [P]} \|\mathbf{A}^{(n)}_p\|_1 \right) \left( \sum_{p\in [P]}\|\mathbf{b}^{(n-1)}_p\|_1 \right)
    \]
    \[
    \leq \ldots \leq \left( \prod_{i=1}^n \sum_{p\in [P]}\|\mathbf{A}^{(i)}_p\|_1 \right) \|\mathbf{x}\|_1 + \sum_{i=1}^n\left(\prod_{j=i+1}^n \sum_{p\in [P]}\|\mathbf{A}^{(j)}_p\|_1\right) \left(\sum_{p\in [P]}\|\mathbf{b}^{(i)}_p\|_1\right). 
    \]
    Then, for every \(n\in [L-1]\) we have that
    \[
    \sum_{p\in[P]}\|\mathbf{y}^{(n)}_p\|_1 \leq \left( \sum_{p\in [P]} \|\mathbf{A}^{(n)}_p\|_1 \right) \|\mathbf{x}^{(n-1)}\|_1 + \left( \sum_{p\in [P]}\|\mathbf{b}^{(n)}_p\|_1 \right),
    \]
    and the result follows immediately. 
\end{proof}

\fifth*

\begin{proof}
    We will show that for any ReLU and Maxout networks, if the domain \(\mathcal{B}\) is bounded (i.e. there exists \(R\) such that \(\mathcal{B}\subseteq B_1(\mathbf{0}, R)\)), then there exists a Hybrid-MLP network that gives the same output. By our construction it will immediately follow that for bounded domains, ReLU and Maxout networks are special cases of the Hybrid-MLP. 

    First, we focus on ReLU activated networks. Take a network as defined in Lemma \ref{theorem:c1}. Layer-by-layer, we will replace its ReLU activations with our proposed morphological layer. Pick an activated layer \(n\in [L-1]\) and some large constant \(C>0\) to be determined. Remove the ReLU activations and replace them by our morphological layer:
    \[
    \mathbf{y}^{(n)}=\mathbf{A}^{(n)}\mathbf{x}^{(n-1)}+\mathbf{b}^{(n)},
    \]
    \[
    \mathbf{x}^{(n)}=\left( \mathbf{w}^{(n)}_i \vee \mathbf{W}^{(n)} \boxplus \mathbf{y}^{(n)} \right) + \left( \mathbf{m}^{(n)}_i \wedge \mathbf{W}^{(n)} \boxplus' \mathbf{y}^{(n)} \right).
    \]
    The weights of the morphological layer are defined as follows:
    \[
    w^{(n)}_{i0}=+C, \quad m^{(n)}_{i0}=-C,
    \]
    \[
    w^{(n)}_{ii} = +C,
    \]
    \[
    w^{(n)}_{ij} = 0, j\neq i.
    \]
    Then, for the \(i\)-th output, we have that
    \[
    x^{(n)}_i=(C\vee (y^{(n)}_i+C)\vee\max_{j\neq i}(y^{(n)}_j))+ ((-C)\wedge (y^{(n)}_i+C)\wedge\min_{j\neq i}(y^{(n)}))=(C\vee (y^{(n)}_i+C))+(-C) 
    \]
    \[
    = (C+\max(0, y^{(n)}_i))+(-C) = \max(0, y^{(n)}_i), 
    \]
    and the output is the same as the ReLU network. For the above to hold, it suffices that \(C>\|\mathbf{y}^{(n)}\|_1\). According to Lemma \ref{theorem:c1}, it suffices to pick
    \[
    C > \left( \prod_{i=1}^n \|\mathbf{A}^{(n)}\|_1 \right) R+ \sum_{i=1}^n\left(\prod_{j=i+1}^n \|\mathbf{A}^{(i)}\|_1\right) \|\mathbf{b}^{(n)}\|_1. 
    \]

    We continue with Maxout networks. Take a network as defined in Lemma \ref{theorem:c2}. If we concatenate the vectors \(\mathbf{y}^{(n)}_p\), then maxout networks are effectively a maxpooling layer. We will replace this maxpooling layer with our morphological layer. 

    Pick a maxout-activated layer \(n\in [L-1]\) and some large constant \(C>0\) to be determined. Concatenate the vectors \(\mathbf{y}^{(n)}_p\)  to form the vector \(\mathbf{g}^{(n)}\). Suppose that the \(n\)-th layer has output size \(N\). Then, the vector \(\mathbf{g}^{(n)}\) has dimension \(NP\). The \(i\)-th output \(x^{(n)}_i\) will be given by
    \[
    x^{(n)}_i=\max_{j=i, i+N,\ldots, i+(P-1)N} (g^{(n)}_j)
    \]
    Remove the maxpooling layer, and replace it by our morphological layer of output size \(N\) and input size \(NP\):
    \[
    \mathbf{y}^{(n)}_p=\mathbf{A}^{(n)}_p\mathbf{x}^{(n-1)}+\mathbf{b}^{(n)},
    \]
    \[
    \mathbf{g}^{(n)}=\text{concat}_{p\in [P]}(\mathbf{y}^{(n)}_p),
    \]
    \[
    \mathbf{x}^{(n)}=\left( \mathbf{w}^{(n)}_i \vee \mathbf{W}^{(n)} \boxplus \mathbf{g}^{(n)} \right) + \left( \mathbf{m}^{(n)}_i \wedge \mathbf{W}^{(n)} \boxplus' \mathbf{g}^{(n)} \right).
    \]
    The weights of the morphological layer are defined as follows:
    \[
    w^{(n)}_{i,0} = +C, m^{(n)}_{i,0}=-C, \quad i\in [N],
    \]
    \[
    w^{(n)}_{i, i+(k-1)N} = +C, \quad i\in [N], k\in [P],
    \]
    \[
    w^{(n)}_{i, j}=0, \quad j\neq i+(k-1)N, \text{for some } k\in [P]
    \]
    Then, for the \(i\)-th output, we have that
    \[
    x^{(n)}_i=\left(-C\vee \max_{j=i,\ldots,i+(P-1)N}(g^{(n)}_j+C)\vee \max_{j\notin \{i, \ldots, i+(P-1)N\}}(g^{(n)})\right)
    \]
    \[
    + \left(-C \wedge \min_{j=i,\ldots,i+(P-1)N}(g^{(n)}_j+C)\wedge \min_{j\notin \{i, \ldots, i+(P-1)N\}}(g^{(n)})\right)
    \]
    \[
    = \left(\max_{j=i,\ldots,i+(P-1)N}(g^{(n)}_j+C)\right) + (-C) = \max_{j=i,\ldots,i+(P-1)N}(g^{(n)}_j),
    \]
    and the output is the same as the Maxout network. For the above to hold, it suffices that \(C>\|\mathbf{g}^{(n)}\|_1 = \sum_{p\in [P]}\|\mathbf{y}^{(n)}_p\|_1\). According to Lemma \ref{theorem:c2} it suffices to pick
    \[
    C > \left( \prod_{i=1}^n \sum_{p\in [P]}\|\mathbf{A}^{(i)}_p\|_1 \right) R + \sum_{i=1}^n\left(\prod_{j=i+1}^n \sum_{p\in [P]}\|\mathbf{A}^{(j)}_p\|_1\right) \left(\sum_{p\in [P]}\|\mathbf{b}^{(i)}_p\|_1\right). 
    \]

    So far we have proven that for bounded domains, we can build any ReLU or Maxout network with a Hyrbid-MLP. And in fact by our construction, ReLU and Maxout networks are special cases of the Hybrid-MLP. Since these networks are universal approximators, it follows that Hybrid-MLP will be a universal approximator on compact domains. 
\end{proof}

We continue with a thorough exposition of Example~\ref{example:1}. 
\begin{repeatedexample}
    Consider the uni-variate function 
    \[
    f^{(\epsilon)}(x)=\max(-|x|+\epsilon, -|x|/\epsilon+1)
    \] on \([-1, 1]\) for \(\epsilon > 0\). The optimal min-max-plus network approximating this function uses a multiplier \(1/\epsilon\) on the input, and \(2\operatorname{round}(\frac{1-\epsilon}{2\epsilon d})+1\) pyramids to achieve an approximation error of \(\delta(1-\epsilon^2)>0\). 
    
    Indeed, the function is \(1/\epsilon\)-Lipschitz, so an \(1/\epsilon\) multiplier will be used. Without sampling a point from the middle part, the error will always be at least \(f(0)=1\). Hence, we are forced to sample from the middle part. Then, there is no reason not to sample the point \(x = 0\), since this covers the whole middle part. After that, we have to cover the flatter \(-|x|\) regions. Since we are working in 1D, we can work greedily, by placing a new pyramid exactly when the error is about to surpass our upper bound \(\delta(1-\epsilon^2)\) (if we choose another construction, then we can push the pyramids to the right or left, so long as it does not exceed our error bound, and then we obtain a construction equal to the greedy construction, plus some unused overlapping pyramids). 

    We start from \(x=\epsilon\) being covered by the middle pyramid. By making steps of size \(\delta x = 2d\epsilon\) we place new pyramids of slope \(1/\epsilon\). The tip of the new pyramid is \(2d\epsilon\) lower than the previously placed pyramid. Hence, they meet at the point \(x' \in [x, x+2d\epsilon]\) for which \((x'-(x+2d\epsilon))/\epsilon = 2d\epsilon - (x'-x)/\epsilon \Rightarrow x'-x = d\epsilon(1+\epsilon)\). The error at this maximal point will be \(|d\epsilon(1+\epsilon)(1/\epsilon-1){}| = d(1-\epsilon^2)\)
    
    For \(\epsilon \to 0^+\) and \(\delta(1-\epsilon^2) = \mathrm{const} \Rightarrow \delta \approx \mathrm{const}\), this requires an unbounded number of min-plus MPs. The MPM network, on the other hand, can represent the function exactly with 3 MPM units. We assume access to negations of inputs, in the same way the min-max-plus network assumes it. We build the function as follows:
    \[
    f_0(x) = 1\cdot\min(-x, x), \quad f_1(x) = 1/\epsilon \cdot \min(-x, x), \quad f(x) = 1\cdot \max(f_0(x)+\epsilon, f_1(x)+1). 
    \]
    As used in the proof of universality, the MPM unit can be decomposed into either a max-plus or a min-plus MPM. In the above, we used two min-plus and one max-plus decompositions. The increase in expressivity is due to the fact that the scalar multipliers worked as an activation, after the output of the first min-plus layer, allowing more diversity in the slopes of the pyramids that are used. 
\end{repeatedexample}

Finally, we prove the 1D approximation theorem for the shallow networks.

\eighth*

\begin{proof}
    Let \(x_1<x_2<\ldots < x_{N_s}\) be the samples of the optimal linear interpolator for achieving an error of \(\epsilon > 0\). By the mean value theorem on the secants, the inflection points of the linear interpolation will be at most as many as the inflection points of the underlying function. Let \(i_1, \ldots, i_{N_i}\) be their indices, which partition it into concave and convex parts. 

    The first MPM layer constructs slopes. For every adjacent pair of points \((x_i, x_{i+1})\), we use an MPM unit acting as a pass-through and the linear activation constructs the slope:
    \begin{gather*}
    w^{(0)}_{i, 1} = +C,\ w^{(0)}_{i, 0} = -C, \ a_{i} = \frac{f(x_{i+1})-f(x_i)}{x_{i+1} - x_i} \\
    \Rightarrow \max(x+w^{(0)}_{i, 1}, w^{(0)}_{i, 0})+\min(x+w^{(0)}_{i, 1}+w^{(0)}_{i, 0}) = x + C - C = x \\
    \Rightarrow x^{(1)}_i(x) = \frac{f(x_{i+1})-f(x_i)}{x_{i+1} - x_i} x. 
    \end{gather*}
    We also use two additional slopes \(+1\) and \(-1\):
    \[
    a_{N_s} = 1, \ a_{N_s+1} =-1 \quad 
    \Rightarrow \quad x^{(1)}_{N_s}(x) = x, \ x^{(1)}_{N_s+1}(x) = -x.
    \]
    The second layer will be a mix of max-plus and min-plus MPs constructed from the MPM. Consider a convex part, say between \(i_k\) and \(i_{k'}\). Since the linear interpolation is continuous PWL, we can write it as the maximum of its secants. We have already constructed the slopes, so now we need translations to build the secants, and a maximum. In other words, we need a selective max-plus MP. We use a MPM unit with the following weights:
    \[
    w^{(2)}_{j, i} = f(x_{i})+C, \ i\in \{i_k, \ldots i_{k'}\}, \quad w^{(2)}_0 = -C, \quad w^{(2)}_{j, i} =0, \ \text{otherwise},
    \]
    and no activation. In addition, we include the \(\pm 1\) slope terms on the endpoints:
    \[
    w^{(2)}_{j, N_s} = f(i_k), \quad w^{(2)}_{j, N_s+1} = f(i_{k'}). 
    \]
    For sufficiently large \(C\), this constructs the following function:
    \[
    x^{(2)}_j = \max(x^{(1)}_{i_k}-\frac{f(x_{i_k+1})-f(x_{i_k})}{x_{i_k+1} - x_{i_k}} x_{i_k}+f(x_{i_k}), \ldots)
    \]
    which matches the linear interpolation on the convex part. For the concave parts, the construction is similar, but we need the minimum of the secants instead of the maximum. Hence, we place \(-C\) instead of \(+C\) so that the MPM behaves like a min-plus MP. We also use two MPMs as pass-throughs for the \(\pm 1\) slopes.

    The third layer secures the parts from interference. It places sharp slopes at the endpoints of each part via the constructed \(\pm 1\) slopes:
    \[
    x^{(3)}_j = \min\left(x-\frac{f(x_{i_k+1})-f(x_{i_k})}{x_{i_k+1} - x_{i_k}} x_{i_k} + f(x_{i_k}), -x+\frac{f(x_{i_{k'}-1})-f(x_{i_{k'}})}{x_{i_{k'}-1} - x_{i_{k'}}} x_{i_{k'}} + f(x_{i_{k'}}), x^{(2)}_j\right)
    \]
    The final layer combines all parts with a max-plus behaving MPM. The parts do not interfere due to the \(1\)-Lipschitz continuity of \(f\): by the mean value theorem, the linear interpolation is itself an \(1\)-Lipschitz continuous PWL function, so the \(\pm 1\) slopes at the endpoints of the parts ensure that outside of the interval of a single part, its values never win the final maximum.

    We used a total of at most \((N_s+2)+(N_i+1)+(N_i+1)+1 = N_s + 2N_i + 5\) MPM units, with a total of \(O((N_s+2)+(N_s+2))=O(N_s)\) active parameters and \(O(N_s+N_i)\) active connections. The last part of the statement follows from Example~\ref{example:1}. 

    Finding a sufficiently large constant \(C\) follows the same methodology of repeated application of the triangle inequality, and is omitted. The construction is illustrated in Figure~\ref{fig:1d_interpolation}
\end{proof}

\begin{figure}
    \centering
    \begin{subfigure}{0.24\linewidth}
        \includegraphics[width=\linewidth]{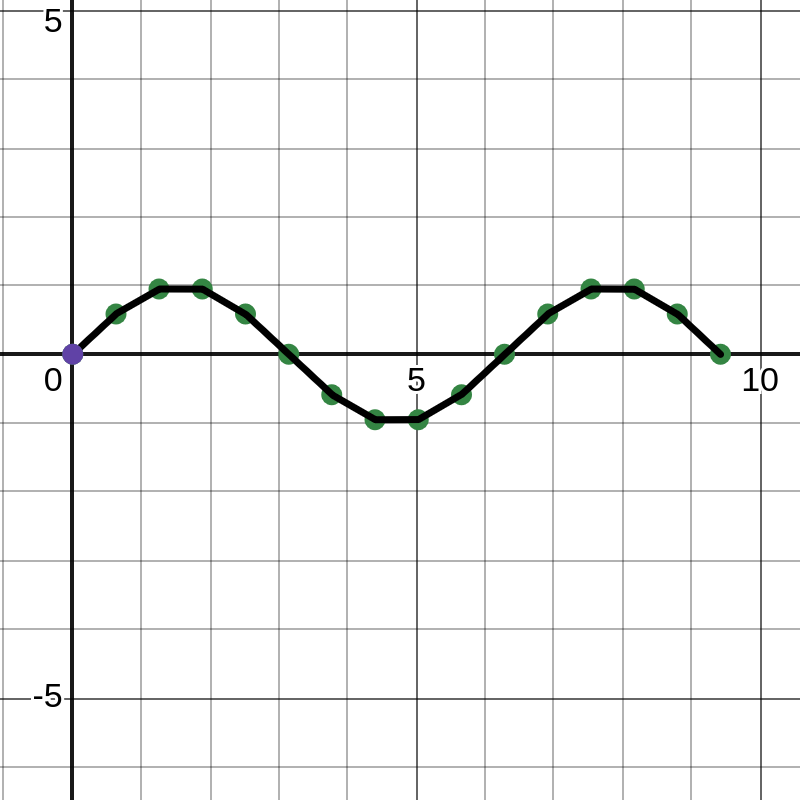}
        \caption{Linearly interpolated sin wave}
        \label{fig:sin_inter}
    \end{subfigure}
    \begin{subfigure}{0.24\linewidth}
        \includegraphics[width=\linewidth]{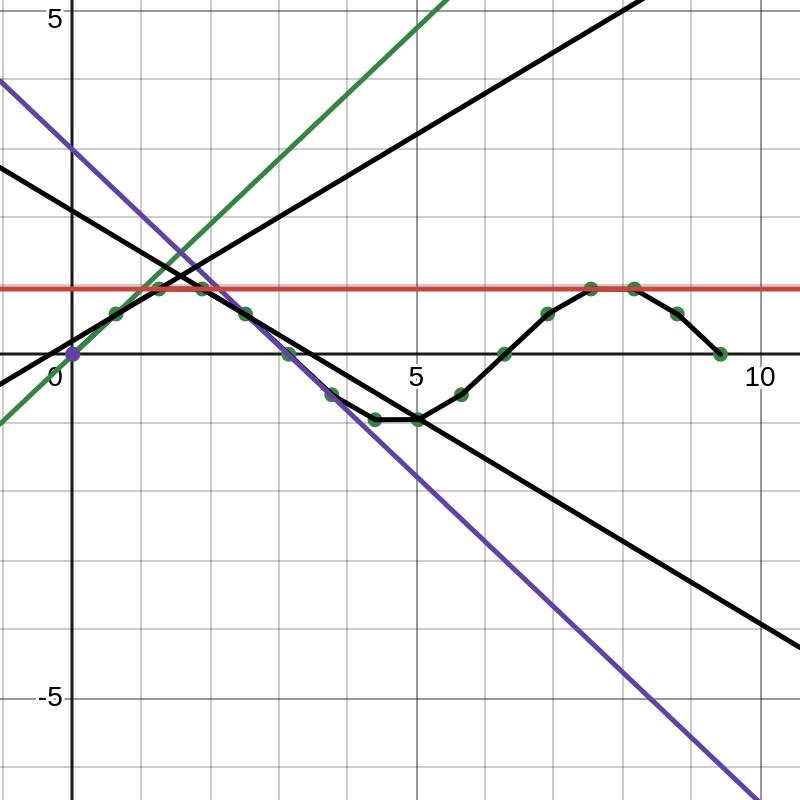}
        \caption{Slope construction of concave part}
        \label{fig:slopes_inter}
    \end{subfigure}
    \begin{subfigure}{0.24\linewidth}
        \includegraphics[width=\linewidth]{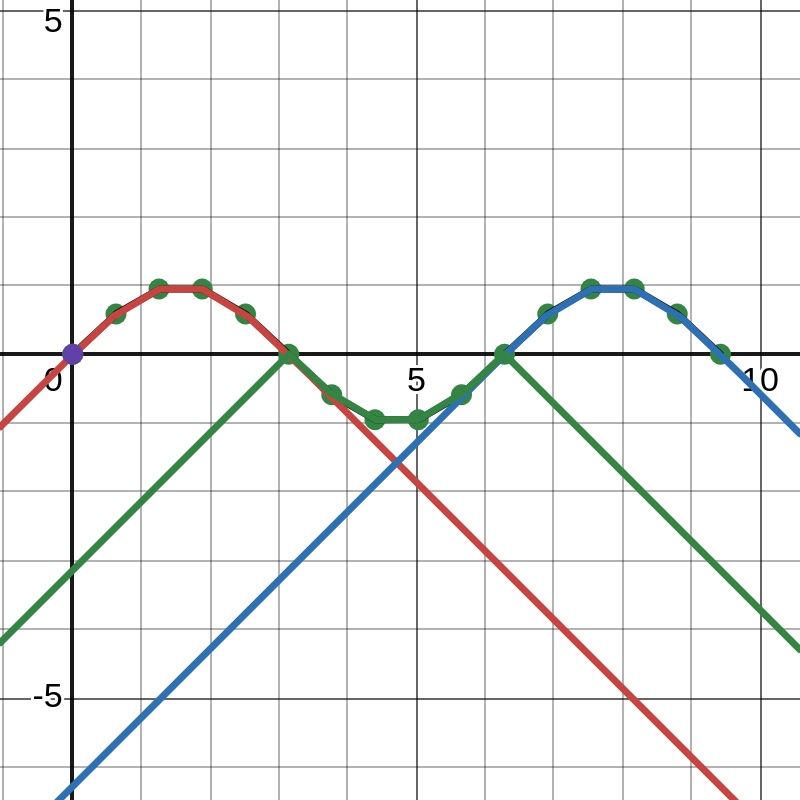}
        \caption{Approximations of convex/concave parts}
        \label{fig:parts_inter}
    \end{subfigure}
    \begin{subfigure}{0.24\linewidth}
        \includegraphics[width=\linewidth]{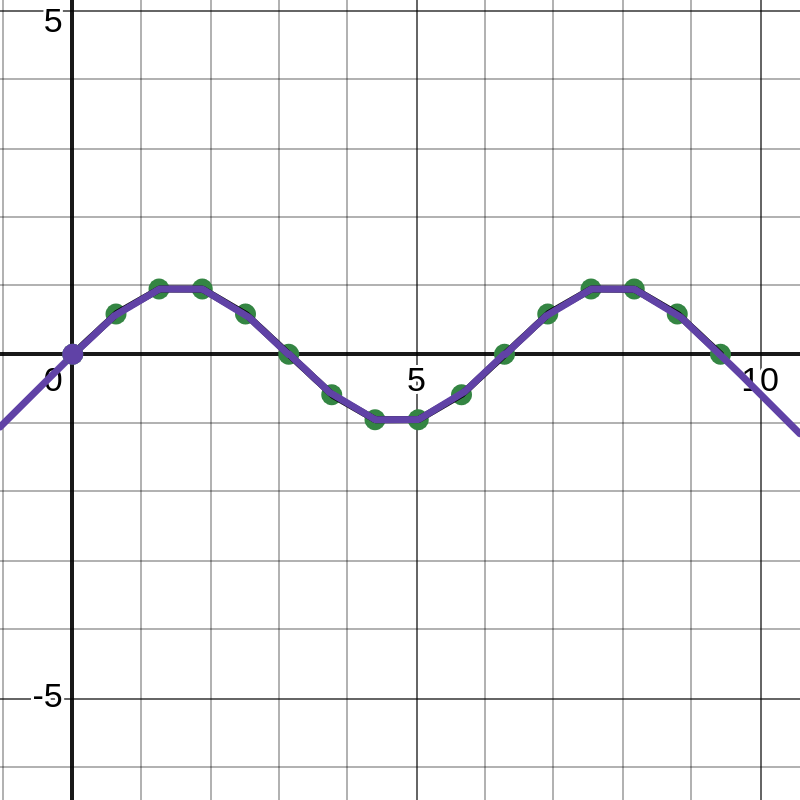}
        \caption{Max interpolation of all parts}
        \label{fig:mpm_inter}
    \end{subfigure}
    \caption{1D approximation of MPM compared to linear interpolation}
    \label{fig:1d_interpolation}
\end{figure}

\section{On the initialization of morphological neural networks}
\label{appendix:d}

In this appendix, we study what makes training morphological networks difficult, apart from the lack of activations. We showcase that a good initialization is crucial, and that this is easier to achieve with our networks as opposed with previous architectures. Finally, we explain how we initialized the networks in our experiments. Most of the claims in this appendix are qualitative and lack formal proofs. However, we still believe that they serve an important purpose in the understanding of morphological networks. 

\paragraph{Optimization landscape.}
First, we start with some examples that will allow us to gain an idea of the optimization landscapes we are faced against. 
\begin{enumerate}
    \item Consider an unbiased max-plus MP with 2 inputs and a single output, and an unbiased linear perceptron with 2 inputs and a single output. Consider 3 samples \((-1.7, 1; 2.3), (5, -2.2; 3.7), (1,1; 4.7)\). The loss functions of the two percetrons are illustrated in Figure~\ref{fig:landscape1}. As we can see, for this particular dataset, the loss function of the MP (blue) has a smaller global minimum than the loss function of the linear perceptron (green). However, the loss function of the linear perceptron is a simple paraboloid, which is easy to optimize using gradient methods. On the other hand, the loss function of the MP is more complicated, comprised of pieces with their own local minima. For most initializations, solving with a gradient method would result in reaching these local minima, which are worse than the global minimum of the paraboloid loss of the linear perceptron.
    \item Now consider a hybrid network, with a first layer of 2 unbiased max-plus MPs of 2 inputs, and a single output unbiased linear perceptron. We fix the weights of the output perceptron to 1, and the weights \(w_{12}=w_{21}=0\), study the loss function for variable weights \(w_{11}, w_{22}\). Consider again 3 samples \((1.2, -2.4; 1.4), (-3.36, 2.34; 2.16), (-2.1, -1.5; 2.4)\). The loss function of the network is illustrated in Figure~\ref{fig:landscape2}. As we can see, for this dataset, the loss function has 9 local minima. Depending on the initialization, all local minima are likely to be reached, with most of them being a lot worse than the global minimum. 
\end{enumerate}

\begin{figure}
    \centering
    \begin{subfigure}{0.45\textwidth}
        \centering
        \includegraphics[width=\linewidth]{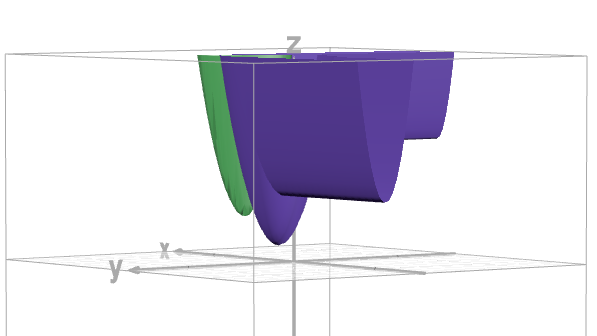}
        \caption{Landscape 1.}
        \label{fig:landscape1}
    \end{subfigure}
    \hspace{10pt}
    \begin{subfigure}{0.45\textwidth}
        \centering
        \includegraphics[width=\linewidth]{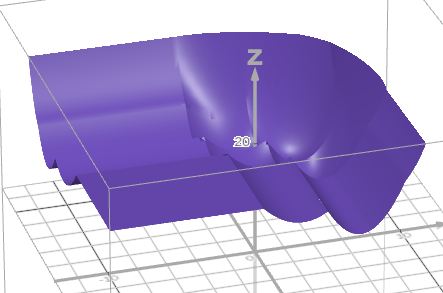}
        \caption{Landscape 2.}
        \label{fig:landscape2}
    \end{subfigure}
    \caption{Different landscapes of morphological networks.}
    \label{fig:landspaces}
\end{figure}

From the above, we understand that a good initialization is crucial for training morphological networks. 

\paragraph{Shift of the mean.}
Let us now study the optimization of morphological networks from a different viewpoint. Let \(d=10\). Suppose we want to approximate the function \(f(\mathbf{x})=\mathbf{10}^\top \mathbf{x}=10x_1+\ldots+10x_d\) using a linear perceptron \(y_1(\mathbf{x}; \mathbf{w})=\mathbf{w}^\top \mathbf{x}\). Also, suppose we want to approximate the function \(g(\mathbf{x})=\max_{i}(x_i+10)\) using a max-plus MP \(y_2(\mathbf{x}; \mathbf{m})=\max_{i}(x_i+m_i)\). Obviously, both approximations are possible with zero error by letting \(\mathbf{w}=\mathbf{m}=\mathbf{10}=(10,\ldots, 10)\). 

For both problems, a good initialization of the weights \(\mathbf{w}, \mathbf{m}\) would be with a mean of \(10\). However, such initializations are not common, because we can not know a-priori the distribution of the weights of the function we want to approximate. As such, let us initialize the weights to 0, and let us test how efficient both networks are at changing the mean of the distribution of their weights. 

First, we use Adam with a batch size of 1 for 1000 epochs. In each epoch, we generate a training sample at random according to a standard distribution and our target functions and introduce no noise. Ideally, the networks should learn the weights \(\mathbf{w}=\mathbf{m}=\mathbf{10}\), i.e. they should be able to shift the mean of the distribution of their weights, maintaining a standard deviation close to 0. The results are reported in Tables \ref{table:shift1_linear} and \ref{table:shift1_mp}. We used a learning rate of \(0.1\) as this optimized the results of the MP. As we can see, the linear perceptron worked efficiently, meaning that it eventually managed to shift the mean to \(10\). The MP, on the other hand, did not manage to change the mean of the distribution of its weights to \(10\), and it also introduced a lot of variance in the values of the weights. 

\begin{table*}
  \caption{Linear perceptron trained on shifting its mean with batch size of 1.}
  \label{table:shift1_linear}
  \centering
  \begin{tabular}{lcccccccccccc}
    \toprule
    Epoch & \(w_1\) & \(w_2\) & \(w_3\) & \(w_4\) & \(w_5\) & \(w_6\) & \(w_7\) & \(w_8\) & \(w_9\) & \(w_{10}\) & mean & std \\
    \midrule
    1 & -0.1 & 0.1 & -0.1 & -0.1 &  0.1 & 0.1 & 0.1 & 0.1 & 0.1 & 0.1 & 0.04 & 0.10 \\
    2 & -0.04 & 0.17 & -0.16 & -0.08 & 0.18 & 0.12 & 0.16 & 0.12 & 0.14 & 0.17 & 0.08 & 0.12 \\
    3 &  0 & 0.23 & -0.21 & -0.05 &  0.25 & 0.12 & 0.21 & 0.14 & 0.17 & 0.23 & 0.11 & 0.15 \\
    \(\vdots\) & \(\vdots\) & \(\vdots\) & \(\vdots\) & \(\vdots\) & \(\vdots\) & \(\vdots\) & \(\vdots\) & \(\vdots\) & \(\vdots\) & \(\vdots\) & \(\vdots\) & \(\vdots\) \\
    998 & 9.98 & 9.99 & 9.99 & 9.98 & 9.97 & 9.97 & 10 & 9.95 & 9.97 & 9.97 & 9.98 & 0.01 \\
    999 & 9.98 & 9.99 & 9.99 & 9.99 & 9.97 & 9.97 & 10 & 9.95 & 9.97 & 9.97 & 9.98 & 0.01 \\
    1000 & 9.98 & 9.99 & 9.99 & 9.99 & 9.97 & 9.97 & 10 & 9.95 & 9.98 & 9.97 & 9.98 & 0.01 \\
    \bottomrule
  \end{tabular}
\end{table*}

\begin{table*}
  \caption{MP trained on shifting its mean with batch size of 1.}
  \label{table:shift1_mp}
  \centering
  \begin{tabular}{lcccccccccccc}
    \toprule
    Epoch & \(w_1\) & \(w_2\) & \(w_3\) & \(w_4\) & \(w_5\) & \(w_6\) & \(w_7\) & \(w_8\) & \(w_9\) & \(w_{10}\) & mean & std \\
    \midrule
    1 & 0. & 0.1 & 0. & -0. & -0. & 0. &  -0. & -0. & -0. &  0. & 0.01 & 0.03 \\
    2 & 0.07 & 0.17 & 0. &  -0.  & -0. &   0.  & -0.  & -0. &  -0. &   0. & 0.02 & 0.06 \\
    3 & 0.13 & 0.22 & 0. &  -0.  & -0. &   0.06 & -0. &  -0. &  -0. & 0. & 0.04 & 0.08 \\
    \(\vdots\) & \(\vdots\) & \(\vdots\) & \(\vdots\) & \(\vdots\) & \(\vdots\) & \(\vdots\) & \(\vdots\) & \(\vdots\) & \(\vdots\) & \(\vdots\) & \(\vdots\) & \(\vdots\) \\
    998 & 2.67 & 0.86 & 0.51 & 11.67 & 0.52 & 4.27 & 1.82 & 1.7 &  1.96 & 1.06 & 2.70 & 3.35 \\
    999 & 2.67 & 0.86 & 0.51 & 11.67 & 0.52 & 4.27 & 1.82 & 1.7  & 1.96 & 1.06 & 2.70 & 3.35 \\
    1000 & 2.67 & 0.86 & 0.51 & 11.66 &  0.52 & 4.27 & 1.82 & 1.7 &  1.96 & 1.06 & 2.70 & 3.34 \\
    \bottomrule
  \end{tabular}
\end{table*}

We repeat the experiment with a batch size of \(100\). The results are reported in Tables \ref{table:shift100_linear} and \ref{table:shift100_mp}. Again, the linear perceptron excelled, learning the new mean successfully. The MP, while it performed better than the previous experiment, still failed to shift the mean of its weights to \(10\). 

\begin{table*}
  \caption{Linear perceptron trained on shifting its mean with batch size of 100.}
  \label{table:shift100_linear}
  \centering
  \begin{tabular}{lcccccccccccc}
    \toprule
    Epoch & \(w_1\) & \(w_2\) & \(w_3\) & \(w_4\) & \(w_5\) & \(w_6\) & \(w_7\) & \(w_8\) & \(w_9\) & \(w_{10}\) & mean & std \\
    \midrule
    1 & 0.1 & 0.1 & 0.1 & 0.1 & 0.1 & 0.1 & 0.1 & 0.1 & 0.1 & 0.1 & 0.10 & 0.00 \\
    2 & 0.2 & 0.2 & 0.2 & 0.19 & 0.2 & 0.2 & 0.18 & 0.2 & 0.2 & 0.2 & 0.20 & 0.01 \\
    3 & 0.3 & 0.29 & 0.3 & 0.29 & 0.3 & 0.3 & 0.26 & 0.3 & 0.3 & 0.29 & 0.29 & 0.01 \\
    \(\vdots\) & \(\vdots\) & \(\vdots\) & \(\vdots\) & \(\vdots\) & \(\vdots\) & \(\vdots\) & \(\vdots\) & \(\vdots\) & \(\vdots\) & \(\vdots\) & \(\vdots\) & \(\vdots\) \\
    998 & 10. & 10. & 10. & 10. & 10. & 10. & 10. & 10. & 10. & 10. & 10.00 & 0.00 \\
    999 & 10. & 10. & 10. & 10. & 10. & 10. & 10. & 10. & 10. & 10. & 10.00 & 0.00 \\
    1000 & 10. & 10. & 10. & 10. & 10. & 10. & 10. & 10. & 10. & 10. & 10.00 & 0.00 \\
    \bottomrule
  \end{tabular}
\end{table*}

\begin{table*}
  \caption{MP trained on shifting its mean with batch size of 100.}
  \label{table:shift100_mp}
  \centering
  \begin{tabular}{lcccccccccccc}
    \toprule
    Epoch & \(w_1\) & \(w_2\) & \(w_3\) & \(w_4\) & \(w_5\) & \(w_6\) & \(w_7\) & \(w_8\) & \(w_9\) & \(w_{10}\) & mean & std \\
    \midrule
    1 & 0.1 & 0.1 & 0.1 & 0.1 & 0.1 & 0.1 & 0.1 & 0.1 & 0.1 & 0.1 & 0.10 & 0.00 \\
    2 & 0.2 & 0.2 & 0.2 & 0.2 & 0.2 & 0.2 & 0.2 & 0.2 & 0.2 & 0.2 & 0.20 & 0.00 \\
    3 & 0.29 & 0.3 & 0.3 & 0.3 & 0.3 & 0.3 & 0.3 & 0.3 & 0.29 & 0.3 & 0.30 & 0.00 \\
    \(\vdots\) & \(\vdots\) & \(\vdots\) & \(\vdots\) & \(\vdots\) & \(\vdots\) & \(\vdots\) & \(\vdots\) & \(\vdots\) & \(\vdots\) & \(\vdots\) & \(\vdots\) & \(\vdots\) \\
    998 & 6.03 & 5.03 & 10.75 & 6.52 & 7.28 & 10.67 & 6.7 & 10.61 & 6.67 & 6.59 & 7.69 & 2.14 \\
    999 & 6.03 & 5.03 & 10.75 & 6.52 & 7.28 & 10.66 & 6.7 & 10.62 & 6.67 & 6.59 & 7.69 & 2.14 \\
    1000 & 6.03 & 5.03 & 10.76 & 6.52 & 7.28 & 10.65 & 6.7 & 10.63 & 6.67 & 6.59 & 7.69 & 2.14 \\
    \bottomrule
  \end{tabular}
\end{table*}

From the above, we understand that MPs have trouble at changing the mean of the distribution of their weights, especially when using smaller batch sizes. This is due to the sparse gradient signals that they produce. As such, we are in a unprecedented situation. Standard linear networks struggle with the initialization of the standard deviation of their weights, with a wrong initialization causing exploding and vanishing gradients \citep{glorot2010understanding, he2015delving}. In contrast,  morphological networks struggle with the wrong initialization of the mean of their weights. 

Based on the above, we conclude that attention has to be paid when initializing morphological networks. We have to ensure that the weights of each MP of the network are initialized so that their mean will not have to change much throughout training. This remark is not limited only to initialization, but also throughout training: During training we should not make changes that require an MP to drastically change the mean of its weights. 

\paragraph{Why \(\lambda=1/2\)?}  
In the main body, we argued that choosing \(\lambda \neq 1/2\) (or allowing \(\lambda\) to be learnable) hinders trainability, leading us to propose the sum of the maximum and minimum as a stable alternative. Here, we analyze this issue in detail.  

Consider the extreme case where \(\lambda=1\), reducing the DEP operation to an unbiased max-plus MP layer. Suppose we have an MP-based morphological layer with \(n=256\) inputs, where the weights \(w_{ij}\) follow a normal distribution, either at initialization or during training. As assumed in prior works \citep{glorot2010understanding, he2015delving}, let the input \(\mathbf{x}\) also follow a normal distribution, independent of the weights. The output of the layer is then given by  \(y_i = \max_j (x_j + w_{ij})\). Since \(x_j\) and \(w_{ij}\) are independent zero-mean Gaussian variables with variance \(1\), their sum follows a Gaussian distribution with zero mean and variance \(2\). Consequently, the output distribution follows the "maximum of Gaussians" distribution. This results in a shift towards positive values (nonzero mean) and a reduction in variance for large \(n\). Specifically, a bound on the mean of the output is given by \(\sqrt{4\log(n)}\) (the proof of this fact can be found in an answer by user "Sivaraman" on Mathematics Stack Exchange.\footnote{Sivaraman, "Expectation of the Maximum of Gaussian Random Variables," Mathematics Stack Exchange, 2011. URL: \url{https://math.stackexchange.com/q/89147}}.) The variance can be estimated using the Fisher–Tippett–Gnedenko theorem \citep{fisher1928limiting}, though we omit the detailed derivation.  

For proper initialization, following the principles of Glorot and Kaiming initialization \citep{glorot2010understanding}, the output should ideally follow a zero-mean normal distribution. However, correcting the output distribution by adjusting the weights is a highly nontrivial inverse problem. At best, one can attempt to correct the mean and variance heuristically. For instance, in our case with \(n=256\), setting the weights to have a mean of \(-7.5\) and variance \(2.56\) approximates a normal distribution, as shown in Figure~\ref{fig:mp_initialization}. However, this process is cumbersome, fragile, and highly sensitive to small deviations of \(n\), making it impractical for initializing morphological networks. As a result, the max-plus MP-based layer is effectively unusable in practice, even if activated.  

\begin{figure}
    \centering
    \begin{subfigure}{0.45\textwidth}
        \centering
        \includegraphics[width=\linewidth]{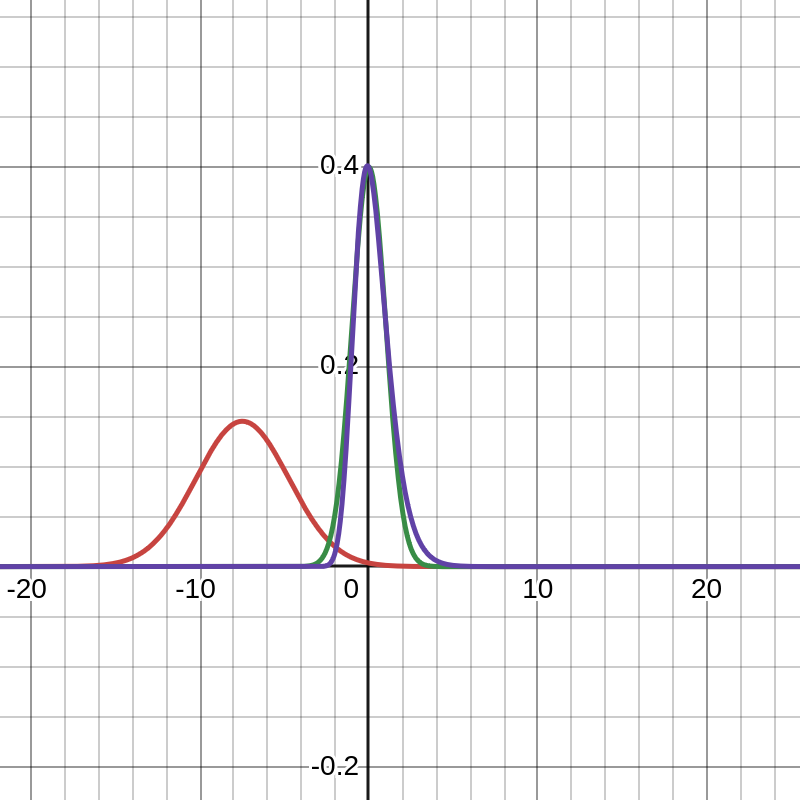}
        \caption{Distribution of max of shifted Gaussians.}
        \label{fig:mp_initialization}
    \end{subfigure}
    \hspace{10pt}
    \begin{subfigure}{0.45\textwidth}
        \centering
        \includegraphics[width=\linewidth]{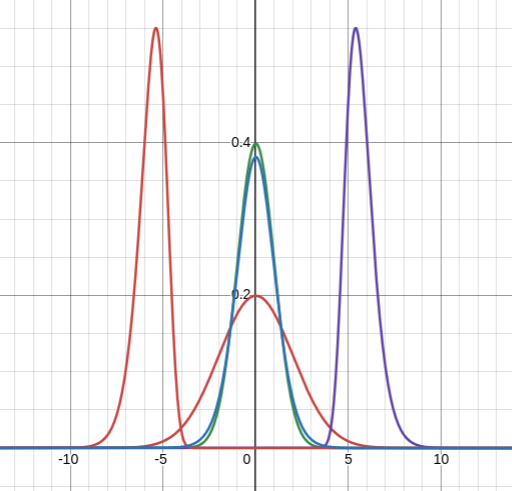}
        \caption{Distribution of sum of max and min of Gaussians.}
        \label{fig:mpm_initialization}
    \end{subfigure}
    \caption{Distributions of MP and MPM networks.}
    \label{fig:initialization}
\end{figure}

A similar issue arises with DEP-based layers when \(\lambda\) is either learnable or fixed at \(\lambda \neq 1/2\). When \(\lambda\) is fixed but not \(1/2\), the weight distributions for the maximum and minimum terms must be initialized differently to ensure a zero-mean output. When \(\lambda\) is learnable, even if initialized near \(1/2\), any deviation during training forces the layer to adjust the mean of its weights dynamically. Morphological networks, however, are not well suited for such adaptive mean corrections, further compromising their trainability.

On the other hand, setting fixed \(\lambda=1/2\) ensures that the output has zero mean and maintains a relatively stable variance, even for small variations in \(n\). In Figure~\ref{fig:mpm_initialization}, we illustrate the distribution of the sum of the maximum and minimum, assuming that for sufficiently large \(n\), the two terms are approximately uncorrelated. Additionally, the variance can be further adjusted to \(1\) by appropriately scaling the activations. This makes networks with fixed \(\lambda=1/2\) trainable. 

\paragraph{Initialization in experiments.}
In our experiments, we did not ablate the effect of initialization. Rather, we tried to initialize each network properly according to the above remarks and give it as much of an advantage as possible. Specifically:

\begin{enumerate}
    \item \textbf{Max-Plus MP-based Networks:} As discussed earlier, properly initializing this network required setting the weight variance to a value greater than \(1\) and the mean to a negative value. After extensive trial and error, we found that initializing the weights with a mean of \(-5/3\) and a standard deviation of \(3\) yielded the best results. Therefore, this initialization was used for all MP-based networks in our experiments. 
    
    \item \textbf{DEP (\(\lambda=3/4\)):} Properly initializing this network proved to be impractical. As a result, we used the same initialization as for DEP networks with \(\lambda=1/2\). 
    
    \item \textbf{DEP (learnable \(\lambda\)):} The parameter \(\lambda\) was initialized from a uniform distribution \(U([0,1])\), which has a mean of \(1/2\). Given this, the ideal weight initialization follows the same approach as DEP networks with \(\lambda=1/2\). 
    
    \item \textbf{DEP (\(\lambda=1/2\)) and MPM:} The initialization of networks with our proposed fixed \(\lambda=1/2\) and MPM networks is significantly simpler than that of MP-based networks. All morphological layers are initialized to follow a standard distribution.  
\end{enumerate}

\begin{figure}
    \centering
    \includegraphics[width=0.7\linewidth]{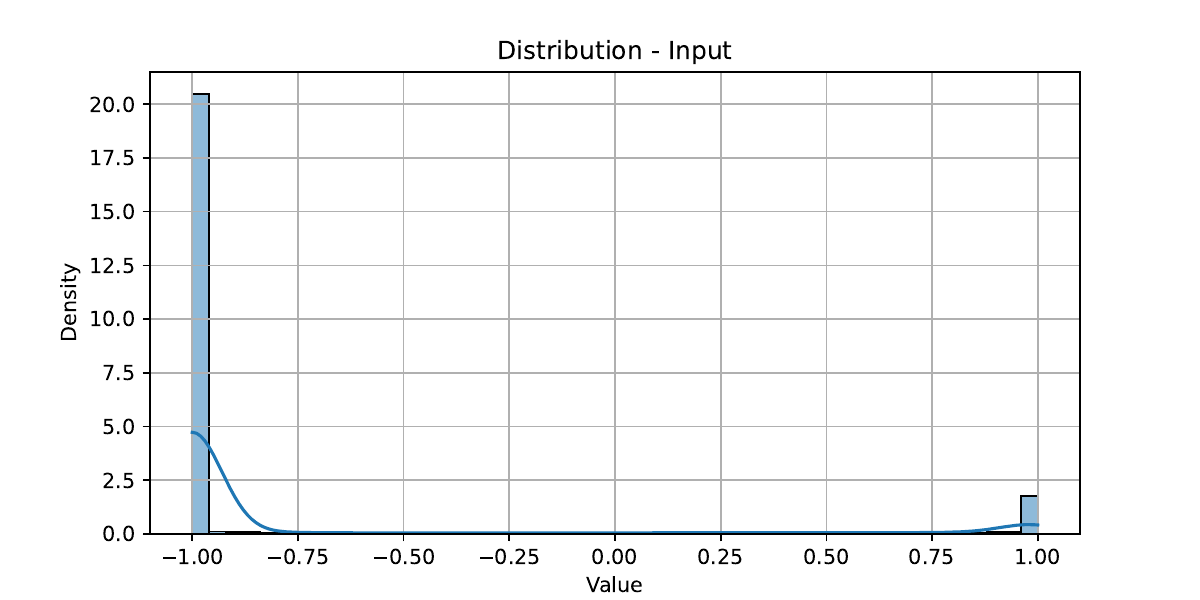}
    \caption{Distribution of MNIST values.}
    \label{fig:mnist_bernoulli}
\end{figure}

We also optimized the initialization of the linear activations (which correct the variance of the output). For fully connected networks of Setting 1, we found that initializing according to a zero-mean Gaussian distribution with standard deviation \(1/3.46\) yielded the best results (in general, initialization of activations for Setting 1 does not make significant differences. One could choose to initialize the activations to \(1/2\) and obtain good results). For convolutional networks  of Setting 1, each convolution kernel was simply initialized according to a standard distribution. For Settings 2 and 3, the linear layers were initialized according to Glorot. 

Some exceptions: i) For MNIST, the input images do not follow a zero-mean distribution (see Figure~\ref{fig:mnist_bernoulli}). For this reason, and to keep the output zero-mean, the first morphological layer of the networks was initialized to zero. ii) For the morphological ResNet-20 networks, all morphological layers were initialized following the standard distribution for all datasets, and the linear activations were initialized to give the average of the max and the min (making the initialization of the morphological ResNet-20 models simple).  

\section{Connection to the Representation Theorem}
\label{appendix:f}

In this appendix, we highlight the connection between our theoretical results and the \textit{Representation Theorem} of \citet{maragos1987morphological, maragos2013representations}. For completeness, we restate the theorem below.

\begin{theorem}[Representation Theorem]
    Let \(h(\mathbf{x}), \mathbf{x}\in \mathbb{Z}^d\) be the finite-extent impulse response of an \(m\)-dimensional linear shift-invariant filter \(\Gamma(f)=f*h\), which is defined on a class \(\mathcal{S}\) of real-valued discrete-domain signals \(f:\mathbb{Z}^d \to \mathbb{R}\) closed under translation. If \(h\) satisfies the following conditions:
    \begin{enumerate}
        \item Increasing: \(h(\mathbf{x}) \geq 0, \forall \mathbf{x}\),
        \item Translation-invariant: \(\sum_{\mathbf{x}} h(\mathbf{x})=1\), 
    \end{enumerate}
    then we can represent the linear operator as a supremum of weighted erosions:
    \[
    \Gamma(f)(\mathbf{x})=(h*f)(\mathbf{x})=\bigvee_{g\in \mathrm{Bas}(\Gamma)} \bigwedge_{\mathbf{y}\in \mathbb{Z}^d} f(\mathbf{y}) - g(\mathbf{y}-\mathbf{x}), 
    \]
    where the basis \(\mathrm{Bas}(\Gamma)\) is given by
    \[\mathrm{Bas}(\Gamma)=\{g\in \mathcal{S} : \sum_{\mathbf{y}\in spt(h)} h(\mathbf{y})g(-\mathbf{y}) = 0 \ \wedge \ g(-\mathbf{x}) = -\infty \Leftrightarrow h(\mathbf{x}) = 0\}\]
\end{theorem}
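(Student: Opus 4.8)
The plan is to prove the stated Representation Theorem by recasting the convolution \(\Gamma(f)=h*f\) as an increasing operator that commutes with both spatial and vertical (gray-level) translations, establishing an abstract ``supremum of erosions'' representation over the operator's kernel, and finally identifying the basis explicitly in the linear case. First I would record the three structural properties forced by the hypotheses. Since \(h(\mathbf{x})\geq 0\), the operator is increasing: \(f_1\preceq f_2\) implies \(h*f_1\preceq h*f_2\). Being a convolution, \(\Gamma\) is automatically shift-invariant in the spatial variable, so by this invariance it suffices to prove the identity at \(\mathbf{x}=\mathbf{0}\). Crucially, the normalization \(\sum_{\mathbf{x}}h(\mathbf{x})=1\) yields vertical translation-invariance: for any constant \(c\), \(\Gamma(f+c)=\Gamma(f)+c\), because \((h*(f+c))(\mathbf{x})=(h*f)(\mathbf{x})+c\sum_{\mathbf{y}}h(\mathbf{y})\). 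These are precisely the properties that make an erosion representation possible.

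Next I would prove the kernel representation \(\Gamma(f)(\mathbf{0})=\bigvee_{g\in\mathrm{Ker}(\Gamma)}\varepsilon_g(f)(\mathbf{0})\), where \(\mathrm{Ker}(\Gamma)=\{g:\Gamma(g)(\mathbf{0})\geq 0\}\) and \(\varepsilon_g(f)(\mathbf{0})=\bigwedge_{\mathbf{y}}f(\mathbf{y})-g(\mathbf{y})\), by a two-sided argument. For the upper bound \(\varepsilon_g(f)(\mathbf{0})\leq\Gamma(f)(\mathbf{0})\), valid for every \(g\in\mathrm{Ker}(\Gamma)\): setting \(c=\varepsilon_g(f)(\mathbf{0})\) gives \(f\succeq g+c\) pointwise, so monotonicity together with vertical invariance yields \(\Gamma(f)(\mathbf{0})\geq\Gamma(g)(\mathbf{0})+c\geq c\). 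For the reverse inequality, the particular structuring element \(g^\ast=f-\Gamma(f)(\mathbf{0})\) lies in \(\mathrm{Ker}(\Gamma)\) since \(\Gamma(g^\ast)(\mathbf{0})=0\), and it attains \(\varepsilon_{g^\ast}(f)(\mathbf{0})=\Gamma(f)(\mathbf{0})\); hence the supremum equals \(\Gamma(f)(\mathbf{0})\).

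Finally I would pass from the kernel to the basis and compute it. Because \(\varepsilon_g(f)\) is antitone in \(g\), the supremum over \(\mathrm{Ker}(\Gamma)\) is unchanged when restricted to the minimal kernel elements, which by definition constitute \(\mathrm{Bas}(\Gamma)\). In the linear case \(\Gamma(g)(\mathbf{0})=\sum_{\mathbf{x}\in\mathrm{spt}(h)}h(\mathbf{x})g(-\mathbf{x})\), so the kernel is the half-space \(\{g:\sum_{\mathbf{x}\in\mathrm{spt}(h)}h(\mathbf{x})g(-\mathbf{x})\geq 0\}\), with the off-support coordinates unconstrained. A kernel element is minimal exactly when every coordinate is as small as possible: off \(\mathrm{spt}(h)\) the value \(g(-\mathbf{x})\) does not enter the sum and can be driven to \(-\infty\), whereas on \(\mathrm{spt}(h)\) any further decrease of a coordinate with \(h(\mathbf{x})>0\) violates nonnegativity, forcing the boundary equality \(\sum_{\mathbf{x}\in\mathrm{spt}(h)}h(\mathbf{x})g(-\mathbf{x})=0\). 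This reproduces exactly the stated \(\mathrm{Bas}(\Gamma)\), and reflecting back to general \(\mathbf{x}\) by spatial invariance completes the representation.

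I expect the main obstacle to be the passage from the kernel to the basis: one must justify that the supremum is genuinely realized by minimal elements and that such minimal elements exist. For finite-extent \(h\) this reduces to a finite-dimensional statement on the coordinates indexed by \(\mathrm{spt}(h)\) -- the minimal elements of a closed nonnegative half-space are its bounding hyperplane -- which is what keeps the argument clean; the only delicate point is handling the infinitely many off-support coordinates, which are uniformly pushed to \(-\infty\) without affecting kernel membership or the value of any erosion \(\varepsilon_g(f)\) at points where \(f\) is finite.
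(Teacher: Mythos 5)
The paper does not actually prove this statement: the Representation Theorem is quoted verbatim from \citet{MaSc87a} in Appendix \ref{appendix:f} ``for completeness,'' and the appendix only develops its consequences (the identities for linear perceptrons with nonnegative weights summing to at most \(1\)) and its relationship to Theorem \ref{theorem:1}. So there is no in-paper proof to compare against; what you have written is a blind reconstruction of the classical Matheron-style kernel/basis argument that underlies the cited original, and it is essentially correct. Your three-step plan is the standard one: (i) the hypotheses \(h\succeq 0\) and \(\sum_{\mathbf{x}}h(\mathbf{x})=1\) give monotonicity and vertical (gray-level) translation-invariance, and spatial shift-invariance reduces everything to \(\mathbf{x}=\mathbf{0}\); (ii) the two-sided kernel argument is sound --- the upper bound \(\varepsilon_g(f)(\mathbf{0})\leq\Gamma(f)(\mathbf{0})\) from \(f\succeq g+c\) with \(c=\varepsilon_g(f)(\mathbf{0})\), and the lower bound via the witness \(g^\ast=f-\Gamma(f)(\mathbf{0})\in\mathrm{Ker}(\Gamma)\), which attains the value exactly; (iii) the passage to the basis is legitimate because, as you note, antitonicity of \(g\mapsto\varepsilon_g(f)\) makes the supremum over \(\mathrm{Ker}(\Gamma)\) equal to the supremum over minimal kernel elements \emph{provided} every kernel element dominates a minimal one, and for finite-extent \(h\) this existence is elementary: given \(g\) with \(\sum_{\mathbf{y}\in\mathrm{spt}(h)}h(\mathbf{y})g(-\mathbf{y})=s\geq 0\), lower a single on-support coordinate by \(s/h(\mathbf{y}_0)\) and push all off-support coordinates to \(-\infty\), producing a basis element below \(g\).

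Two bookkeeping caveats, neither fatal. First, both the step \(\Gamma(g+c)=\Gamma(g)+c\) and the witness \(g^\ast=f-\Gamma(f)(\mathbf{0})\) require the class \(\mathcal{S}\) to be closed under \emph{vertical} translations, not just spatial ones; the theorem statement as restated in the paper says only ``closed under translation,'' so you should flag that you are reading this as closure under both (as in the original source). Second, when you evaluate \(\Gamma(g)(\mathbf{0})\) on extended-valued \(g\) (basis elements take the value \(-\infty\) off \(\mathrm{spt}(h)\)), the products \(h(\mathbf{x})\cdot g(-\mathbf{x})=0\cdot(-\infty)\) must be handled by restricting the sum to \(\mathrm{spt}(h)\), which is exactly what the stated basis condition does --- worth one sentence, since the statement's own phrasing \(g\in\mathcal{S}\) with \(g(-\mathbf{x})=-\infty\) is already slightly inconsistent with \(\mathcal{S}\) being real-valued, and your proof inherits that mismatch by letting kernel elements live in \(\mathcal{S}\) while basis elements live in its extended-valued completion.
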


\begin{example}
    If we take \(h(n) = \alpha \delta(n) + (1-\alpha) \delta(n-1)\) with \(0 < \alpha < 1\), then a corresponding basis function \( g \in \mathrm{Bas}(\Gamma) \) takes the form:
    \[
    g(n) = 
    \begin{cases}
    r, & n = 0, \\
    -\frac{\alpha r}{1 - \alpha}, & n = -1, \\
    -\infty, & \text{otherwise},
    \end{cases}
    \quad \text{for } r \in \mathbb{R}.
    \]
    Substituting into the Representation Theorem, we obtain:
    \[
    \alpha x_n + (1 - \alpha)x_{n-1} = \sup_{r \in \mathbb{R}} \left[ \min\left\{ x_n - r,\ x_{n-1} + \frac{\alpha r}{1 - \alpha} \right\} \right].
    \]
    For example, setting \( n = 1 \) yields the following identity:
    \begin{equation}
    \label{equation:representation_theorem_example_1}
    \alpha x_1 + (1 - \alpha)x_0 = \sup_{r \in \mathbb{R}} \left[ \min\left\{ x_1 - r,\ x_0 + \frac{\alpha r}{1 - \alpha} \right\} \right].
    \end{equation}
    The LHS of this identity is a linear perceptron with weights \(\alpha_1 = \alpha, a_0 = 1-\alpha\) with \(\alpha > 0\), i.e. a linear perceptron with positive weights that sum up to 1. Notice that this condition follows from the "Increasing" and "Translation-invariant" hypotheses of the Representation Theorem.

    More generally, consider a linear perceptron with inputs \( x_0, x_1, \ldots, x_n \) and weights \( \alpha_0, \ldots, \alpha_n \), where \( \alpha_i > 0 \) for all \( i \) and \( \sum_i \alpha_i = 1 \). Then, the following identity holds:
    \begin{equation}
    \label{equation:representation_theorem_example_2}
    \sum_{i = 0}^n \alpha_i x_i = \sup_{r_0, \ldots, r_{n-1} \in \mathbb{R}} \left[ \min \left\{ x_0 - r_0, \ldots, x_{n-1} - r_{n-1},\ x_n + \frac{\sum_{i = 0}^{n-1} \alpha_i r_i}{\alpha_n} \right\} \right].
    \end{equation}
    Furthermore, if we relax the normalization constraint and only require the weights to be positive and sum to less than 1, the result still holds if we allow biased erosions. For instance, taking \( \alpha = 1/2 \) and evaluating equation \eqref{equation:representation_theorem_example_1} with \( x_0 = 0 \) and \( x_1 = x \), we find:
    \[
    \frac{x}{2} = \sup_{r \in \mathbb{R}} \left[ \min\left\{ x - r,\ r \right\} \right].
    \]
    Similarly, setting \(x_n=0\) in \eqref{equation:representation_theorem_example_2} yields the identity:
    \begin{equation}
    \label{equation:representation_theorem_example_3}
    \sum_{i = 0}^{n-1} \alpha_i x_i = \sup_{r_0, \ldots, r_{n-1} \in \mathbb{R}} \left[ \min \left\{ x_0 - r_0, \ldots, x_{n-1} - r_{n-1},\ \frac{\sum_{i = 0}^{n-1} \alpha_i r_i}{1-\sum_{i=0}^{n-1}\alpha_i} \right\} \right].
    \end{equation}
    In the above identities we can even allow the weights to be non-negative and they will still hold true: if a weight \(\alpha_i=0\), then we can always take \(r_i\to-\infty\) and the \(i\)-th term of the erosion vanishes. 
\end{example}

\textbf{Conclusion:} Based on the above example, we conclude that the Representation Theorem allows us to build any linear perceptron with weights greater than or equal to \(0\) whose sum is less than or equal to \(1\), provided we are allowed to take suprema over infinite domains and biased erosions. If the weights sum up to 1, then we use \eqref{equation:representation_theorem_example_2}; if they sum up to less than 1 then we use \eqref{equation:representation_theorem_example_3}. We can combine the two forms by writing the generalized operator \(y(\mathbf{x}) = \sup_{r\in \mathbb{R}^n}\min_{j\in [n]\cup\{0\}}(x_j+w(j, r))\), where \(w(j, r)\) is continuous in terms of \(r\), \(x_0 = 0\) is fixed, and depending on the case we can either take \(w(j, r) = +\infty\), in which case the bias vanishes, or \(w(j, r)<+\infty\), in which case we have biased erosions. 

It is trivial to see that the inverse of the above also holds true: If we can build any linear perceptron with non-negative weights that sum up to less than or equal to 1, then we can build an increasing, translation-invariant LSI filter. 

Notice that \emph{the Representation Theorem states nothing about linear perceptrons with weights that are less than \(0\) or that sum up to more than \(1\).} These cases are also discussed in \citep{maragos1987morphological}. 

This is where Theorem \ref{theorem:1} comes in. There are three main differences between Theorem \ref{theorem:1} and the Representation Thoerem:
\begin{enumerate}[left=0pt]
    \item In Theorem \ref{theorem:1} suprema and infima are only over finite domains, exactly how they are taken in neural networks. This is a condition for the proof to work.
    \item In Theorem \ref{theorem:1} we assume the use of max-plus and min-plus MPs, i.e. the weights of the supremum is not inside the previous infimum like they are in the idenities obtained from the Representation Theorem. 
    \item If we take the limit as the domain tends to be infinite, then Theorem \ref{theorem:1} proves the impossibility of representing linear perceptrons with negative weights, or weights that sum up to more that 1.
\end{enumerate}

First, we see how having suprema and infima over finite domains is implicitly used in the proof. This is done in the following two ways:
\begin{enumerate}[left=0pt]
    \item When we have a finite family of functions, with each being differentiable a.e., then they are \textbf{simultaneously} differentiable a.e. (i.e. the Lebesgue-measure of the set of points for which at least one of the functions is non-differentiable is zero).
    \item When we have a supremum or infimum over a finite domain, then it is definitely attained. 
\end{enumerate}
Let us see the above in more detail. In the proof of Theorem \ref{theorem:1}, we write \(x_i^{(n+1)}=\max_j f_{ij}^{(n+1)}\). We note that each \(f_{ij}^{(n+1)}\) is differentiable a.e.. Then, because \(j\) runs over a finite (and hence countable) set, \(f_{ij}^{(n+1)}\) for the different \(j\), and \(x_i^{(n+1)}\) are simultaneously differentiable a.e.. This allows us to apply Lemma \ref{theorem:b4} a.e., etc. 

Notice that the same argument can not be used when dealing with suprema and infima over uncountably infinite domains. For example, it is \(\frac{x}{2}=\sup_{r\in \mathbb{R}} \min (x-r, r) = \sup_{r\in \mathbb{R}} f_r(x)\). For every \(r\) the function \(f_r(x)\) is non-differentiable only for \(x = 2r\), and hence it is a.e. differentiable. However, it does not hold that \(f_r(x)\) are simultaneously differentiable a.e., because for every \(x\), there is the function \(f_{x/2}\) which is not differentiable on \(x\). In fact, for no \(x\) are the functions \(f_r(x)\) simultaneously differentiable. Hence, Lemma \ref{theorem:b4} can never be applied. 

The problem with the above is that \(\mathbb{R}\) has non-zero measure. The next logical question would be, what if we allow the suprema and infima to be taken over a infinite but countable domain. In fact, due to continuity, we can write \(\frac{x}{2} = \sup_{q\in \mathbb{Q}} \min(x - q, q) = \sup_{q\in \mathbb{Q}} f_q(x)\). Then, for every \(x \notin \mathbb{Q}\) it holds that \(f_q(x)\) are simultaneously differentiable, i.e. they are simultaneously differentiable a.e., and Lemma \ref{theorem:b4} can be applied a.e.. However, now the proof stops working for a different reason. Since we are taking a supremum over \(\mathbb{Q}\), it is not necessarily attained. In the proof, for the induction to work we implicitly make use of the fact that, because the domain of the suprema and infima is finite, they are attained, and hence the set \(J\) is non-empty. This allows the induction to work properly. 

Next, we note that despite proving the theorem only in the case of using max-plus and min-plus MPs, it can readily be generalized to also include the case of the weights of the supremum being inside the previous infimum, and vice versa. In other words, if we are given a network defined recursively as \(x^{(n+1)}_i(\mathbf{x}) = \max_{k} \min_{j} (x_j^{(n)}(\mathbf{x}) + w_{kj}), i\in [N(n+1)]\) and \(x_0^{(n)}=0\) fixed, then using the same Lemmas and an induction argument, it can be shown that Theorem \ref{theorem:1} holds also for this new type of network. 

Finally, the two theorems seem contradicting. However, they are actually (almost) complementary to each other. In order to see why, we have to go from the finite domains that Theorem \ref{theorem:1} works with, to the infinite domains that the Representation Theorem works with. To do so, we need the following lemma. 

\begin{lemma}
\label{theorem:f2}
    Let \((f_n)\) be a sequence of Lipschitz functions defined on an interval \(I\), and suppose that \(f_n \to f\) pointwise on \(I\). Assume that there exist constants \(m, M \in \mathbb{R} \cup \{-\infty, +\infty\}\) such that for all \(n \in \mathbb{N}\), it holds a.e. on \(I\) that
    \[
    m \leq f_n'(x) \leq M.
    \]
    Then, wherever the derivative \(f'(x)\) exists on \(I\), it satisfies
    \[
    m \leq f'(x) \leq M.
    \]
\end{lemma}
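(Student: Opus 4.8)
The plan is to convert the pointwise derivative bounds on the $f_n$ into bounds on the difference quotients of the limit $f$, and then recover the bound on $f'$ by passing to the limit in $h$. First I would observe that each $f_n$, being Lipschitz on $I$, is absolutely continuous, so the fundamental theorem of calculus applies: $f_n(b)-f_n(a)=\int_a^b f_n'(x)\,dx$ for any $a<b$ in $I$. Integrating the hypothesis $m\le f_n'\le M$, which holds a.e. (and should be read as one-sided whenever $m=-\infty$ or $M=+\infty$), gives
\[
m(b-a)\le f_n(b)-f_n(a)\le M(b-a),
\]
valid for every $n$ and every pair $a<b$ in $I$.

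Next I would pass to the pointwise limit in $n$. Since $f_n(a)\to f(a)$ and $f_n(b)\to f(b)$, the inequalities survive the limit, yielding
\[
m(b-a)\le f(b)-f(a)\le M(b-a)
\]
for all $a<b$ in $I$. Thus every difference quotient of $f$ lies in the (extended) interval $[m,M]$.

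Finally, I would fix a point $x_0\in I$ at which $f'(x_0)$ exists. For small $h>0$ with $x_0+h\in I$, taking $a=x_0$ and $b=x_0+h$ gives $m\le \frac{f(x_0+h)-f(x_0)}{h}\le M$; for $h<0$ one instead sets $a=x_0+h$, $b=x_0$ and divides the previous inequality by the negative quantity $h$, which again places the quotient in $[m,M]$. Letting $h\to 0$ and invoking the assumed existence of $f'(x_0)$ yields $m\le f'(x_0)\le M$, which is exactly the claim.

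I expect the argument to be essentially routine rather than obstructed by any single hard step. The only points needing care are the bookkeeping for the sign of $h$ when forming the one-sided difference quotients, and the correct handling of the endpoints $m=-\infty$ and $M=+\infty$, where the corresponding inequality must simply be omitted rather than treated as an inequality between real numbers. The one nontrivial ingredient, the integral representation $f_n(b)-f_n(a)=\int_a^b f_n'$, is available immediately from the Lipschitz (hence absolutely continuous) hypothesis, so no deeper difficulty is anticipated.
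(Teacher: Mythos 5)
Your proposal is correct and follows essentially the same route as the paper's proof: absolute continuity of the Lipschitz $f_n$ plus the fundamental theorem of calculus to bound $f_n(b)-f_n(a)$ between $m(b-a)$ and $M(b-a)$, passing to the pointwise limit in $n$, and then reading off the bound on $f'$ from the difference quotients. Your added care about the sign of $h$ and the infinite-endpoint cases is a minor refinement of the same argument, not a different approach.
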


\begin{proof}
    Since each \(f_n\) is Lipschitz on \(I\), it is absolutely continuous on any compact subinterval \([a, b] \subset I\). Furthermore, since \(f_n'(x)\) exists almost everywhere and is bounded as \(m \leq f_n'(x) \leq M\), we have for any \(a < b\) in \(I\):
    \[
    m(b-a) \leq \int_a^b f_n'(x) \, dx = f_n(b) - f_n(a) \leq M(b-a).
    \]
    Taking the limit as \(n \to \infty\) and using the pointwise convergence \(f_n \to f\), we obtain:
    \[
    m(b-a) \leq f(b) - f(a) \leq M(b-a),
    \]
    which implies:
    \[
    m \leq \frac{f(b) - f(a)}{b - a} \leq M \quad \text{for all } a < b \text{ in } I.
    \]
    Thus, whenever \(f'(x)\) exists, it must lie in the interval \([m, M]\).
\end{proof}

\begin{example}
    Consider the sequence \(f_n(x) = \frac{1}{n} \sin(nx)\), defined on an interval \(I\) containing \(0\). Each \(f_n\) is Lipschitz on \(I\) and converges pointwise to \(f(x) = 0\) for all \(x \in \mathbb{R}\). The derivatives are given by \(f_n'(x) = \cos(nx)\), which do not converge uniformly.
    
    In particular, \(f_n'(0) = 1\) for all \(n\), while \(f'(0) = 0\). Thus, the derivative and limit cannot be interchanged. However, since \(-1 \leq f_n'(x) \leq 1\) for all \(x \in I\), Lemma \ref{theorem:f2} applies, and we conclude that:
    \[
    -1 \leq f'(x) \leq 1 \quad \text{wherever } f'(x) \text{ exists}.
    \]
    Indeed, since \(f(x) \equiv 0\), we have \(f'(x) = 0\) everywhere it exists, and the bound is satisfied. Moreover, the bound \([-1, 1]\) is tight no matter how small an interval \(I\) we choose around 0. Hence, Lemma \ref{theorem:f2} cannot give us a better estimate on \(f'(0)\). 
\end{example}

Take a function of the form \(y(\mathbf{x}) = \sup_{r\in \mathbb{R}^n} \min_{j\in [n] \cup \{0\}} (x_j + w(j, r))\), like in the identities \eqref{equation:representation_theorem_example_2}, \eqref{equation:representation_theorem_example_3} obtained from the Representation theorem, where \(w(j, r)\) is continuous in terms of \(r\). Due to continuity, we have that \(y(\mathbf{x}) = \sup_{q\in \mathbb{Q}^n} \min_{j} (x_j + w(j, q))\), and it also holds that \(y(\mathbf{x}) = \sup_{i\in \mathbb{N}} \min_{j} (x_j + w(j, q_i)) = \lim_{n\to \infty} \max_{i\in [n]} \min_{j} (x_j + w(j, q_i)) = \lim_{n\to \infty} y_n(\mathbf{x})\), where \(q_i\) is a counting of the countable \(\mathbb{Q}^n\). Indeed, the above holds because \(y_n\) is an increasing sequence: 
\[
y_n = \max_{i\in [n]}\min_j(x_j + w(j, q_i)) \leq \max_{i\in [n+1]}\min_j(x_j + w(j, q_i)) = y_{n+1} \Rightarrow
\]
\[
\lim_{n\to \infty} y_n(\mathbf{x}) = \sup_{n\in \mathbb{N}} y_n(\mathbf{x}) = \sup_{n\in \mathbb{N}}\max_{i\in [n]}\min_j(x_j+w(j, q_i)) = \sup_{i\in \mathbb{N}}\min_j(x_j + w(j, q_i)) = y(\mathbf{x}).
\]
For every \(y_n\) Theorem \ref{theorem:1} holds. Hence for every \(n,\ y_n\) is Lipschitz, and a.e. \(\nabla y_n(\mathbf{x}) = 0\) or \(\nabla y_n(\mathbf{x}) = e_{i_n}\) for some \(i_n=i_n(\mathbf{x})\). Hence, a.e. it holds that \(\partial_{x_j} y_n(\mathbf{x}) \geq 0\) and \(\sum_{j} (\nabla y_n(\mathbf{x}))_j\) equals either \(0\) or \(1\). Now we have the following arguments:
\begin{enumerate}
    \item Define \( g_n(t) = y_n(x_j = t, \mathbf{x}_{-j}) \). Since \( y_n \) is Lipschitz, so is \( g_n \). Moreover, we are given that \( \partial_{x_j} y_n(\mathbf{x}) \geq 0 \) a.e., which implies that \( \frac{dg_n}{dt}(t) \geq 0 \) a.e.. Now define \( g(t) = y(x_j = t, \mathbf{x}_{-j}) \). We have that \( g_n \to g \) pointwise, since \(y_n \to y\) pointwise. Therefore, we can apply Lemma \ref{theorem:f2} to obtain \( \frac{dg}{dt}(t) \geq 0 \) wherever the derivative exists. Therefore,
    \[
        \partial_{x_j} y(\mathbf{x}) \geq 0,
    \]
    if the derivative exists. 
    \item We are given that
    \[
        \sum_j (\nabla y_n(\mathbf{x}))_j = \left\langle \mathbf{1}, \nabla y_n(\mathbf{x}) \right\rangle \in \{0, 1\} \quad \Rightarrow \quad \left\langle \mathbf{1}, \nabla y_n(\mathbf{x}) \right\rangle \leq 1, \quad \text{a.e.}
    \]
    Define \( g_n(t) = y_n(\mathbf{x} + t\mathbf{1}) \) and \( g(t) = y(\mathbf{x} + t\mathbf{1}) \). Then
    \[
        \frac{dg_n}{dt}(t) = \left\langle \mathbf{1}, \nabla y_n(\mathbf{x} + t\mathbf{1}) \right\rangle \leq 1, \quad \text{a.e.}
    \]
    Since \( y_n \) is Lipschitz, so is \( g_n \). Since \(y_n\to y\) pointwise, it also holds that \(g_n\to g\) pointwise. Hence, Lemma \ref{theorem:f2} applies, and we have that 
    \[
        \frac{dg}{dt}(0) = \left\langle \mathbf{1}, \nabla y(\mathbf{x}) \right\rangle \leq 1,
    \]
    if the derivative exists. 
\end{enumerate}
Based on the above arguments, we see that \(\nabla y(\mathbf{x}) \succeq 0\) and \(\left<\mathbf{1}, \nabla y(\mathbf{x})\right> \leq 1\), wherever the derivative exists. If we suppose that \(y\) can represent a linear perceptron with weights \(\alpha_1, \ldots, \alpha_n\), then it must hold that \(\alpha_j = \partial_{x_j}y(\mathbf{x}) \geq 0\) and \(\alpha_1 + \ldots + \alpha_n = \left<\mathbf{1}, \nabla y(\mathbf{x})\right> \leq 1\). In other words, \emph{a byproduct of (the slight variation of) Theorem \ref{theorem:1} is that functions of the form of the Representation Theorem can represent linear perceptrons only if the weights of the perceptrons are non-negative, and their sum is less than or equal to 1.}

\section{Additional experiments}
\label{appendix:e}

Before we continue with the additional experiments, we provide the details of our experimental setup and compute resources required: All experiments can be reproduced from the provided python notebooks by sequentially running all of their cells. No seeding was performed, and variation in results was controlled by repeated experiments. Notebooks "\texttt{mainMNNs*.ipynb}" include the majority of experiments. Notebook "\texttt{regressionMNNs.ipynb}" includes additional experiments (and experiments on regression tasks presented in Appendix \ref{appendix:d} and this Appendix). Notebooks "\texttt{snip\_experiments\_corrected*.ipynb}" and "\texttt{snip\_resnet\_experiments\_corrected*.ipynb}" include the SNIP pruning experiments. Notebooks "\texttt{resnet\_20\_experiments*.ipynb}" include the experiments on ResNet-20 models. The GPUs we used were the NVIDIA GeForce RTX 2080 Ti and the RTX 3060, both of which have 12GB of memory. One GPU is sufficient to run each notebook (i.e. 12GB of GPU memory suffice to run each notebook). For notebooks "\texttt{mainMNNs*.ipynb}" and "\texttt{snip\_experiments*.ipynb}", each epoch of training took about 30 seconds, and a whole cell of training took about 30 minutes (with the exception of RMPM-Drop, which was trained for 200 epochs). The other cells took negligible time to complete. Each "\texttt{mainMNNs*.ipynb}" notebook required about a day to run from start to finish. Each "\texttt{snip\_experiments\_corrected*.ipynb}" notebook took about 6 hours to run. For notebooks "\texttt{resnet\_20\_experiments*.ipynb}" and "\texttt{snip\_resnet\_experiments\_corrected*.ipynb}", each epoch of MPM-ResNet-20 training took about 1.5 minutes, longer than the 30 seconds required by the linear ResNet-20. We believe this is due to the inefficiency of our CUDA implementation of max-plus convolution. Each "\texttt{resnet\_20\_experiments*.ipynb}" took about 6 hours to run and each \\ "\texttt{snip\_resnet\_experiments\_corrected*.ipynb}" about a day. 

\textbf{Morphological convolution CUDA module:} Because implementing convolution via unfolding would be prohibitively resource demanding, we developed a CUDA module for PyTorch, implementing max-plus convolution under its strict definition. Unfortunately, we were not able to make it state-of-the-art efficient. Still, it was good enough for the purposes of our experiments. Since the authors are not as well-versed in development of CUDA kernels, and as with all low-level code, we advice a thorough reading of the code before ane use for purposes outside those of the included experiments. 

\textbf{Declaration of LLM Usage:} We made use of LLMs, and specifically ChatGPT, as a programming assistant for tasks like writing boilerplate code, code auto-completion, fixing errors and debugging. Any code generated by ChatGPT was rigorously checked for correctness. In particular, ChatGPT was extensively used to aid with the development of our CUDA module, with the core logic of the module written by the authors and all other aspects checked for correctness. 

\textbf{Parameter count of models:} Table \ref{table:parameter_count} presents the parameter counts of all models used in our experiments. For the ResNet-20 models, two numbers appear: one for gray-scale datasets such as F-MNIST, and one for RGB datasets such as CIFAR-10. 

{
\setlength{\tabcolsep}{5pt}
\begin{table}
  \caption{Parameter count of all models.}
  \label{table:parameter_count}
  \centering
  \begin{tabular}{lcccccc}
    \toprule
    \textbf{Network} & MLP & MP & DEP & DEP (\(1/2\)) & Act-MP & Act-DEP \\
    \textbf{Params.} & 466698 & 466698 & 932106 & 930816 & 467978 & 933386 \\
    \midrule
    \textbf{Network} & Act-DEP (\(3/4\)) & Act-DEP (\(1/2\)) & MinMaxPlus & MPM & RMPM & RMPM-Drop \\
    \textbf{Params.} & 932096 & 932096 & 859408 & 469268 & 469268 & 469268 \\
    \midrule
    \textbf{Network} & MPM-SVD & LeNet-5 & MPM-LeNet-5 & MPM-SVD-LeNet-5 & ResNet-20 & MPM-ResNet-20 \\ 
    \textbf{Params.} & 469268 & 61696 & 63304 & 63304 & 271994/272282 & 278378/278666 \\
    \bottomrule
  \end{tabular}
\end{table}
}

Next, we present our additional experimental results. Specifically, we first provide the results on \(\ell_1\)-based masking that were explained in the main text, in Table \ref{table:pruning}. Then, we provide experiments on simple regression tasks, and the full training history of networks that we claimed were trainable but for which only peak training accuracy was reported in the main body.

\begin{table*}
  \caption{Performance of \(\ell_1\)-based weight masked MLP and MPM for various masking ratios on MNIST and Fashion-MNIST.}
  \label{table:pruning}
  \centering
  \begin{tabular}{lcccc}
    \toprule
    \multirow{2}{*}{\shortstack[l]{\textbf{Masking}\\ \textbf{ratio}}} & \multicolumn{2}{c}{\textbf{MNIST}} & \multicolumn{2}{c}{\textbf{Fashion-MNIST}} \\
    \cmidrule(lr){2-3} \cmidrule(lr){4-5} 
    & MLP & MPM & MLP & MPM \\
    \midrule
    0.85 & 71.93 $\pm$ 5.48 & 93.59 $\pm$ 0.65 & 54.12 $\pm$ 4.38 & 80.83 $\pm$ 1.09 \\
    0.875 & 58.52 $\pm$ 4.63 & 93.02 $\pm$ 0.73 & 34.46 $\pm$ 5.50 & 79.92 $\pm$ 1.17 \\
    0.9 & 38.20 $\pm$ 8.89 & 92.21 $\pm$ 1.01 & 17.17 $\pm$ 2.23 & 79.19 $\pm$ 0.99 \\
    0.925 & 17.19 $\pm$ 3.53 & 90.88 $\pm$ 1.01 & 13.29 $\pm$ 2.02 & 77.53 $\pm$ 1.74 \\
    0.95 & 11.51 $\pm$ 1.80 & 78.65 $\pm$ 13.99 & 10.70 $\pm$ 1.19 & 74.94 $\pm$ 0.16 \\
    0.975 & 9.88 $\pm$ 0.25 & 64.37 $\pm$ 10.75 & 10.27 $\pm$ 0.19 & 57.11 $\pm$ 4.62 \\
    \bottomrule
  \end{tabular}
\end{table*}

\paragraph{Regression experiments.} 
Continuing, we present experiments on simple regression tasks. Notice that on classification tasks (like the ones presented in the main body), it is not necessary for the network to be a universal approximator, since we work with logits. This makes regression a more ideal task for showcasing the need for our "linear" activations. We perform regression of simple single-variate single-output functions sampled with zero-mean i.i.d. gaussian noise. To ablate the effect of our "linear" activations, we use 3 fully connected networks, each with two hidden layers of size \(100\): 1) a simple ReLU-activated MLP, 2) the MPM network, and 3) a non-activated MPM network. The functions which we sample are the following: 1) \(6\sin(x)\), 2) \(x^2\), and 3) \(20x\). The results are presented in Figures \ref{fig:regression1}, \ref{fig:regression2}, \ref{fig:regression3}. As expected, the non-activated MPM fails to approximate the underlying functions. The MLP and MPM however, achieve a successful regression of the samples. 

\begin{figure*}
    \centering
    \begin{subfigure}{0.3\textwidth}
        \centering
        \includegraphics[width=\linewidth]{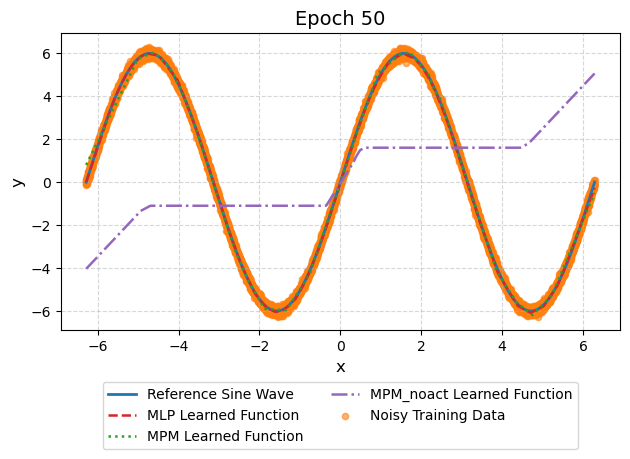}
        \caption{Regression of \(6\sin(x)\).}
        \label{fig:regression1}
    \end{subfigure}
    \begin{subfigure}{0.3\textwidth}
        \centering
        \includegraphics[width=\linewidth]{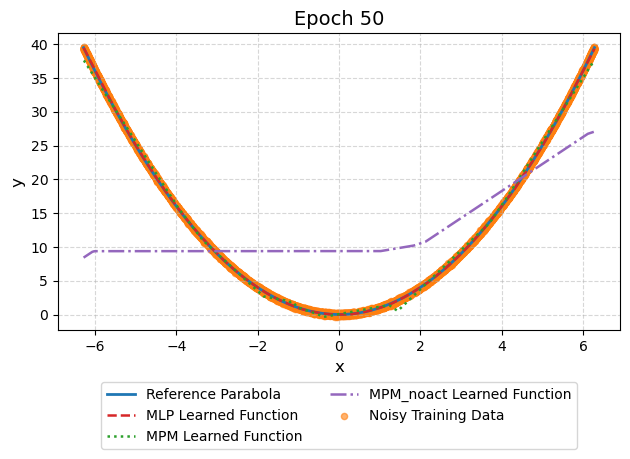}
        \caption{Regression of \(x^2\).}
        \label{fig:regression2}
    \end{subfigure}
    \begin{subfigure}{0.3\textwidth}
        \centering
        \includegraphics[width=\linewidth]{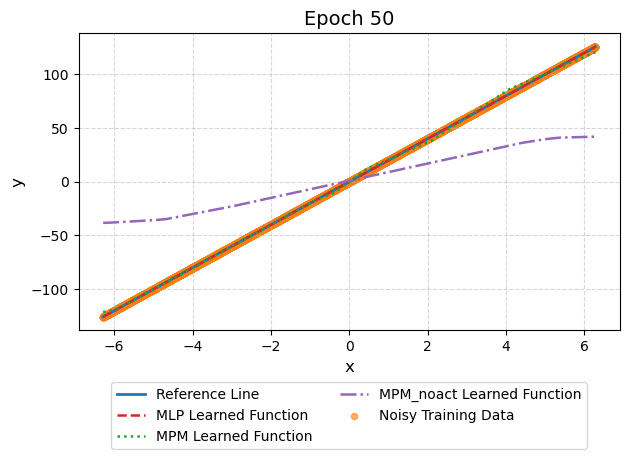}
        \caption{Regression of \(20x\).}
        \label{fig:regression3}
    \end{subfigure}
    \caption{Regression using MLP, MPM, and non-activated MPM on noisy samples for different underlying functions. }
    \label{fig:regression}
\end{figure*}

\paragraph{Full training history.}

Finally, we present the full training history of our trainable networks. These results serve as proof that the networks not only achieve satisfactory peak training accuracy, but also manage to converge. In Table \ref{table:fully-connected-full-mnist} we provide the full training history of the networks Act-DEP(\(\lambda = 1/2\)), MPM, RMPM, and MPM-SVD on MNIST, including training accuracy and validation accuracy. In Table \ref{table:fully-connected-full-fmnist} we provide the full training history of the networks Act-DEP(\(\lambda = 1/2\)), MPM, RMPM, and MPM-SVD on Fashion-MNIST, including training accuracy and validation accuracy.

\begin{table*}
  \caption{Full training history, MNIST.}
  \label{table:fully-connected-full-mnist}
  \centering
  \begin{tabular}{lcccccccc}
    \toprule
    \multicolumn{1}{c}{\textbf{Epoch}} & \multicolumn{4}{c}{\textbf{Train}} & \multicolumn{4}{c}{\textbf{Validation}} \\
    \cmidrule(lr){2-5} \cmidrule(lr){6-9}
    & ActDEP(\(\lambda=1/2\)) & MPM & RMPM & MPM-SVD & ActDEP(\(\lambda=1/2\)) & MPM & RMPM & MPM-SVD \\
    \midrule
    1 & 56.03 & 85.83 & 88.57 & 90.04 & 56.33 & 85.85 & 87.81 & 89.68 \\ 
    2 & 84.13 & 91.19 & 92.71 & 94.06 & 84.05 & 90.55 & 91.75 & 92.99 \\ 
    3 & 88.25 & 93.21 & 94.47 & 95.95 & 87.88 & 92.35 & 93.27 & 94.38 \\ 
    4 & 90.18 & 94.34 & 95.60 & 97.03 & 89.85 & 93.22 & 94.08 & 94.90 \\ 
    5 & 91.72 & 95.01 & 96.40 & 97.73 & 91.18 & 93.62 & 94.62 & 95.16 \\ 
    6 & 92.69 & 95.65 & 96.93 & 98.25 & 91.71 & 94.01 & 94.87 & 95.38 \\ 
    7 & 93.53 & 96.11 & 97.33 & 98.61 & 92.30 & 94.12 & 94.89 & 95.43 \\ 
    8 & 94.18 & 96.58 & 97.83 & 98.98 & 92.73 & 94.36 & 95.15 & 95.71 \\ 
    9 & 94.58 & 96.88 & 98.16 & 98.87 & 93.01 & 94.61 & 95.32 & 95.43 \\ 
    10 & 95.01 & 97.16 & 98.42 & 99.42 & 93.38 & 94.78 & 95.40 & 95.92 \\ 
    11 & 95.38 & 97.44 & 98.74 & 99.47 & 93.50 & 94.84 & 95.41 & 95.78 \\ 
    12 & 95.72 & 97.76 & 98.82 & 99.55 & 93.67 & 94.80 & 95.33 & 95.83 \\ 
    13 & 96.00 & 97.81 & 99.15 & 99.67 & 93.72 & 94.93 & 95.42 & 95.80 \\ 
    14 & 96.26 & 98.10 & 99.25 & 99.67 & 93.97 & 95.09 & 95.48 & 95.77 \\ 
    15 & 96.45 & 98.21 & 99.41 & 99.66 & 94.04 & 94.81 & 95.64 & 95.83 \\ 
    16 & 96.72 & 98.36 & 99.50 & 99.71 & 94.19 & 94.72 & 95.47 & 95.74 \\ 
    17 & 96.78 & 98.44 & 99.55 & 99.84 & 94.27 & 94.99 & 95.55 & 95.83 \\ 
    18 & 97.02 & 98.65 & 99.62 & 99.79 & 94.42 & 94.89 & 95.42 & 95.72 \\ 
    19 & 97.16 & 98.67 & 99.67 & 99.79 & 94.47 & 94.97 & 95.33 & 95.72 \\ 
    20 & 97.33 & 98.83 & 99.73 & 99.84 & 94.52 & 95.03 & 95.38 & 95.78 \\ 
    21 & 97.46 & 98.90 & 99.77 & 99.85 & 94.48 & 95.00 & 95.30 & 95.72 \\ 
    22 & 97.61 & 99.01 & 99.82 & 99.94 & 94.59 & 95.04 & 95.42 & 95.99 \\ 
    23 & 97.73 & 99.02 & 99.85 & 99.74 & 94.56 & 95.09 & 95.37 & 95.92 \\ 
    24 & 97.88 & 99.13 & 99.84 & 99.65 & 94.59 & 95.08 & 95.45 & 95.52 \\ 
    25 & 98.00 & 99.18 & 99.83 & 99.90 & 94.69 & 95.10 & 95.18 & 95.87 \\ 
    26 & 98.08 & 99.14 & 99.90 & 99.86 & 94.67 & 94.93 & 95.40 & 95.74 \\ 
    27 & 98.21 & 99.32 & 99.92 & 99.94 & 94.58 & 95.04 & 95.49 & 96.00 \\ 
    28 & 98.28 & 99.39 & 99.92 & 99.95 & 94.74 & 94.92 & 95.33 & 96.15 \\ 
    29 & 98.34 & 99.36 & 99.92 & 99.74 & 94.72 & 94.94 & 95.38 & 95.86 \\ 
    30 & 98.45 & 99.45 & 99.89 & 99.56 & 94.71 & 94.84 & 95.33 & 95.62 \\ 
    31 & 98.52 & 99.54 & 99.93 & 99.72 & 94.82 & 94.91 & 95.38 & 95.77 \\ 
    32 & 98.64 & 99.50 & 99.96 & 99.96 & 94.78 & 94.69 & 95.59 & 96.04 \\ 
    33 & 98.72 & 99.54 & 99.93 & 99.82 & 94.76 & 94.89 & 95.34 & 95.81 \\ 
    34 & 98.73 & 99.62 & 99.95 & 99.95 & 94.67 & 94.88 & 95.34 & 96.08 \\ 
    35 & 98.86 & 99.60 & 99.91 & 99.94 & 94.75 & 94.69 & 95.41 & 96.12 \\ 
    36 & 98.90 & 99.67 & 99.94 & 99.99 & 94.78 & 94.72 & 95.44 & 96.06 \\ 
    37 & 98.99 & 99.66 & 99.96 & 99.99 & 94.73 & 94.80 & 95.40 & 96.20 \\ 
    38 & 99.04 & 99.73 & 99.98 & 99.67 & 94.70 & 94.72 & 95.52 & 95.64 \\ 
    39 & 99.06 & 99.74 & 99.98 & 99.97 & 94.73 & 94.73 & 95.51 & 96.05 \\ 
    40 & 99.06 & 99.74 & 99.97 & 99.99 & 94.68 & 94.59 & 95.39 & 96.15 \\ 
    41 & 99.13 & 99.78 & 99.85 & 99.99 & 94.67 & 94.61 & 95.35 & 96.17 \\ 
    42 & 99.19 & 99.73 & 99.92 & 99.99 & 94.77 & 94.58 & 95.35 & 96.22 \\ 
    43 & 99.25 & 99.79 & 99.97 & 99.99 & 94.75 & 94.44 & 95.53 & 96.16 \\ 
    44 & 99.26 & 99.76 & 99.98 & 99.12 & 94.81 & 94.81 & 95.76 & 95.34 \\ 
    45 & 99.30 & 99.78 & 99.98 & 99.83 & 94.72 & 94.62 & 95.64 & 96.05 \\ 
    46 & 99.35 & 99.77 & 99.98 & 99.88 & 94.83 & 94.63 & 95.52 & 96.01 \\ 
    47 & 99.33 & 99.81 & 99.98 & 99.99 & 94.68 & 94.58 & 95.52 & 96.33 \\ 
    48 & 99.35 & 99.80 & 99.75 & 99.99 & 94.72 & 94.77 & 95.24 & 96.41 \\ 
    49 & 99.47 & 99.86 & 99.96 & 99.99 & 94.75 & 94.67 & 95.38 & 96.37 \\ 
    50 & 99.47 & 99.81 & 99.99 & 99.99 & 94.68 & 94.75 & 95.51 & 96.38 \\ 

    \bottomrule
  \end{tabular}
\end{table*}

\begin{table*}
  \caption{Full training history, Fashion-MNIST.}
  \label{table:fully-connected-full-fmnist}
  \centering
  \begin{tabular}{lcccccccc}
    \toprule
    \multicolumn{1}{c}{\textbf{Epoch}} & \multicolumn{4}{c}{\textbf{Train}} & \multicolumn{4}{c}{\textbf{Validation}} \\
    \cmidrule(lr){2-5} \cmidrule(lr){6-9}
    & ActDEP(\(\lambda=1/2\)) & MPM & RMPM & MPM-SVD & ActDEP(\(\lambda=1/2\)) & MPM & RMPM & MPM-SVD \\
    \midrule
    1 & 27.90 & 75.07 & 80.46 & 83.01 & 27.97 & 74.61 & 79.55 & 81.33 \\ 
    2 & 71.51 & 82.53 & 85.12 & 87.56 & 70.70 & 80.81 & 82.62 & 84.49 \\ 
    3 & 78.22 & 84.83 & 87.20 & 89.27 & 77.37 & 82.25 & 83.78 & 84.58 \\ 
    4 & 81.36 & 86.30 & 89.29 & 91.91 & 79.85 & 82.89 & 84.55 & 85.55 \\ 
    5 & 82.71 & 87.35 & 90.57 & 93.06 & 81.01 & 83.12 & 84.53 & 85.76 \\ 
    6 & 83.59 & 88.46 & 91.88 & 93.62 & 81.38 & 83.43 & 84.97 & 85.18 \\ 
    7 & 84.54 & 89.22 & 92.74 & 95.35 & 82.20 & 83.11 & 84.72 & 85.66 \\ 
    8 & 85.31 & 90.11 & 93.60 & 95.86 & 82.45 & 83.53 & 84.56 & 85.82 \\ 
    9 & 85.99 & 90.73 & 94.42 & 96.66 & 82.72 & 83.58 & 84.72 & 85.73 \\ 
    10 & 86.43 & 91.53 & 94.92 & 96.89 & 82.98 & 83.50 & 84.61 & 85.16 \\ 
    11 & 86.89 & 91.93 & 95.59 & 97.56 & 82.97 & 83.65 & 84.71 & 85.51 \\ 
    12 & 87.49 & 92.54 & 95.72 & 97.80 & 83.17 & 83.39 & 84.28 & 85.17 \\ 
    13 & 87.88 & 92.63 & 96.39 & 97.66 & 83.32 & 83.27 & 84.24 & 85.32 \\ 
    14 & 88.37 & 93.21 & 96.79 & 97.19 & 83.51 & 83.40 & 84.27 & 84.70 \\ 
    15 & 88.62 & 93.79 & 97.06 & 98.55 & 83.61 & 83.40 & 84.32 & 85.28 \\ 
    16 & 89.09 & 93.75 & 97.34 & 98.24 & 83.42 & 83.41 & 84.29 & 84.88 \\ 
    17 & 89.40 & 94.40 & 97.41 & 98.99 & 83.46 & 83.43 & 83.91 & 85.22 \\ 
    18 & 89.64 & 94.64 & 97.68 & 98.60 & 83.51 & 83.19 & 83.97 & 85.30 \\ 
    19 & 90.02 & 94.88 & 98.09 & 98.77 & 83.74 & 83.20 & 83.95 & 84.52 \\ 
    20 & 90.12 & 94.98 & 98.03 & 98.79 & 83.52 & 83.58 & 83.92 & 85.06 \\ 
    21 & 90.50 & 95.31 & 98.43 & 99.16 & 83.66 & 83.33 & 83.86 & 84.70 \\ 
    22 & 90.75 & 95.71 & 98.45 & 98.66 & 83.74 & 83.42 & 83.51 & 84.69 \\ 
    23 & 90.95 & 95.77 & 98.68 & 99.13 & 83.53 & 83.28 & 83.69 & 84.78 \\ 
    24 & 91.31 & 95.90 & 98.80 & 99.31 & 83.42 & 83.28 & 83.47 & 84.81 \\ 
    25 & 91.49 & 96.13 & 98.67 & 99.40 & 83.57 & 83.11 & 83.49 & 85.22 \\ 
    26 & 91.76 & 96.21 & 98.60 & 98.97 & 83.33 & 82.83 & 83.06 & 84.95 \\ 
    27 & 92.01 & 96.62 & 98.94 & 98.75 & 83.38 & 83.12 & 83.38 & 84.36 \\ 
    28 & 92.26 & 96.75 & 98.69 & 98.89 & 83.40 & 82.92 & 83.07 & 84.53 \\ 
    29 & 92.41 & 96.68 & 98.93 & 99.42 & 83.44 & 82.66 & 83.38 & 84.89 \\ 
    30 & 92.61 & 96.69 & 99.12 & 98.96 & 83.06 & 82.53 & 83.08 & 84.41 \\ 
    31 & 92.81 & 97.06 & 99.15 & 99.43 & 83.08 & 82.49 & 83.03 & 84.97 \\ 
    32 & 93.06 & 97.07 & 99.25 & 99.18 & 83.40 & 82.42 & 82.99 & 84.27 \\ 
    33 & 92.97 & 97.29 & 99.24 & 99.45 & 83.12 & 82.64 & 83.14 & 85.03 \\ 
    34 & 93.25 & 97.43 & 99.36 & 99.59 & 83.25 & 82.59 & 83.21 & 84.64 \\ 
    35 & 93.43 & 97.25 & 99.20 & 99.16 & 83.02 & 82.34 & 82.80 & 84.66 \\ 
    36 & 93.47 & 97.44 & 99.38 & 99.00 & 83.03 & 82.11 & 82.63 & 84.30 \\ 
    37 & 93.65 & 97.65 & 99.29 & 99.64 & 83.22 & 82.35 & 82.99 & 84.74 \\ 
    38 & 93.97 & 97.80 & 99.51 & 99.56 & 83.21 & 82.12 & 82.64 & 84.86 \\ 
    39 & 93.78 & 97.67 & 99.41 & 98.62 & 83.15 & 82.42 & 83.03 & 84.89 \\ 
    40 & 94.23 & 97.78 & 99.24 & 98.30 & 83.22 & 82.21 & 82.59 & 83.72 \\ 
    41 & 94.25 & 97.89 & 99.26 & 99.40 & 83.12 & 82.08 & 82.55 & 84.83 \\ 
    42 & 94.39 & 97.98 & 99.50 & 99.03 & 83.10 & 82.29 & 82.85 & 85.25 \\ 
    43 & 94.55 & 98.08 & 99.56 & 99.42 & 82.93 & 82.50 & 82.63 & 84.66 \\ 
    44 & 94.65 & 98.13 & 99.56 & 99.59 & 83.20 & 82.17 & 82.50 & 85.00 \\ 
    45 & 94.77 & 97.97 & 99.49 & 99.62 & 83.12 & 81.87 & 82.90 & 84.89 \\ 
    46 & 94.86 & 98.29 & 99.51 & 99.51 & 83.11 & 82.17 & 82.88 & 84.63 \\ 
    47 & 94.88 & 98.31 & 99.46 & 99.45 & 82.96 & 81.89 & 82.47 & 84.60 \\ 
    48 & 94.98 & 98.36 & 99.51 & 99.47 & 82.86 & 82.20 & 82.70 & 84.95 \\ 
    49 & 95.08 & 98.41 & 99.53 & 99.65 & 82.92 & 81.99 & 82.85 & 85.00 \\ 
    50 & 95.06 & 98.30 & 99.62 & 99.37 & 82.74 & 81.92 & 82.83 & 84.86 \\ 

    \bottomrule
  \end{tabular}
\end{table*}

\end{document}